\newtheorem{theorem}{Theorem}[section]
\theoremstyle{definition}
\newtheorem{definition}{Definition}[section]
\newcommand{\RNum}[1]{\lowercase\expandafter{\romannumeral #1\relax}}
\newcommand{\RNumU}[1]{\uppercase\expandafter{\romannumeral #1\relax}}
\journal{Elsevier}
\begin{document}
\begin{frontmatter}
\title{Randomized based restricted kernel machine for hyperspectral image classification}
\author[inst1]{A. Quadir}
\ead{mscphd2207141002@iiti.ac.in}
\author[inst1]{M. Tanveer\corref{Correspondingauthor}}
\ead{mtanveer@iiti.ac.in}

\affiliation[inst1]{organization={Department of Mathematics, Indian Institute of Technology Indore},
            addressline={Simrol}, 
            city={Indore},
            postcode={453552}, 
            state={Madhya Pradesh},
            country={India}}
            \cortext[Correspondingauthor]{Corresponding author}
\begin{abstract}
In recent years, the random vector functional link (RVFL) network has gained significant popularity in hyperspectral image (HSI) classification due to its simplicity, speed, and strong generalization performance. However, despite these advantages, RVFL models face several limitations, particularly in handling non-linear relationships and complex data structures. The random initialization of input-to-hidden weights can lead to instability, and the model struggles with determining the optimal number of hidden nodes, affecting its performance on more challenging datasets. To address these issues, we propose a novel randomized based restricted kernel machine ($R^2KM$) model that combines the strehyperngths of RVFL and restricted kernel machines (RKM). $R^2KM$ introduces a layered structure that represents kernel methods using both visible and hidden variables, analogous to the energy function in restricted Boltzmann machines (RBM). This structure enables $R^2KM$ to capture complex data interactions and non-linear relationships more effectively, improving both interpretability and model robustness. A key contribution of $R^2KM$ is the introduction of a novel conjugate feature duality based on the Fenchel-Young inequality, which expresses the problem in terms of conjugate dual variables and provides an upper bound on the objective function. This duality enhances the model's flexibility and scalability, offering a more efficient and flexible solution for complex data analysis tasks. Extensive experiments on hyperspectral image datasets and real-world data from the UCI and KEEL repositories show that $R^2KM$ outperforms baseline models, demonstrating its effectiveness in classification and regression tasks.
\end{abstract}

\begin{keyword}
Restricted kernel machine, Random vector functional link neural network, kernel methods, Hyperspectral images.
\end{keyword}
\end{frontmatter}
\section{Introduction}
Hyperspectral images (HSIs) have garnered increased attention recently due to their rich spectral bands and wealth of spatial information \cite{cao2024diffusion}. HSIs contain a significantly larger volume of information compared to other remote sensing images, represented in the form of cubic data \cite{he2017recent}. Given the high value of processing and analyzing information from HSIs, researchers have developed a range of technical approaches \cite{he2021tslrln, luo2021dimensionality}, including hyperspectral classification. HSI has been extensively applied across various fields, such as medical imaging technology \cite{lv2021discriminant}, and mineral resource exploration \cite{wang2012spectral}, all of which depend on HSI classification. 

Over the past few decades, numerous traditional machine learning methods have been developed by researchers for HSI classification, including the K-nearest neighbor algorithm \cite{su2020random}, support vector machines (SVM) \cite{sheykhmousa2020support}, and random forests (RF) \cite{cong2021rrnet}. However, these conventional models fail to fully leverage spectral-spatial features to establish connections between pixels in the spatial dimension \cite{ahmad2021hyperspectral}. The development and successful application of various feature extraction algorithms have significantly enhanced the accuracy of HSI classification \cite{hao2020multiscale}. Some of the most notable handcrafted feature extraction techniques include mathematical morphological features such as extended morphological profiles (EMP) \cite{benediktsson2005classification}, morphological profiles (MP) \cite{fauvel2008spectral}, and extended multiattribute profiles (EMAP) \cite{dalla2010classification}. Moreover, feature extraction methods based on subspace learning \cite{he2021tslrln}, such as sparse and low-rank representations \cite{sun2020weighted}, have greatly enhanced the effectiveness of classification techniques. Furthermore, multi-view learning \cite{quadir2024multiview, quadir2024enhancing, houthuys2018multi}, and discriminant analysis \cite{ye2019nonpeaked} techniques can significantly improve feature representation. For a detailed overview of machine learning models for HSI classification, we refer interested readers to \cite{patel2023comprehensive}.

Artificial neural networks (ANNs) are computational models designed to mimic the architecture and operation of the neural networks found in the human brain. In ANNs, neurons are organized into layers that collaboratively process, analyze and communicate information, enabling the network to arrive at decisions. ANNs have achieved success in various fields, such as diagnosing Alzheimer’s disease \cite{tanveer2024ensemble, tanveer2024fuzzy, goel2025alzheimer}, stock market prediction \cite{lu2024trnn}, predicting DNA binding proteins \cite{quadir2024multiviewDAN}, solving differential equations \cite{berman2024randomized}, and more. Despite their benefits, ANN models encounter several challenges, including sensitivity to learning rates, slow convergence, and issues with local minima \cite{sajid2024intuitionistic}. To tackle these challenges, randomized neural networks (RNNs) \cite{suganthan2018non}, including random vector functional link (RVFL) network \cite{pao1994learning}, have been introduced. The RVFL model is characterized by a single hidden layer, where the weights connecting the input to the hidden layer are randomly generated and remain fixed during training. It also incorporates direct connections from the input layer to the output layer, which function as a built-in regularization mechanism, thereby enhancing the generalization capability of the RVFL \cite{zhang2016comprehensive}. The training process concentrates exclusively on adjusting the weights of the output layer, a task that can be efficiently performed using closed-form or iterative methods. Multiple improved variants of the traditional RVFL model have been developed to boost its generalization capabilities, thereby enhancing its robustness and effectiveness for real-world applications \cite{malik2023random}.

The RVFL model transforms the original features into random representations, which can introduce instability during the learning process. To tackle this challenge, modifications were made to the RVFL framework by integrating a sparse autoencoder with \( \ell_1 \)-norm regularization, leading to the development of the SP-RVFL model \cite{zhang2019unsupervised}. The SP-RVFL model mitigates the instability associated with randomization and demonstrates improved learning of network parameters when compared to the conventional RVFL approach. \citet{zhang2020new} presented two models: KRVFL+, which is a kernel-enhanced variant of RVFL+, and RVFL+, which combines the RVFL framework with learning utilizing privileged information (LUPI). During the training phase, RVFL+ leverages both the training data and additional privileged data. Along with incorporating privileged information, KRVFL+ effectively handles nonlinear interactions between higher-dimensional input and output vectors. RVFL+ networks face a similar issue as RVFL networks: determining the optimal number of hidden nodes remains a challenge. The effectiveness of the network's learning process is greatly influenced by the number of hidden nodes \cite{li2017bayesian}. A constructive algorithm known as incremental RVFL+ (IRVFL+) was developed by \cite{dai2022incremental}. By progressively adding hidden nodes, the IRVFL+ network enhances its approximation of the output. In \cite{chakravorti2020non}, the kernel-based exponentially expanded RVFL (KERVFL) is introduced, incorporating a kernel function into the RVFL model to eliminate the need for determining the optimal number of hidden nodes. \citet{suykens2017deep} introduced the restricted kernel machine (RKM) for classification and regression, with the goal of combining kernel methods with neural network techniques. This advancement broadens the application of kernel techniques, allowing them to address more complex real-world problems effectively. RKM employs Legendre-Fenchel duality \cite{rockafellar1974conjugate} to offer a representation of LSSVM \cite{suykens1999least} that is analogous to the energy function of a restricted Boltzmann machine (RBM) \cite{hinton2006fast}. Although the RKM has found successful applications in areas such as generative model \cite{pandey2022disentangled} and classification \cite{houthuys2021tensor, tao2024tensor, quadir2025trkm, quadir2025one}, its use in disentangled representations \cite{tonin2021unsupervised} and data exploration has not been thoroughly investigated in prior studies. RKM uses the kernel trick to map data into a high-dimensional feature space, enabling it to create a non-linear separating hyperplane that can effectively manage non-linear relationships. 

In this paper, we propose randomized based restricted kernel machine ($R^2KM$), a novel model designed to enhance generalization performance by combining the strengths of RVFL network and RKM. Building on the strengths of RVFL networks over conventional ANNs and machine learning models, our proposed $R^2KM$ integrates the computational efficiency of RVFL with the powerful feature transformation capabilities of RKM. This innovative fusion addresses the limitations of existing models by providing a comprehensive solution that improves both model performance and applicability across diverse and complex data scenarios. The $R^2KM$ effectively leverages the randomized feature transformation of RVFL while utilizing the robust kernel mechanisms of RKM, providing a more efficient and scalable solution for tackling complex data structures. The key highlights of this paper can be encapsulated as follows:
\begin{enumerate}
    \item We introduce randomized based restricted kernel machine ($R^2KM$), a novel model aimed at bridging the gap between traditional kernel methods and neural network-based approaches. By integrating the computational efficiency of RVFL networks with the robust feature mapping capabilities of RKM.
    \item The $R^2KM$ model is capable of representing kernel methods by employing both visible and hidden variables, akin to the functionality of the energy function found in restricted boltzmann machines (RBM). By incorporating this layered representation, $R^2KM$ not only enhances the interpretability of kernel-based models but also strengthens their capacity to handle challenging data patterns, offering a more robust and flexible approach for a wide range of classification and regression tasks.
    \item We employ a conjugate feature duality based on the Fenchel-Young inequality. This framework allows us to reformulate the $R^2KM$ problem with conjugate dual variables for all samples in the latent space. By leveraging this duality, we derive an upper bound for the objective function, thereby presenting a new and efficient approach within the RKM framework.
    \item We performed extensive experiments using four hyperspectral image datasets to evaluate the performance of our proposed model. The results indicate that our approach excels in hyperspectral image classification tasks, successfully managing the intricate spectral and spatial information present in these datasets. The results emphasize the robustness and versatility of the proposed model in various hyperspectral imaging contexts, further demonstrating its potential for practical real-world use.
    \item We carried out experiments using $38$ real-world datasets sourced from the UCI repository \cite{dua2017uci} and the KEEL repository \cite{derrac2015keel}. The outcomes of these numerical experiments, bolstered by statistical analysis, show that the proposed $R^2KM$ model reliably exceeds the performance of the baseline models.
\end{enumerate}
The rest of this paper is structured as follows: Section \ref{Related Works} reviews the relevant literature. Section \ref{$R^2KM$2} explains the mathematical framework for the proposed $R^2KM$ model. We discuss the generalization error bound of the proposed $R^2KM$ model in Section \ref{Generalization Error Bound Analysis}. Section \ref{Numerical Experiments and Results} presents the experimental results along with a statistical analysis of the proposed $R^2KM$ model. Finally, Section \ref{Conclusion} concludes the paper with future directions.
\section{Related Works}
\label{Related Works}
In this section, we discuss the mathematical formulation of RVFL and RKM model.
\subsection{Random Vector Functional Link (RVFL) Network}
The RVFL \cite{pao1994learning} model is structured as a feed-forward neural network comprising three distinct layers: an input layer, a hidden layer, and an output layer. The weights and biases connecting the input layer to the hidden layer are randomly initialized within a specified range using a uniform distribution and are kept constant during the training phase. In contrast, the weights connecting the hidden layer to the output layer—known as output weights—are determined analytically through a closed-form solution. The architecture of the RVFL model is illustrated in Fig. \ref{Geometrical structure of RVFL model}.
\begin{figure}[ht!]
    \centering
    \includegraphics[width=0.50\textwidth,height=5cm]{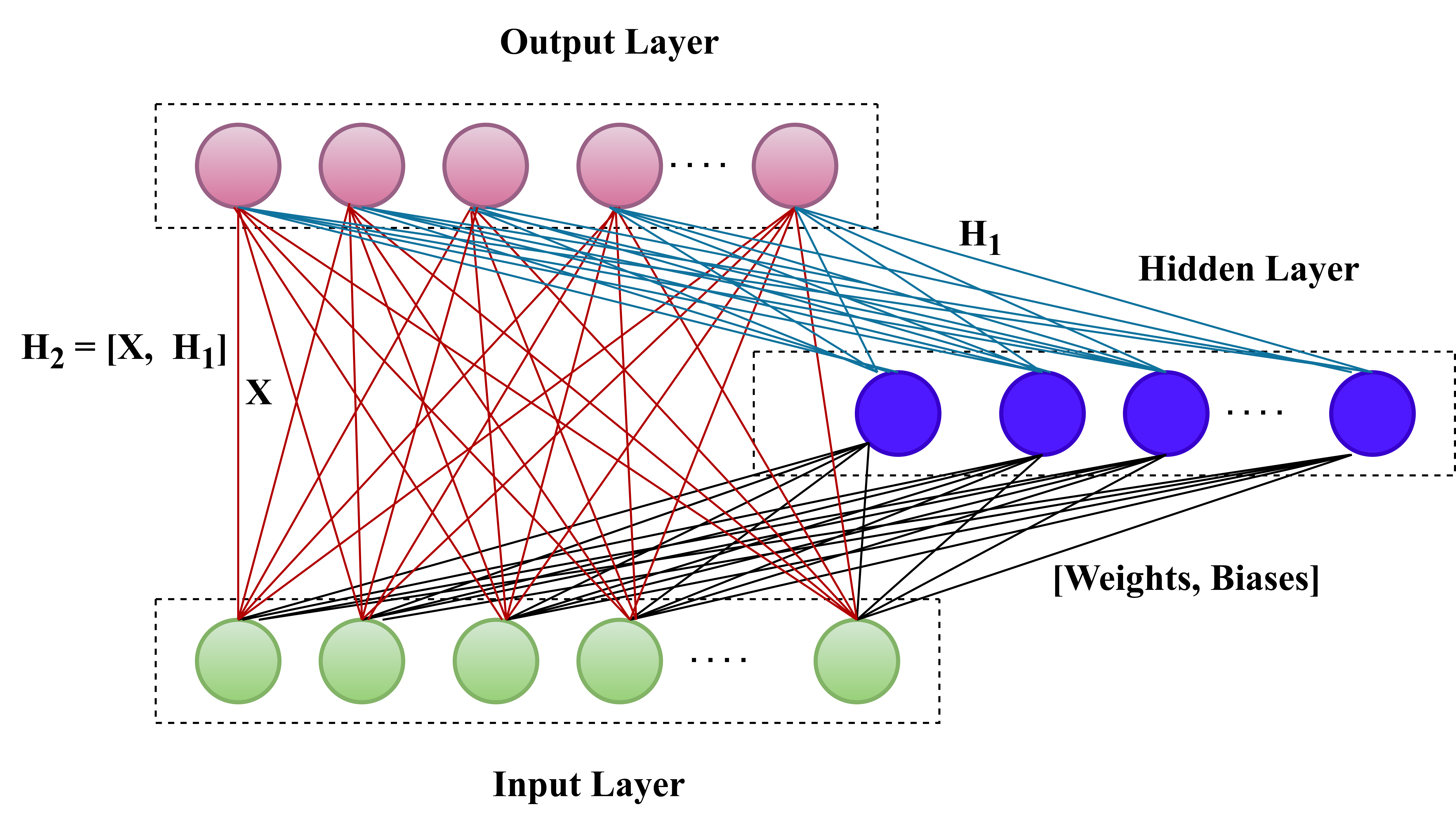}
    \caption{Geometrical structure of RVFL model}
    \label{Geometrical structure of RVFL model}
\end{figure}

Let the training dataset be $U = \{(x_i, y_i) \mid i=1, 2, \ldots, n\}$, where $x_i \in \mathbb{R}^{1 \times m}$, and $y_i \in \{+1,-1\}$ denotes the label. Define $Y = (y_1^t, y_2^t, \ldots, y_n^t)^t \in \mathbb{R}^{n \times 2}$, and $X = (x_1^t, x_2^t, \ldots, x_n^t)^t \in \mathbb{R}^{n \times m}$ as the matrices containing all target and input vector, respectively. Following the projection of the input matrix using randomly initialized weights and biases, activation function \(\phi\) is applied to produce the matrix \(H_1\) (hidden layer), expressed as follows:
\begin{align}
    H_1  = \phi(XW_1 + b_1) \in \mathbb{R}^{n \times h_l},
\end{align}
where \( W_1 \in \mathbb{R}^{m \times h_l} \) denotes the weight matrix, which is chosen randomly with values taken from a uniform distribution over \([-1, 1]\), and \( b_1 \in \mathbb{R}^{n \times h_l} \) represents the bias matrix. Therefore, \( H_1 \) is expressed as:
\begin{align}
   H_1= \begin{bmatrix}
      \phi(x_1w_1+b^{(1)}) & \ldots & \phi(x_1w_{h_l}+b^{(h_l)}) \\
     \vdots & \vdots & \vdots \\
      \phi(x_nw_n+b^{(1)}) & \ldots & \phi(x_nw_{h_l}+b^{(h_l)})
    \end{bmatrix},
\end{align}
where \( x_i \in \mathbb{R}^{1 \times m} \) represents the \( i \)-th sample in the matrix \( X \), \( w_k \in \mathbb{R}^{n \times 1} \) denotes the \( k \)-th column vector of the weight matrix \( W_1 \), and the bias term for the \( j \)-th hidden node is denoted as \( b^{(j)} \). The following matrix equation is used to compute the weights of the output layer:
\begin{align}
\label{eq:3}
    \begin{bmatrix}
            X & H_1
        \end{bmatrix}W_2 = \hat{Y}.
\end{align}
Here, \( W_2 \in \mathbb{R}^{(m + h_l) \times 2} \) represents the weight matrix that links the concatenated hidden and the input nodes to the output nodes, and \( \hat{Y} \) denotes the predicted output. The objective function derived from Eq. (\ref{eq:3}) is formulated as follows:
\begin{align}
\label{eq:4}
    (W_2)_{min} = \underset{W_2}{\arg\min} \frac{\mathcal{C}}{2}\|H_2W_2 - Y\|^2 + \frac{1}{2}\|W_2\|^2,
\end{align}
where $H_2 = \begin{bmatrix}
            X & H_1
        \end{bmatrix}$. The solution of Eq. (\ref{eq:4}) is obtained as follows:
\begin{align}
    (W_2)_{min}=\left\{\begin{array}{ll}\left({H_2}^t {H_2}+\frac{1}{\mathcal{C}} I\right)^{-1} {H_2}^{t} {Y}, & (m+h_l) \leq n,\vspace{3mm} \\ 
{H_2}^t\left({H_2} {H_2}^t+\frac{1}{\mathcal{C}} I\right)^{-1} {Y}, & n<(m+h_l), \end{array}\right.
\end{align}
where \(I\) denotes the identity matrix of appropriate dimensions, and \(\mathcal{C} > 0\) is a tuning parameter.

\subsection{Restricted Kernel Machine (RKM)}
This subsection provides an overview of the RKM model as detailed by \citet{suykens2017deep}, highlighting its close relationship with the well-established LSSVM \cite{suykens1999least} model. RKM utilizes the kernel trick to transform the data into a high-dimensional feature space, enabling the construction of a nonlinear separating hyperplane. The objective function for RKM is given as follows:
\begin{align}
\label{eq:8}
      J = & \frac{\gamma}{2} Tr(W^TW) + \sum_{i=1}^N (1-(\phi(x_i)^TW+b)y_i)h_i - \frac{\eta}{2} \sum_{i=1}^Nh_i^2,
\end{align}
where \(\gamma\) and \(\eta\) are the regularization parameters, \(b\) is the bias term, and \(h\) represents the hidden features. The solution to equation \eqref{eq:8} is obtained by taking the partial derivatives of \(J\) with respect to \(W\), \(b\), and \(h_i\), and then setting these derivatives to zero. For a detailed derivation, refer to \citet{suykens2017deep}.

\section{Proposed randomized based restricted kernel machine ($R^2KM$)}
\label{$R^2KM$2}
In this section, we first give the formulation of the proposed $R^2KM$, and then we discuss detailed mathematical formulation along with the solution of the proposed $R^2KM$ model. In $R^2KM$, the specific number of enhancement nodes and their activation functions do not need to be predefined if the kernel function is provided. This approach integrates both visible and hidden variables, making it distinct from traditional kernel methods. By using kernel functions, $R^2KM$ maps input data into a higher-dimensional feature space without explicitly specifying the hidden layer structure. This approach is similar to the energy function used in restricted Boltzmann machines (RBM) \cite{hinton2006fast}, creating a link between kernel methods and RBM. $R^2KM$ integrates the flexibility of kernel functions with shallow neural networks, providing a robust and versatile tool for identifying complex data patterns and connecting kernel methods with shallow neural networks. 

Assume that the function \( \psi: x_i \rightarrow \psi(x_i) \) transforms the training samples from the input space into a high-dimensional feature space during both the training and prediction phases. The objective function of the proposed $R^2KM$ model is defined as follows:

\begin{align}
\label{eq:1}
    \underset{\beta}{\min} f(\beta)& = \underset{\beta}{\min} \frac{\eta}{2}\|\beta\|^2 + \frac{1}{2\lambda}\sum_{i=1}^n \zeta_i^T\zeta_i \nonumber \\
    & ~s.t.~~ y_i - \beta^Tz(x_i) = \zeta_i, ~\forall i \in \{1,2,\hdots,n\},
\end{align}
where \( z(x_i) \) denotes a feature vector that comprises both the initial input feature vector and the output feature vector generated by the enhancement nodes. \( \beta \) is the weight matrix that links the hidden layer and the input layer to the output layer. $\zeta_i$ represents the error of the class samples and $\eta$ and $\lambda$ $(>0)$ are the tunable parameters, respectively.

The $R^2KM$ formulation provides an upper bound for the objective function \eqref{eq:1} and incorporates the concept of conjugate feature duality by applying the Fenchel–Young inequality \cite{rockafellar1974conjugate}, which is defined as
\begin{align}
\label{eq:10}
    \frac{1}{2\lambda}\zeta^T\zeta \geq \zeta^Th - \frac{\lambda}{2}h^Th, \hspace{0.2cm} \forall \hspace{0.1cm} \zeta, h.
\end{align}
The resulting  $R^2KM$ objective function is then given by:
\begin{align}
\label{eq:11a}
    f(\beta) \geq ~& \sum_{i=1}^n \zeta_i^T h_i - \frac{\lambda}{2} h_i^Th_i + \frac{\eta}{2} Tr(\beta^T\beta) \nonumber \\ 
   & ~s.t.~~ y_i - \beta^Tz(x_i) = \zeta_i, ~\forall i \in \{1,2,\hdots,n\}. 
\end{align}
Incorporate the constraints to derive the following tight upper bound for \( f(\beta) \):
\begin{align}
\label{eq:11}
    f(\beta) \geq ~& \sum_{i=1}^n (y_i - \beta^Tz(x_i))^T h_i - \frac{\lambda}{2} h_i^Th_i + \frac{\eta}{2} Tr(\beta^T\beta) = \hat{f}(\beta). 
\end{align}
The optimization process involves identifying the stationary points of \( f(\beta) \):
\begin{align}
    & \frac{\partial \hat{f}(\beta)}{\partial \beta} = 0 \implies y_i = \beta^Tz(x_i) + \lambda h_i, ~\forall i \in \{1,2,\hdots,n\}, \label{eq:13} \\
    & \frac{\partial \hat{f}(\beta)}{\partial h_1} = 0 \implies \beta = \frac{1}{\eta}\sum_{i=1}^{n}z(x_i)h_i^T, ~\forall i \in \{1,2,\hdots,n\}.  \label{eq:14} 
\end{align}
By eliminating the weight vector, we substitute \( \beta\) from Eq. \eqref{eq:14} into Eq. \eqref{eq:13}, we obtain:
\begin{align}
\label{eq:12}
    y_i = \frac{1}{\eta} \sum_{i=1}^{n} z(x_i)z(x_i)^T h_i  + \lambda h_i, ~\forall i \in \{1,2,\hdots,n\}.
\end{align}
The Eq. \eqref{eq:12} can be expressed in matrix form as:
\begin{align}
    Y = \frac{1}{\eta} KK^TH  + \lambda H,
\end{align}
where $H=[h_1, h_2, h_3, \ldots, h_n]^T$, $K=[X, \psi(X)]$ and $Y=[y_i, y_2,\ldots, y_n]^T$ denotes the target vector. By locating the stationary points of the objective function, we derive the following equation:
\begin{align}
\label{eqs:14}
    H = \left( \frac{1}{\eta} KK^T + \lambda I \right)^{-1} Y.
\end{align}
Then \eqref{eqs:14} can be rewritten as follows:
\begin{align}
    H = \left( \frac{1}{\eta} [X~\psi(X)] \begin{bmatrix} X^T \\ \psi(X)^T \end{bmatrix} + \lambda I \right)^{-1} Y, \nonumber \\
     H = \left( \frac{1}{\eta} (XX^T + \psi(X)\psi(X)^T) + \lambda I \right)^{-1} Y. \label{eqs:115}
\end{align}
Define kernel matrix as: $\Omega = XX^T$ be a linear kernel function and $\hat{\Omega} = \psi(X)\psi(X)^T = \mathscr{K}(X, X^T)$ be the Gaussian kernel function. Then, Eq. \eqref{eqs:115} simplifies to:
\begin{align}
\label{eqs:16}
    H = \left( \frac{1}{\eta} (\Omega + \hat{\Omega}) + \lambda I \right)^{-1} Y. 
\end{align}
After calculating the optimal values of \( H \), the new data point \( x \) can be classified as follows:
\begin{align}
\label{eqs:17}
    \hat{y} = sign\left( \frac{1}{\eta} \sum_j h_j[\mathscr{K}(x_j, x) + x_jx] \right).
\end{align}
For regression, the final output can be determined as follows:
\begin{align}
\label{eqs:18}
    \hat{y} =  \frac{1}{\eta} \sum_j h_j[\mathscr{K}(x_j, x) + x_jx].
\end{align}
Algorithm \ref{$R^2KM$ classifier} outlines the key steps involved in the proposed $R^2KM$ algorithm.

\begin{algorithm}
\caption{$R^2KM$ classifier}
\label{$R^2KM$ classifier}
\textbf{Require:} Consider the input matrix $X \in \mathbb{R}^{n \times m}$ and $Y \in \mathbb{R}^{n \times 1}$ be the target matrix. Here, \(m\) represents the number of features of the input sample and \(n\) represents the number of samples, respectively. 
\begin{algorithmic}[1]
\STATE Find the kernel matrix $\hat{\Omega} = \mathscr{K}(X, X^T)$, and $\Omega = XX^T$. \vspace{-0.4cm}
\STATE Calculate $H$ using Eqs \eqref{eqs:16}.
\STATE Finally, the output is predicted using the following Eq.: (\ref{eqs:17}) for classification and Eq. \eqref{eqs:18} for regression.
\end{algorithmic}
\end{algorithm}

\section{Generalization Error Bound Analysis}
\label{Generalization Error Bound Analysis}
Here, we theoretically examine the generalization capability of $R^2KM$ using Rademacher complexity. We start by defining Rademacher's complexity.
\begin{definition}
\label{Def1}
    The empirical Rademacher complexity of \( \mathscr{Q} \) for a function set \( \mathscr{Q} \) on \( S \) and a sample set \( S = \{x_1, \ldots, x_n\} \), which consists of \(n \) independent samples taken from the distribution \( \mathcal{D} \) is defined as follows:
    \begin{align}
    \hat{\mathcal{I}}_n({\mathscr{Q}}) & = \mathbb{E}_{\omega}\left[ \underset{q \in \mathscr{Q}}{\sup} | 2/n \sum_{i=1}^n \omega_i q(x_i) |: x_1, x_2,\ldots, x_n \right],
    \end{align}
    where independent Rademacher random variables with values in \(\{+1, -1\}\) are represented by \( \omega = (\omega_1, \ldots, \omega_n) \). Then, we define \(\mathscr{Q}\)'s Rademacher complexity as follows:
    \begin{align}
    I_n(\mathscr{Q}) & = \hat{\mathbb{E}}_S[\hat{I}_n(\mathscr{Q})] =\hat{\mathbb{E}}_{S\omega}\left[ \underset{q \in \mathscr{Q}}{\sup} | 2/n \sum_{i=1}^n \omega_i q(x_i) |\right].
\end{align}
\end{definition}
Based on the definition and lemmas of Rademacher complexity provided in \cite{bartlett2002rademacher}, we present the generalization error bound for $R^2KM$ in the following theorem.
\begin{theorem}
    Let \( \kappa \in (0, 1) \), \( N \in \mathbb{R}^+ \), and assume we have dataset \( U = \{(x_i, y_i)\}_{i=1}^n \), where each sample is drawn independently and identically according to a specific probability distribution \( \mathcal{D} \), with \( y_i \in \{-1, +1\} \) representing binary class labels. Define the class function \( \mathscr{Q}_N = \{q \mid q: x \rightarrow h^T\phi(x), \|h\| \le N\} \) and \( \hat{\mathscr{Q}}_N = \{\hat{q} \mid \hat{q}: x \rightarrow h^T\phi(x), \|h\| \le N\} \). Then, with a confidence level of at least \( 1 - \epsilon \) for the dataset \( U \), every function \( q_m(x) \in \mathscr{Q}_N \) adheres to the following condition:
    \begin{align}
\label{Th1}
    \mathbb{E}_{\mathcal{D}}[q_m(x)] \le \frac{1}{n} \sum_{i=1}^n \xi_i + 3 \sqrt{\frac{\ln(2/\epsilon)}{2n}}
    + \frac{4N}{n}\sqrt{\sum_{i=1}^n \psi(x_i, x_i)}.
\end{align}
\end{theorem}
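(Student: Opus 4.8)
The plan is to follow the classical Rademacher-complexity route to generalization bounds in the style of \cite{bartlett2002rademacher}. The right-hand side of \eqref{Th1} splits naturally into three pieces — an empirical term $\frac{1}{n}\sum_i \xi_i$, a concentration term $3\sqrt{\ln(2/\epsilon)/(2n)}$, and a complexity term $\frac{4N}{n}\sqrt{\sum_i \psi(x_i,x_i)}$ — and I would derive each separately. The first two arise from a generic, model-independent inequality; only the last one exploits the specific structure of the $R^2KM$ hypothesis class $\mathscr{Q}_N$, so that is where the genuine work lies.

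First I would invoke the standard uniform-deviation bound obtained from McDiarmid's inequality together with a symmetrization argument. For any class $\mathscr{F}$ of functions bounded in $[0,1]$, with probability at least $1-\epsilon$ over the draw of $U$, every $f \in \mathscr{F}$ satisfies
\begin{align}
\mathbb{E}_{\mathcal{D}}[f] \le \frac{1}{n}\sum_{i=1}^n f(x_i) + I_n(\mathscr{F}) + 3\sqrt{\frac{\ln(2/\epsilon)}{2n}}. \nonumber
\end{align}
Applying this to the loss composition of $q_m$ and identifying the empirical average with $\frac{1}{n}\sum_i \xi_i$ immediately produces the first and second terms of \eqref{Th1}; the constant $3$ in front of the concentration term is exactly what this lemma delivers, so no further tuning is needed there.

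The heart of the proof is bounding the empirical Rademacher complexity $\hat{\mathcal{I}}_n(\mathscr{Q}_N)$ of the linear-in-feature-space class $\{x \mapsto h^T\phi(x) : \|h\| \le N\}$. Starting from Definition \ref{Def1} and using the Cauchy--Schwarz inequality to pull the constraint $\|h\| \le N$ outside the supremum, I would obtain
\begin{align}
\hat{\mathcal{I}}_n(\mathscr{Q}_N) \le \frac{2N}{n}\,\mathbb{E}_{\omega}\Bigl[\,\Bigl\|\sum_{i=1}^n \omega_i \phi(x_i)\Bigr\|\,\Bigr]. \nonumber
\end{align}
Then Jensen's inequality moves the expectation inside the square root, and the independence and zero mean of the Rademacher variables ($\mathbb{E}[\omega_i\omega_j]=\delta_{ij}$) collapse the double sum $\sum_{i,j}\mathbb{E}[\omega_i\omega_j]\phi(x_i)^T\phi(x_j)$ onto its diagonal $\sum_i \phi(x_i)^T\phi(x_i) = \sum_i \psi(x_i,x_i)$. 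This yields $\hat{\mathcal{I}}_n(\mathscr{Q}_N) \le \frac{2N}{n}\sqrt{\sum_i \psi(x_i,x_i)}$, and passing to the expectation over $S$ preserves the bound for $I_n(\mathscr{Q}_N)$.

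The main obstacle — and the step I would check most carefully — is reconciling the constants, specifically obtaining the factor $4$ rather than $2$ in the final complexity term. This extra factor of two should come from the contraction (Ledoux--Talagrand) principle when a $1$-Lipschitz loss is peeled off the hypothesis, giving $I_n(\ell\circ\mathscr{Q}_N)\le 2\,I_n(\mathscr{Q}_N)$; I would need to state the loss explicitly and verify its Lipschitz constant so that the composition bound is applied correctly. A secondary point to confirm is the boundedness hypothesis required by the McDiarmid step, which must hold for the composed loss rather than for $q_m$ itself, together with the precise interpretation of $\xi_i$ as the empirical slack associated with each sample.
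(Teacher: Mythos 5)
Your proposal follows essentially the same route as the paper's proof: the paper likewise invokes the Bartlett--Mendelson uniform-deviation bound (their Theorem 7) for the empirical and concentration terms, the Lipschitz contraction principle (their Theorem 14) for the factor of $2$, and the kernel-class Rademacher bound (their Lemma 22, which is precisely your Cauchy--Schwarz/Jensen computation) for the complexity term. The two points you flagged are resolved exactly as you anticipate: the dominating loss is an explicit ramp function $\Lambda$ with Lipschitz constant $1$ that upper-bounds the Heaviside misclassification indicator, and the slacks enter via $\Lambda(\hat q(x_i,y_i)) \le [1-y_i q(x_i)]_+ \le \xi_i$ from the model constraints.
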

\begin{proof}
Introduce the Heaviside function, which is defined as:
\begin{align}
  \mathcal{H}(x)=\left\{\begin{array}{ll}{0}, & \text{if}~~ x \leq 0,   \\     {1}, & \text{if}~~ x > 0.
     \end{array}\right. 
\end{align}
It then follows that:
\begin{align}
    \mathcal{P}_{\mathcal{D}} (yq(x) \leq 0)  = \mathbb{E}_{\mathcal{D}}\left[\mathcal{H}(-yq(x) \right],
\end{align}
where $\mathbb{E} = \mathbb{E}_{\mathcal{D}}$ denotes the true expectation over \( \mathcal{D} \). Let \( \Lambda: \mathbb{R} \to [0, 1] \) represent a loss function defined as follows:
\begin{align}
    \Lambda=\left\{\begin{array}{lll}{1}, & \text{if}~~ 0 < x, \\ {x+1} , & \text{if}~~ -1 \leq x \leq 0, \\
    {0}, & \text{otherwise}.
 \end{array}\right. 
\end{align}
Since the function \( \Lambda(\cdot) \) dominates \( \mathcal{H}(\cdot) \) on the support of \( \mathcal{D} \), we obtain the following inequality by applying Theorem (7) from \cite{bartlett2002rademacher}: 
\begin{align}
   & \mathbb{E}_{\mathcal{D}}\left(\mathcal{H}(\hat{q}(x, y))-1 \right) \leq \mathbb{E}_{\mathcal{D}}\left(\Lambda\mathbb(\hat{q}(x, y))-1 \right)  \leq \hat{\mathbb{E}}\left(\Lambda\mathbb(\hat{q}(x, y))-1 \right) + \hat{I}_n\left( (\Lambda - 1) \circ \hat{\mathscr{Q}} \right) + 3\sqrt{\frac{\ln(2/\epsilon)}{2n}}.
\end{align}
Hence,
\begin{align}
\label{eq:26}
   & \mathbb{E}_{\mathcal{D}}\left(\mathcal{H}(\hat{q}(x, y)) \right) \leq \mathbb{E}_{\mathcal{D}}\left(\Lambda\mathbb(\hat{q}(x, y)) \right) \leq \hat{\mathbb{E}}\left(\Lambda\mathbb(\hat{q}(x, y)) \right) + \hat{I}_n\left( (\Lambda - 1) \circ \hat{\mathscr{Q}} \right) + 3\sqrt{\frac{\ln(2/\epsilon)}{2n}}.
\end{align}
Given the constraints of the $R^2KM$ model with the optimal \( h \), we achieve:
\begin{align}
\label{eq:27}
    \mathbb{E}_{\mathcal{D}}\left(\Lambda\mathbb(\hat{q}(x, y)) \right) & \leq \frac{1}{n} \sum_{i=1}^n [1 -  y_iq(x_i)]_+ \nonumber \\
    & \leq \frac{1}{n} \sum_{i=1}^n [y_i\xi_i]_+ \nonumber \\
    & \leq \frac{1}{n} \sum_{i=1}^n \xi_i.
\end{align}
According to Theorem (14) in \cite{bartlett2002rademacher}, and given that the Lipschitz function \( \Lambda(\cdot) \) has a Lipschitz constant of $1$ with \( \Lambda(0) = 0 \), we have:
\begin{align}
    \hat{\mathcal{I}}_n \left( (\Lambda - 1) \circ \hat{\mathscr{Q}} \right) \leq 2 \hat{\mathcal{I}}_n ( \hat{\mathscr{Q}} ).
\end{align}
According to def. \eqref{Def1}, with \( y \in \{ -1, +1 \} \), we have:
\begin{align}
     \hat{\mathcal{I}}_n (\hat{\mathscr{Q}}) & = \mathbb{E}_{\omega}\left[ \underset{\hat{q} \in \hat{\mathscr{Q}}}{\sup} | 2/n \sum_{i=1}^n \omega_i \hat{q}(x_i, y_i) | \right] \nonumber \\
     & = \mathbb{E}_{\omega}\left[ \underset{q \in \mathscr{Q}}{\sup} | 2/n \sum_{i=1}^n \omega_i y_ig(x_i) | \right]  \nonumber \\
     & = \mathbb{E}_{\omega}\left[ \underset{q \in \mathscr{Q}}{\sup} | 2/n \sum_{i=1}^n \omega_i q(x_i) | \right]  \nonumber \\
     & = \hat{\mathcal{I}}_n(\mathscr{Q}).
\end{align}
As stated in Lemma (22) of \cite{bartlett2002rademacher}, the empirical Rademacher complexity associated with the function class \( \mathscr{Q} \) is defined as follows:
\begin{align}
\label{eq:30}
    \hat{I}_n (\mathscr{Q}) \leq \frac{4N}{n}\sqrt{\sum_{i=1}^n \psi(x_i, x_i)}.
\end{align}
By putting Eqs. \eqref{eq:27}–\eqref{eq:30} into Eq. \eqref{eq:26}, we can draw the conclusions outlined in this theorem.
\end{proof}
Following the $R^2KM$ model outlined in \eqref{eqs:17}, we define the classification error function \( q(x) \) and derive a margin-based estimate for the misclassification probability. This is achieved by integrating the empirical expectation of \( \hat{\mathscr{Q}} \) with the empirical Rademacher complexity linked to \( \mathscr{Q} \). As \(n \) increases sufficiently, $R^2KM$ provides a strong generalization error bound for classification. The training error decreases, leading to a reduction in the generalization error as well. This theoretical finding guarantees that $R^2KM$ demonstrates enhanced generalization performance.

\section{Numerical Experiments and Results}
\label{Numerical Experiments and Results}
In this section, we conduct the experiment of the proposed $R^2KM$ model and the existing model using datasets derived from hyperspectral images. Moreover, we assess our proposed model by utilizing publicly accessible benchmark datasets from UCI and KEEL for classification and regression tasks. We compare our proposed $R^2KM$ model with ELM or RVFL without direct link (RVFLwoDL) \cite{huang2006extreme}, RVFL \cite{pao1994learning}, NF-RVFL \cite{sajid2024neuro}, and RKM \cite{suykens2017deep} models.

\subsection{Experimental Setup}
The experimental setup consists of a personal computer powered by an Intel(R) Xeon(R) Gold 6226R CPU, operating at a frequency of $2.90$ GHz and equipped with $128$ GB of RAM. The system operates on the Windows 11 platform and performs tasks using Python version $3.11$. The dataset is split randomly into two subsets, allocating $70\%$ for training and reserving $30\%$ for testing. The Gaussian kernel used is defined as \( \mathscr{K}(x_i, x_j) = e^{-\frac{1}{2\sigma^2} \| x_i - x_j \|^2} \), with the parameter \(\sigma\) selected from \(\{2^{-5}, 2^{-4}, \ldots, 2^5\}\). We utilize a five-fold cross-validation technique combined with a grid search method to optimize the hyperparameters of the models within specified ranges: $\eta = \lambda = \{10^{i} |~ i=-5, \ldots, 5 \}$.  For the RVFL, RVFLwoDL, and NF-RVFL models, we tune all hyperparameters from the range \(\{10^{-5}, 10^{-4}, \ldots, 10^5\}\), and select the number of hidden nodes from \(3\) to \(203\) in steps of \(20\). Additionally, we used nine different activation functions, indexed as follows: 1) SELU, 2) ReLU, 3) Sigmoid, 4) Sine, 5) Hardlim, 6) Tribas, 7) Radbas, 8) Sign, and 9) Leaky ReLU. 

\subsection{Experiments on Hyperspectral Image Datasets}
We conduct experiments using four widely used hyperspectral image datasets\footnote{\url{https://www.ehu.eus/ccwintco/index.php?title=Hyperspectral_Remote_Sensing_Scenes}}: the Indian Pines dataset, the Salinas dataset, the KSC dataset, and the Pavia University dataset. 
\begin{table}[ht!]
\centering
    \caption{Description of the Indian Pines dataset.}
    \label{Indian Pines dataset description}
    \resizebox{0.8\linewidth}{!}{
\begin{tabular}{clccc}
\hline No.  & Class  & Numbers of samples  & Training-set  & Testing-set  \\
\hline 
1 & Grass-trees  & 730 & 100 & 630 \\
2 &  Grass-pasture  & 483 & 100 & 383 \\
3 & Corn-notill  & 1428 & 100 & 1328 \\
4 &  Alfalfa  & 46 & 23 & 23 \\
5 &  Oats  & 20 & 10 & 10 \\
6 &  Hay-windrowed & 478 & 100 & 378 \\
7 &  Soybean-notill & 972 & 100 & 872 \\
8 & Soybean-mintill  & 2455 & 100 & 2355 \\
9 &  Corn & 237 & 100 & 137 \\
10 &  Corn-mintill  & 830 & 100 & 730 \\
11 & Soybean-clean  & 593 & 100 & 493 \\
12 &  Grass-pasture-mowed  & 28 & 14 & 14 \\
13 &  Stone-Steel-Towers  & 93 & 47 & 46 \\
14 & Buildings-Grass-Trees-Drives & 386 & 100 & 286 \\
15 &  Woods  & 1265 & 100 & 1165 \\
16 &  Wheat & 205 & 100 & 105 \\
\hline
\end{tabular}}
\end{table}
The dataset from Indian Pines was acquired using an AVIRIS spectrometer situated in the Indian Pines area of western North Indiana. This dataset comprises \(220\) spectral channels in total and a grid layout of \(145 \times 145\) pixels. The spectral range of the dataset extends from \(0.4\) to \SI{2.5}{\micro\metre}, with a spatial resolution of \SI{20}{\meter}. After excluding \(20\) bands affected by water absorption, \(200\) bands were retained for the purposes of training, testing, and validation. Table \ref{Indian Pines dataset description} provides an overview of the quantities of training and testing samples available in the dataset. With the remaining samples designated as the test set, $100$ samples are randomly selected from each feature category. Fig. \ref{Indian pines} presents a pseudo-color image of the Indian Pines dataset, accompanied by the corresponding ground truth image.

\begin{figure}[ht!]
\begin{minipage}{.45\linewidth}
\centering
\subfloat[Colour map]{\includegraphics[scale=0.89]{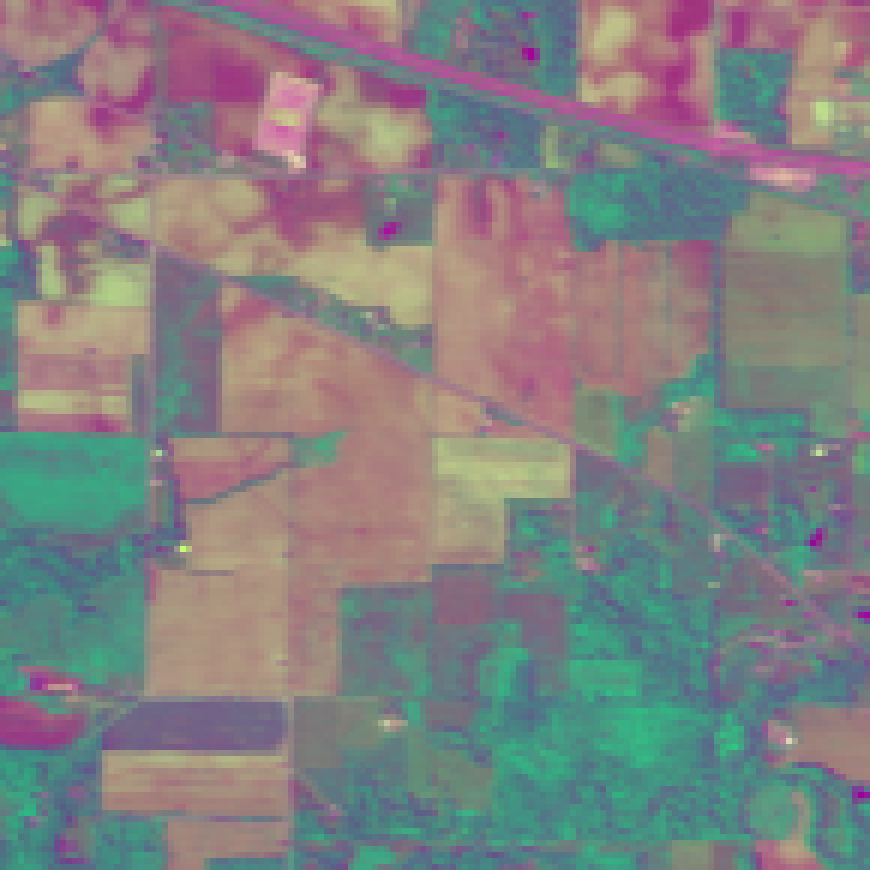}}
\end{minipage}
\begin{minipage}{.40\linewidth}
\centering
\subfloat[Ground truth]{\includegraphics[scale=0.17]{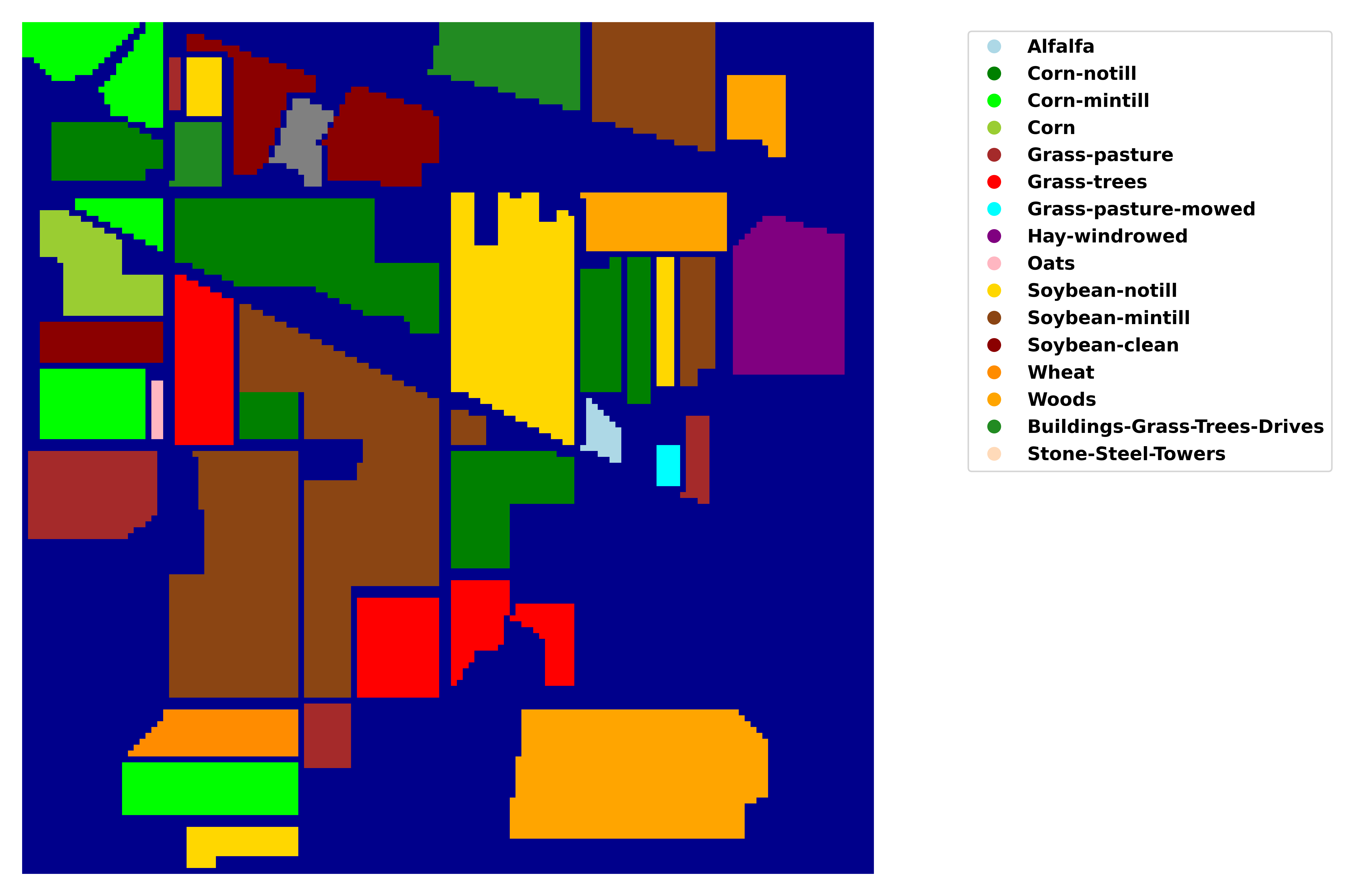}}
\end{minipage}
\caption{Indian Pines}
\label{Indian pines}
\end{figure}

\begin{table}[ht!]
\centering
    \caption{Description of the University of Pavia dataset.}
    \label{University of Pavia dataset description}
    \resizebox{0.8\linewidth}{!}{
\begin{tabular}{clccc}
\hline No.  &  Class  &  Numbers of samples  &  Training  & Testing  \\
\hline 1 &  Asphalt & 6631 & 100 & 6531 \\
2 &  Meadows  & 18649 & 100 & 18549 \\
3 &  Gravel  & 2099 & 100 & 1999 \\
4 &  Trees  & 3064 & 100 & 2964 \\
5 &  Sheets  & 1345 & 100 & 1245 \\
6 & Bare Soil  & 5029 & 100 & 4929 \\
7 &  Bitumen  & 1330 & 100 & 1230 \\
8 &  Bricks  & 3682 & 100 & 3582 \\
9 &  Shadows  & 947 & 100 & 847 \\
\hline
\end{tabular}}
\end{table}
The Pavia University image dataset was obtained using the ROSIS-$03$ optical sensor, which was deployed to map the urban landscape surrounding the University of Pavia. The scene has dimensions of $610 \times 340$ pixels and is characterized by a high spatial resolution of \SI{1.3}{\meter}. The Pavia University dataset comprised a total of $115$ spectral bands. After removing $12$ noisy bands, the remaining $103$ spectral bands are used in the experiment. Table \ref{University of Pavia dataset description} presents the distribution of training and test samples within the dataset. For each feature category, $100$ samples are randomly designated for the training set, with the leftover samples reserved for testing purposes. Fig. \ref{Pavia of University} displays both the ground truth image and the pseudo-color representation of the Pavia University dataset.

\begin{figure}[ht!]
\begin{minipage}{.45\linewidth}
\centering
\subfloat[Colour map]{\includegraphics[scale=0.40]{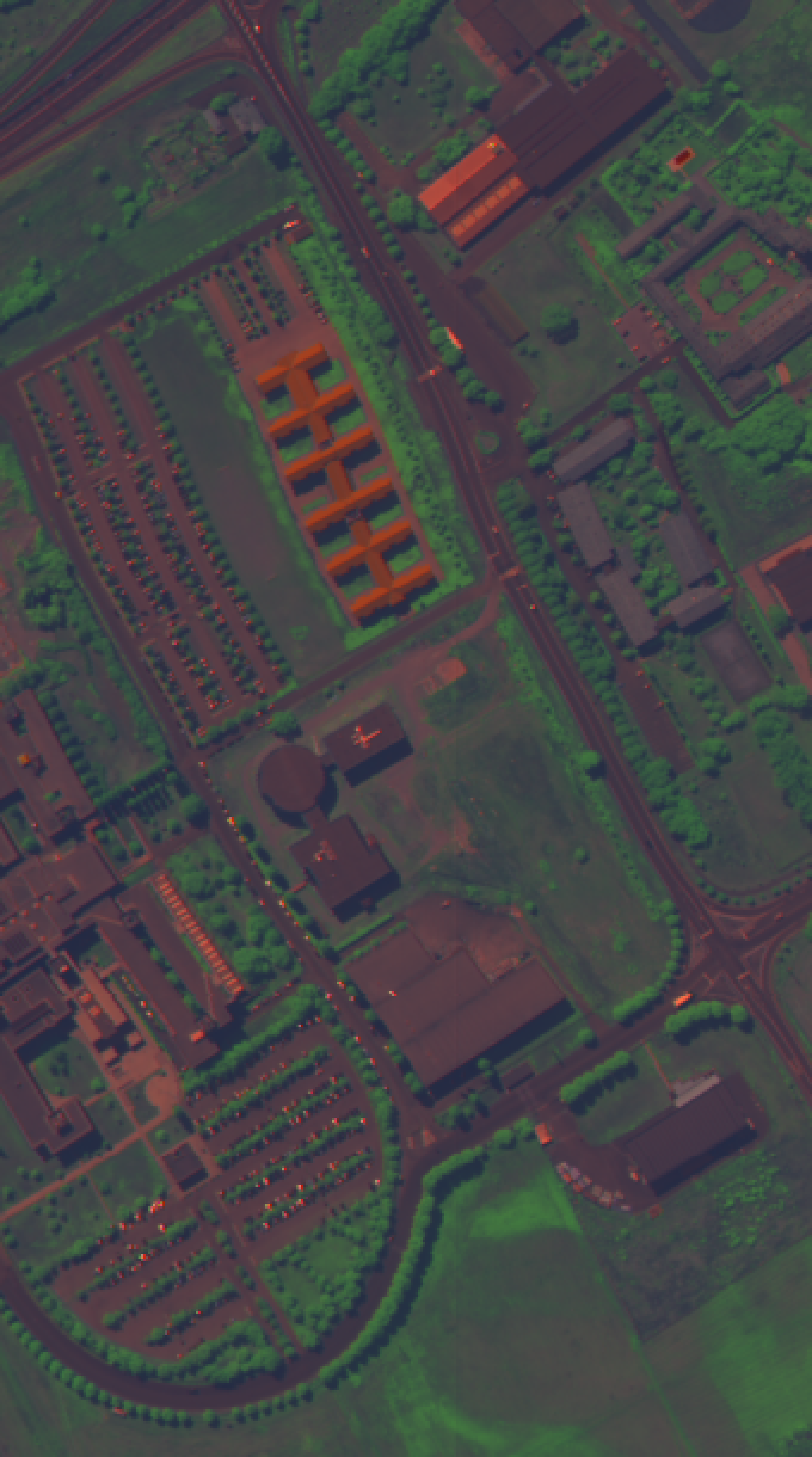}}
\end{minipage}
\begin{minipage}{.40\linewidth}
\centering
\subfloat[Ground truth]{\includegraphics[scale=0.31]{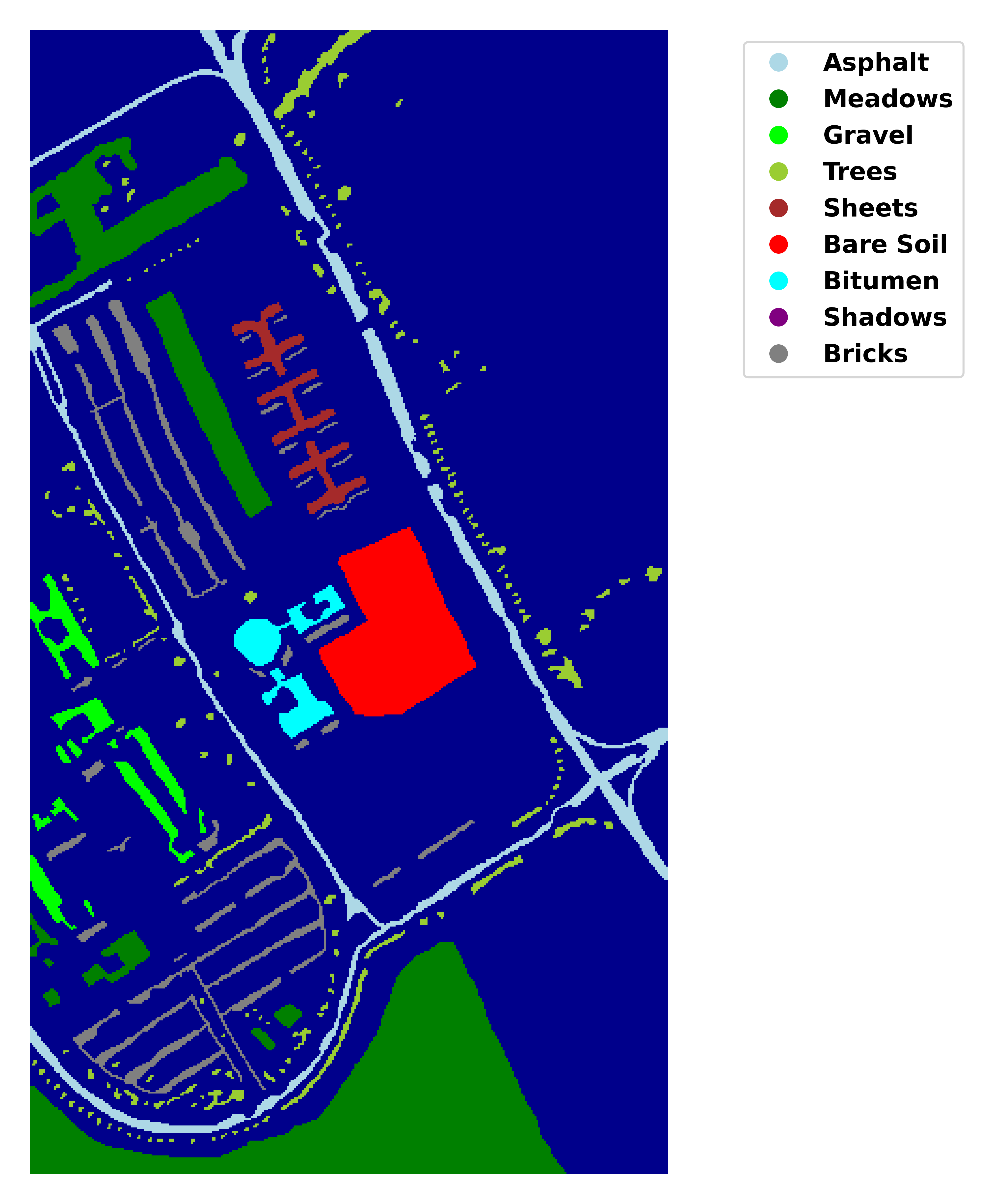}}
\end{minipage}
\caption{Pavia of University}
\label{Pavia of University}
\end{figure}

\begin{table}[htp]
\centering
    \caption{Description of the Salinas dataset.}
    \label{Salinas dataset description}
    \resizebox{0.8\linewidth}{!}{
\begin{tabular}{clccc}
\hline  No.  &  Class  & Numbers of samples  &  Training  &  Testing \\ \hline
1 & Broccoli\_green\_weeds\_2 & 3726 & 100 & 3626 \\
2 & Fallow\_rough\_plough & 1394 & 100 & 1294 \\
3 & Broccoli\_green\_weeds\_1 & 2009 & 100 & 1909 \\
4 & Fallow & 1976 & 100 & 1876 \\
5 & Stubble & 3959 & 100 & 3859 \\
6 & Fallow\_smooth & 2678 & 100 & 2578 \\
7 & Grapes\_untrained & 11271 & 100 & 11171 \\
8 & Celery & 3579 & 100 & 3479 \\
9 & Soil\_vineyard\_develop & 6203 & 100 & 6103 \\
10 & Corn\_senesced\_green\_weeds & 3278 & 100 & 3178 \\
11 & Lettuce\_romaine\_4wk & 1068 & 100 & 968 \\
12 & Lettuce\_romaine\_5wk & 1927 & 100 & 1827 \\
13 & Lettuce\_romaine\_6wk & 916 & 100 & 816 \\
14 & Lettuce\_romaine\_7wk & 1070 & 100 & 970 \\
15 & Vineyard\_untrained & 7268 & 100 & 7168 \\
16 & Vineyard\_vertical\_trellis & 1807 & 100 & 1707 \\
\hline
\end{tabular}}
\end{table}
Collected in 1998 using the AVIRIS sensor, the Salinas dataset features images captured in the Salinas Valley, California. Each image has dimensions of $512 \times 217$ pixels, boasting a high spatial resolution of \SI{3.7}{\meter} per pixel. The Salinas dataset contains $224$ spectral bands, of which $204$ were retained for our experiment after excluding $20$ bands associated with water absorption. Additionally, the dataset includes $16$ feature categories. Table \ref{Salinas dataset description} outlines the distribution of training and testing samples within the dataset. For each feature category, $100$ samples are randomly chosen for the training set, while the rest are designated for testing. Fig. \ref{Salinas} illustrates both the pseudo-color image and the ground truth image of the Salinas dataset.

\begin{figure}[ht!]
\begin{minipage}{.40\linewidth}
\centering
\subfloat[Colour map]{\includegraphics[scale=0.44]{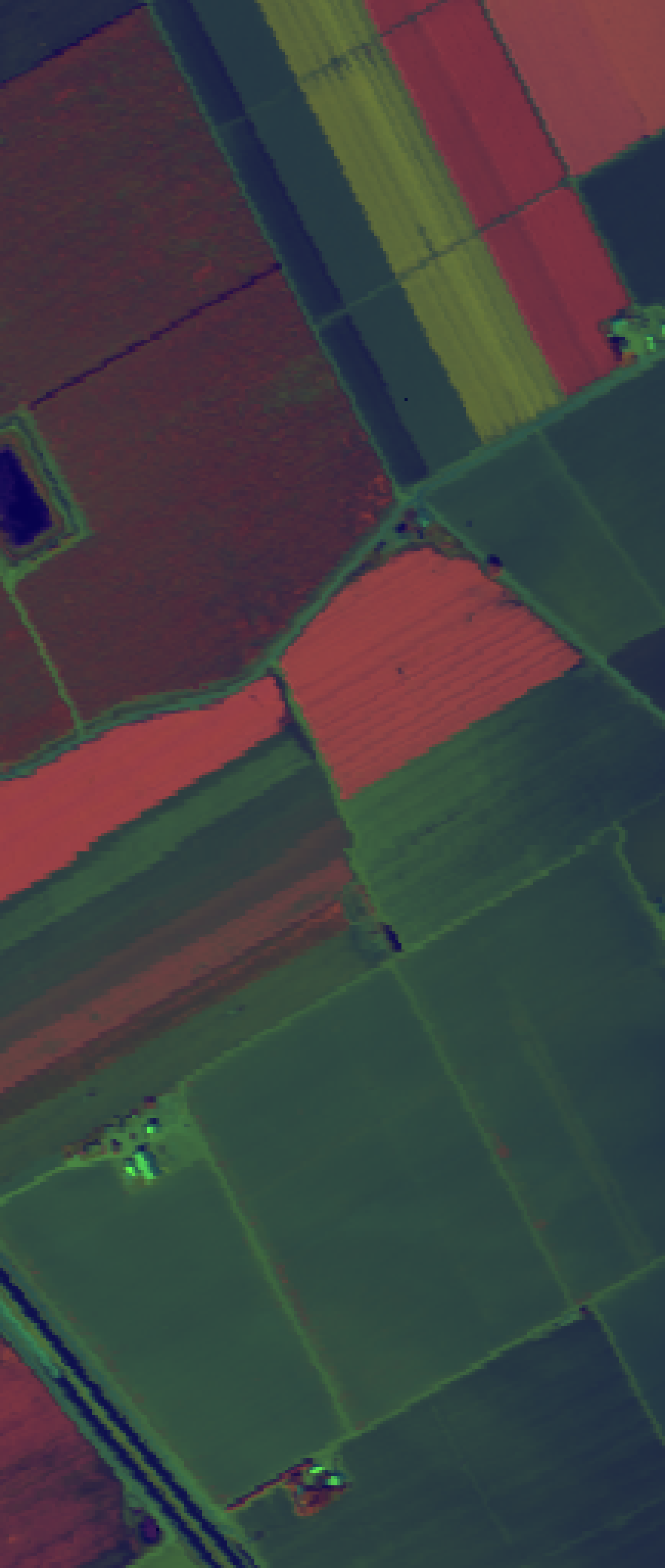}}
\end{minipage}
\begin{minipage}{.45\linewidth}
\centering
\subfloat[Ground truth]{\includegraphics[scale=0.29]{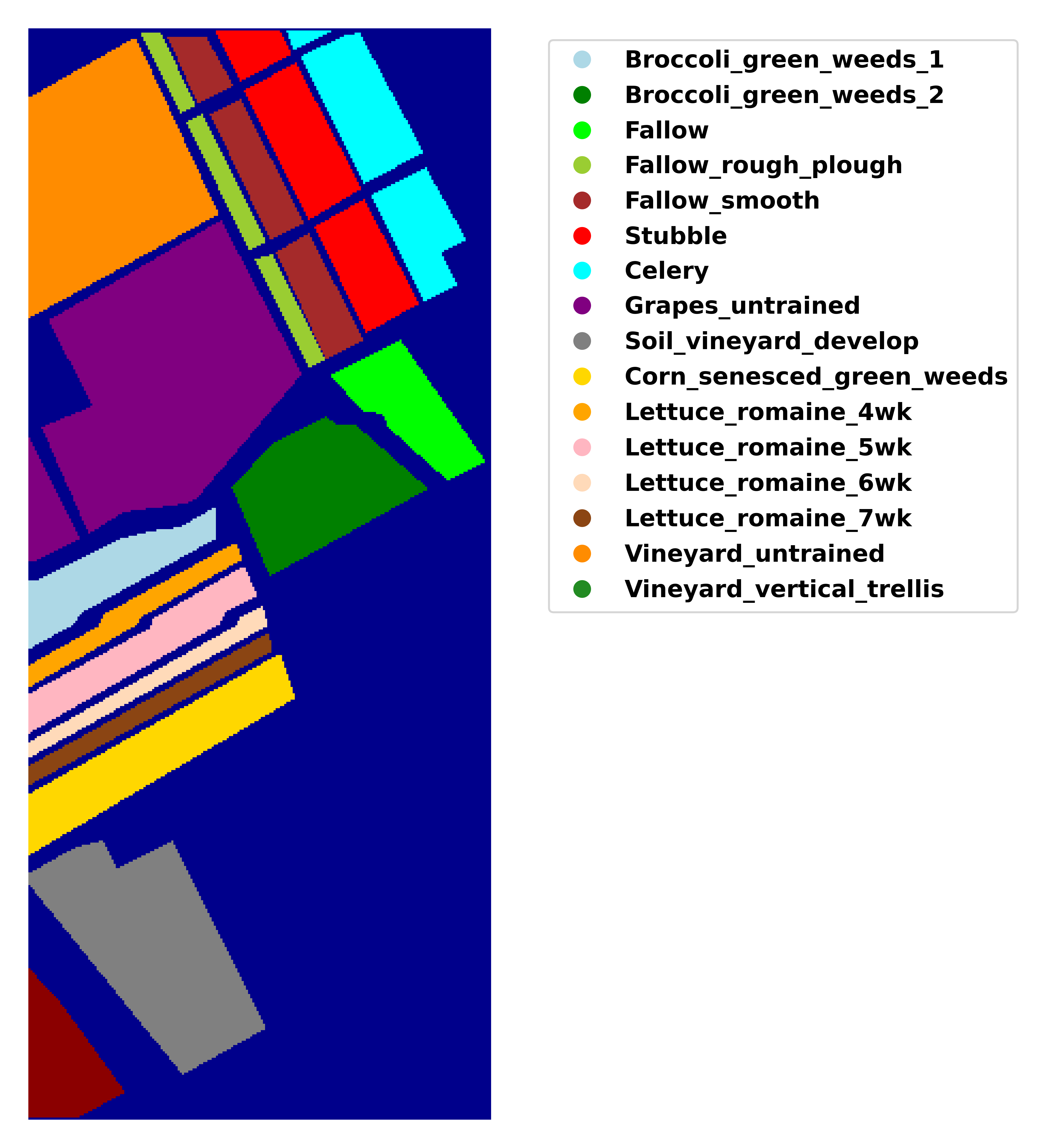}}
\end{minipage}
\caption{Salinas}
\label{Salinas}
\end{figure}

\begin{table}[htp]
\centering
    \caption{Description of the KSC dataset.}
    \label{KSC dataset description}
    \resizebox{0.85\linewidth}{!}{
\begin{tabular}{clccc}
\hline  No.  &  Class  & Numbers of samples  &  Training  &  Testing \\ \hline
1 & Water & 836 & 20 & 816 \\ 
2 & Mud flats & 1847 & 20 & 1827 \\
3 & Salt marsh & 988 & 20 & 968 \\
4 & Cattail marsh & 3198 & 20 & 3178 \\
5 & Spartina marsh & 6123 & 20 & 6103 \\
6 & Graminoid marsh & 11191 & 20 & 11171 \\
7 & Hardwood swamp & 3499 & 20 & 3479 \\
8 & Oak/Broadleaf & 3879 & 20 & 3859 \\
9 & Slash pine & 2598 & 20 & 2578 \\
10 & CP/Oak & 1314 & 20 & 1294 \\
11 & CP hammock & 1896 & 20 & 1876 \\
12 & Willow swamp & 3646 & 20 & 3626 \\
13 & Scrub & 1929 & 20 & 1909 \\
\hline
\end{tabular}}
\end{table}
The KSC dataset comprises hyperspectral remote sensing images taken over the Kennedy Space Center located in Florida, USA. The remote sensing image has a ground spatial resolution of \SI{18}{\meter} and measures $614 \times 512$ pixels. It covers a spectral range from $400$ to \SI{2500}{n\meter} and consists of $224$ bands, with $176$ bands having been pre-processed for use in experimental classification studies. The dataset comprises $13$ classes of identified ground objects, as outlined in Table \ref{KSC dataset description}. The pseudo-color image and the ground truth image are shown in Fig. \ref{KSC}.

\begin{figure}[ht!]
\begin{minipage}{.45\linewidth}
\centering
\subfloat[Colour map]{\includegraphics[scale=0.22]{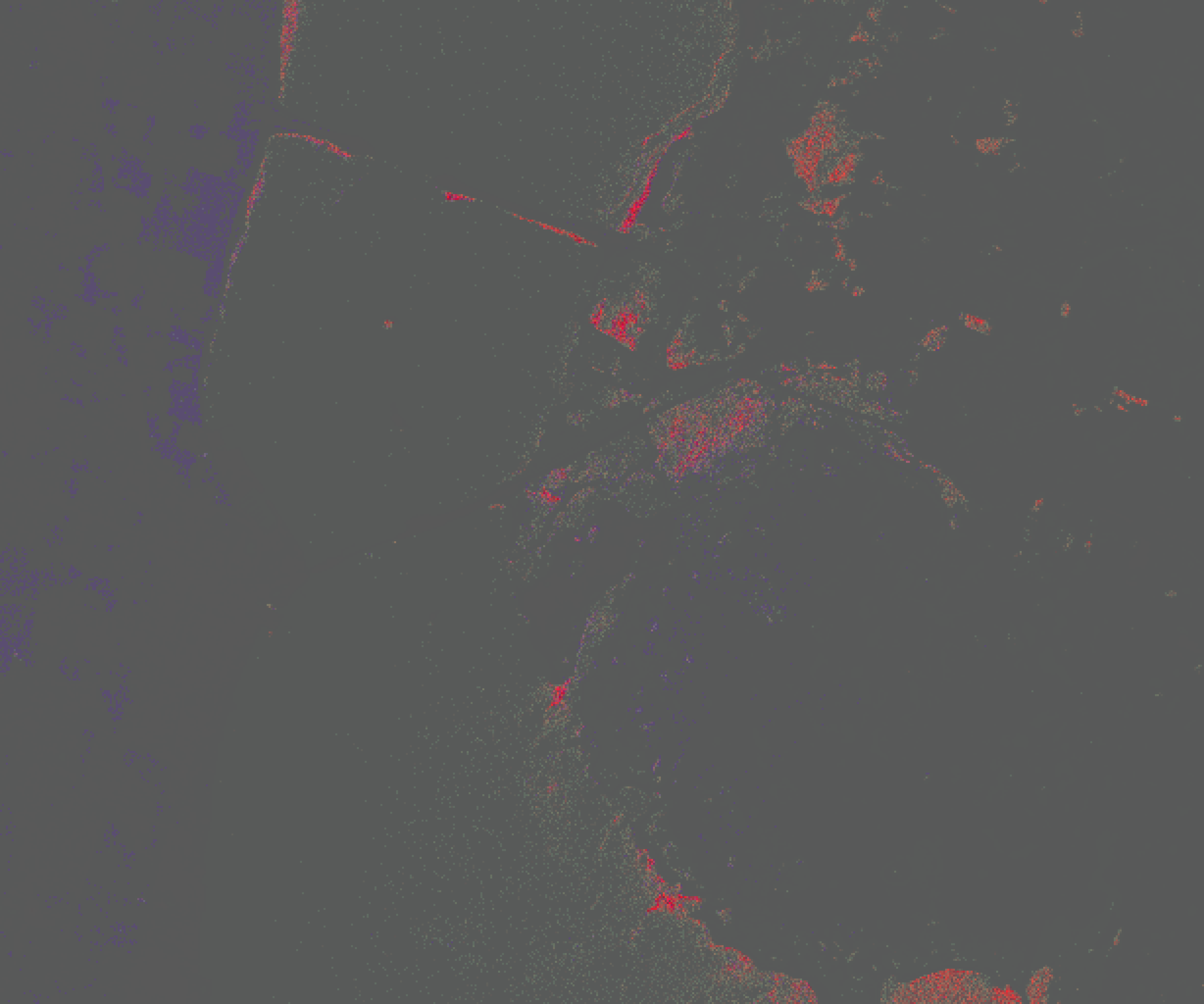}}
\end{minipage}
\begin{minipage}{.40\linewidth}
\centering
\subfloat[Ground truth]{\includegraphics[scale=0.15]{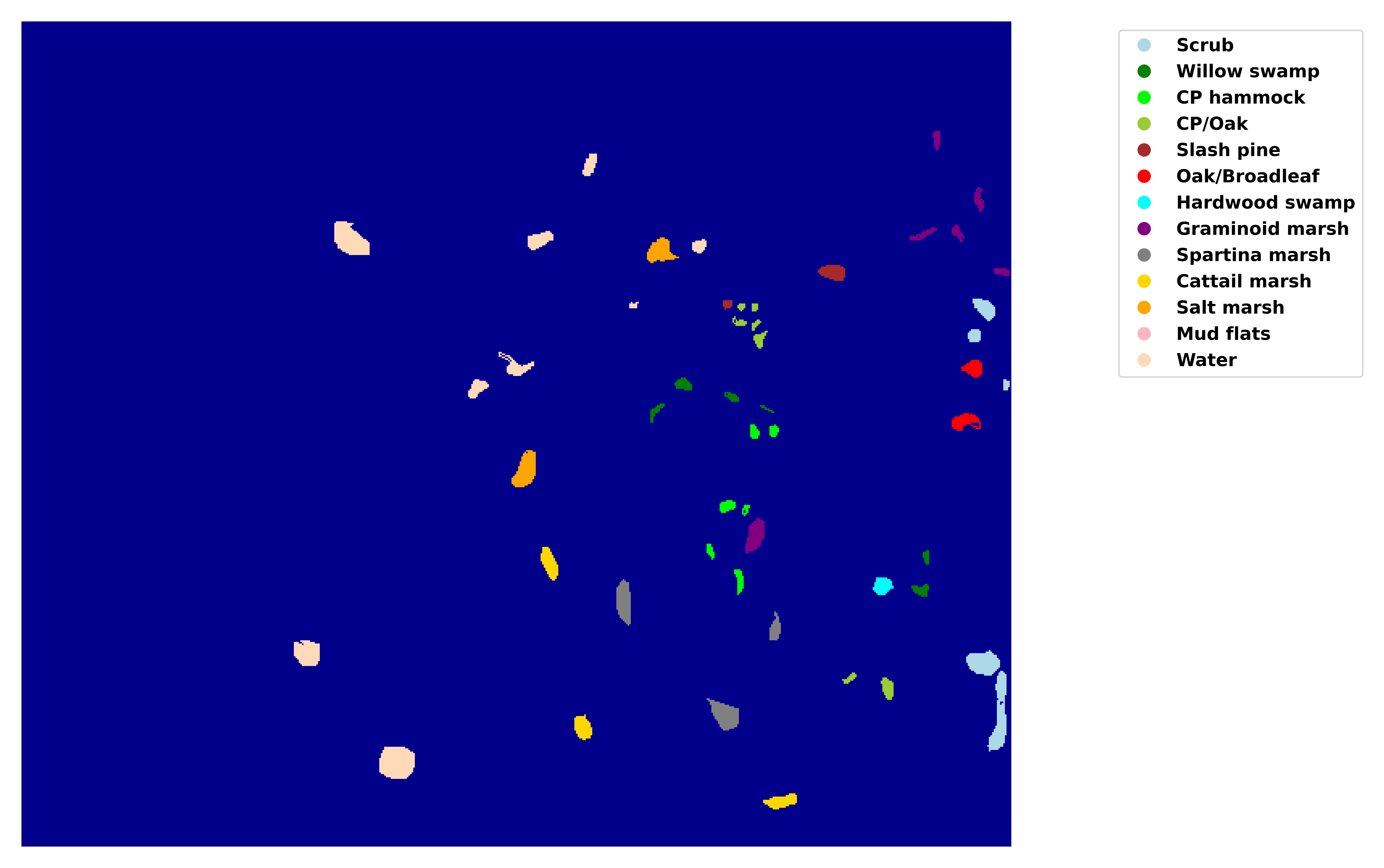}}
\end{minipage}
\caption{KSC}
\label{KSC}
\end{figure}

\begin{figure*}[ht!]
\begin{minipage}{.16\linewidth}
\centering
\subfloat[Ground truth]{\includegraphics[scale=0.60]{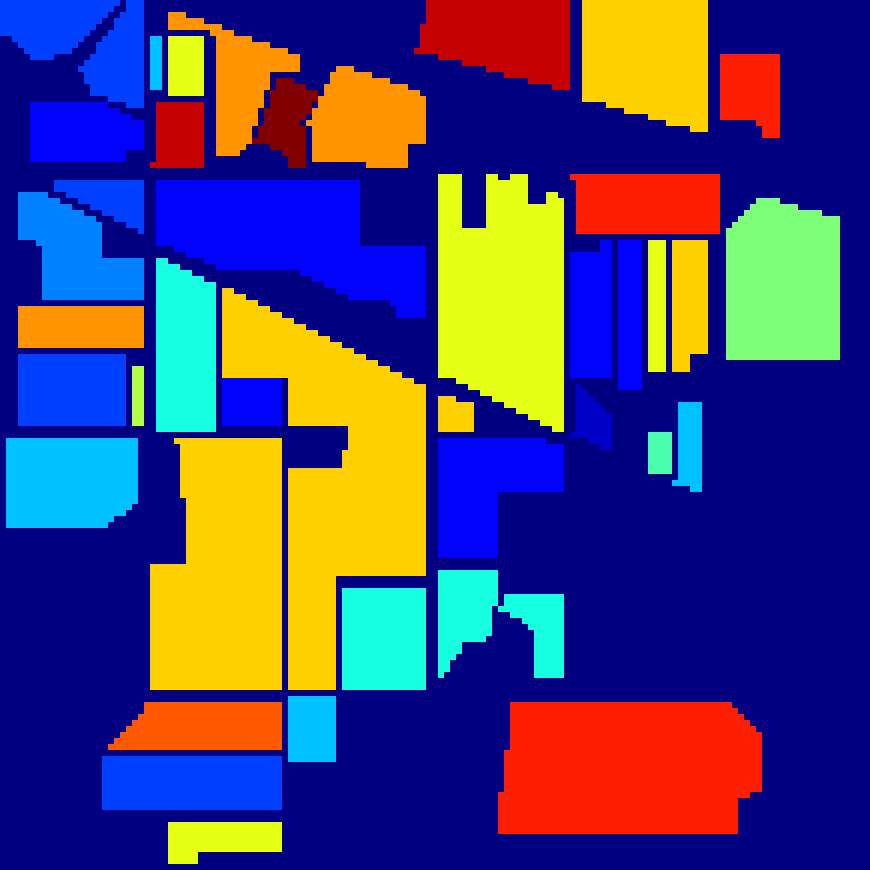}}
\end{minipage}
\begin{minipage}{.16\linewidth}
\centering
\subfloat[RVFL]{\includegraphics[scale=0.60]{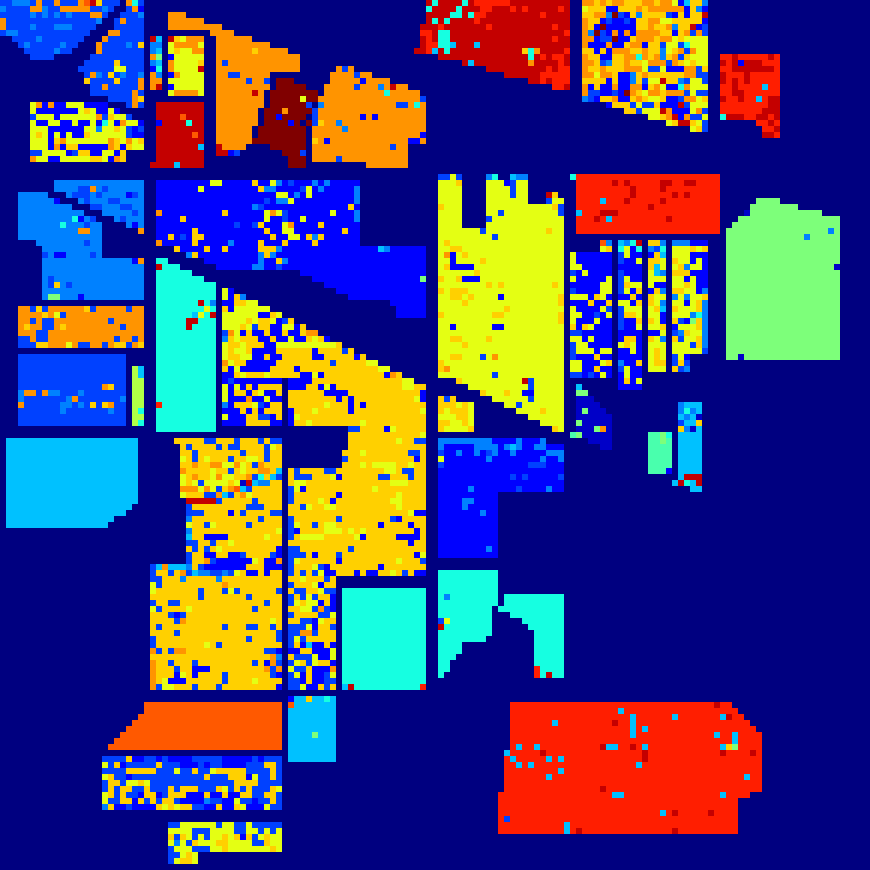}}
\end{minipage}
\begin{minipage}{.16\linewidth}
\centering
\subfloat[RVFLwoDL]{\includegraphics[scale=0.60]{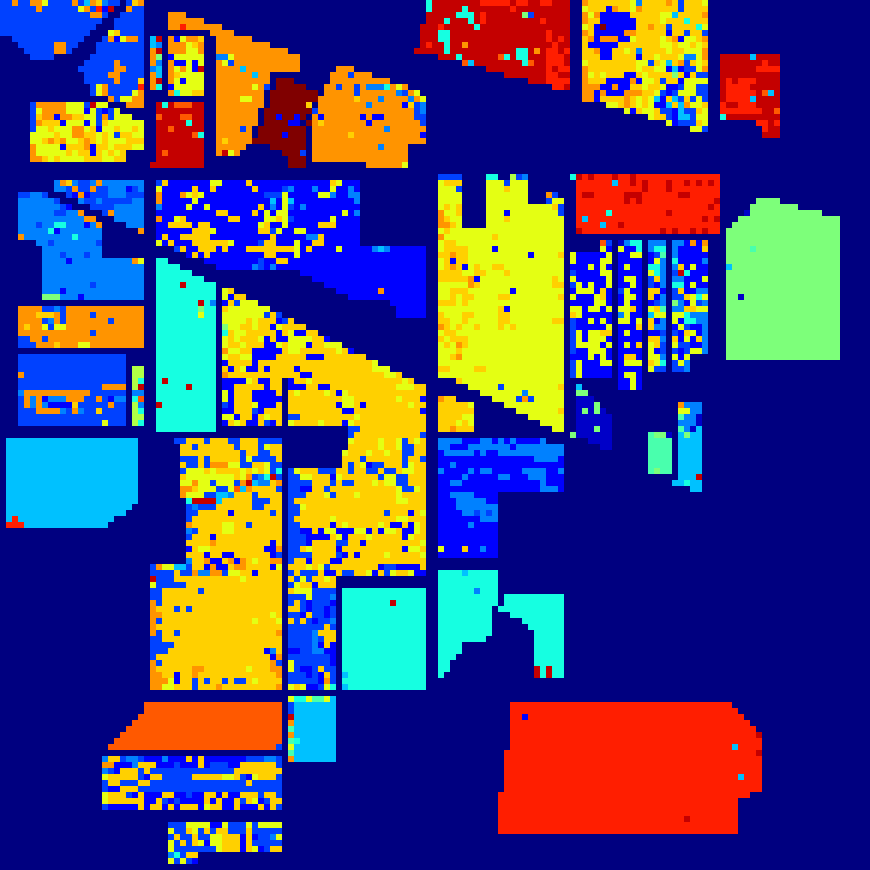}}
\end{minipage}
\begin{minipage}{.16\linewidth}
\centering
\subfloat[NF-RVFL]{\includegraphics[scale=0.60]{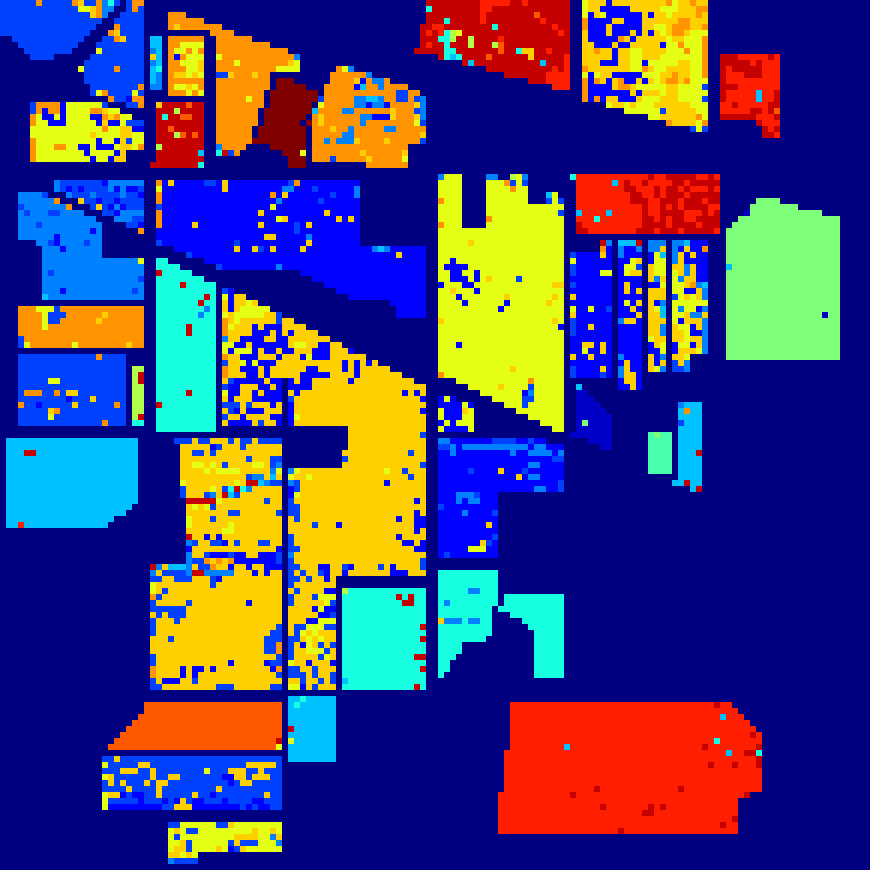}}
\end{minipage}
\begin{minipage}{.16\linewidth}
\centering
\subfloat[RKM]{\includegraphics[scale=0.60]{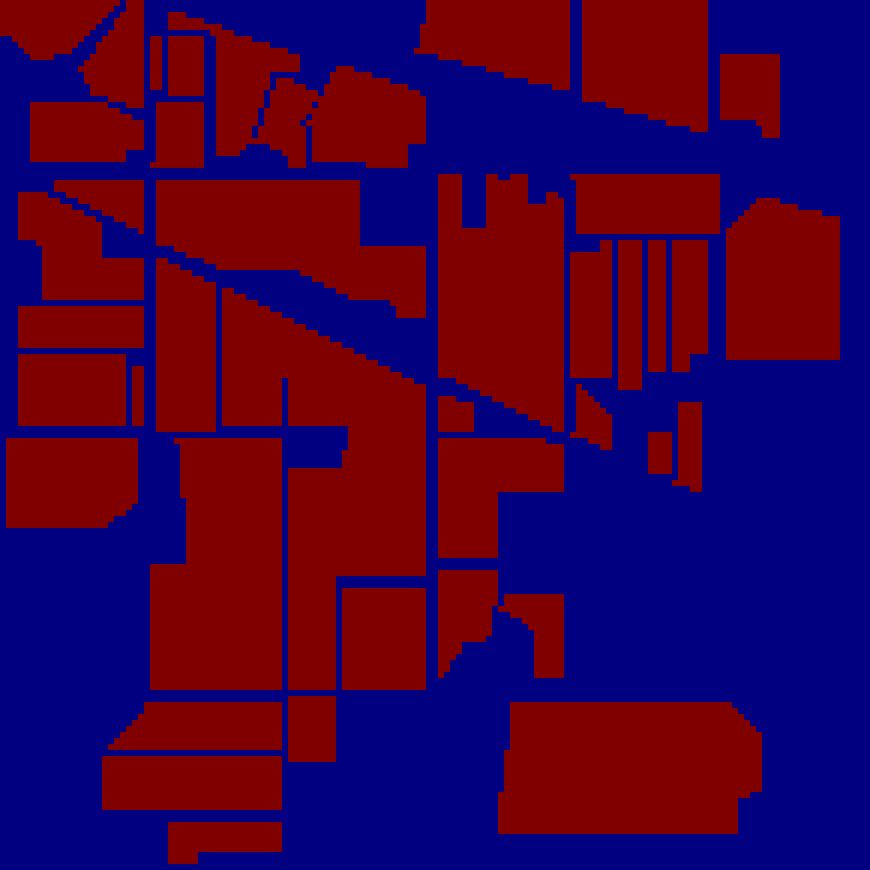}}
\end{minipage}
\begin{minipage}{.16\linewidth}
\centering
\subfloat[$R^2KM$]{\includegraphics[scale=0.60]{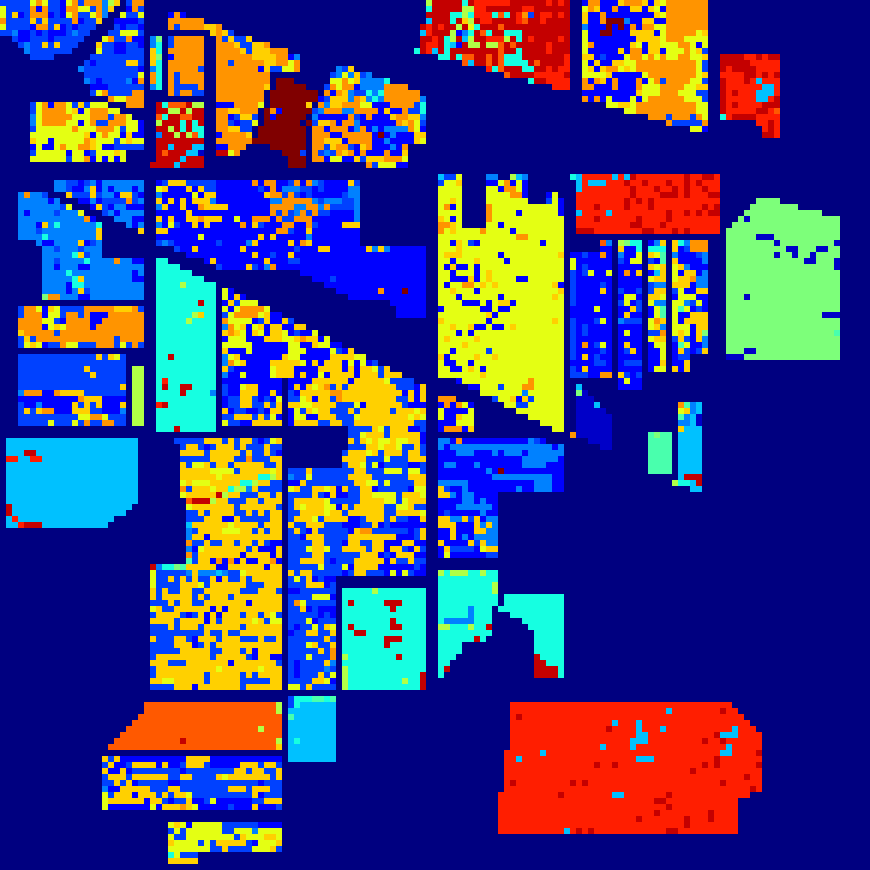}}
\end{minipage}
\caption{Classification outcomes of the proposed $R^2KM$ model and the existing models on the Indian pines dataset.}
\label{Classification results on the Indian pines dataset}
\end{figure*}

\begin{table}[ht!]
\centering
    \caption{Accuracies of the proposed $R^2KM$ against the baseline models over the Indian Pines dataset}
    \label{Performance comparison dataset Indian Pines dataset}
    \resizebox{0.85\linewidth}{!}{
\begin{tabular}{cccccc}
\hline
 & RVFLwoDL \cite{huang2006extreme} & RVFL \cite{pao1994learning} & NF-RVFL \cite{sajid2024neuro} & RKM \cite{suykens2017deep} & $R^2KM$ \\ \hline
1 & $97.78$ & $96.83$ & $98.73$ & $89.86$ & $95.08$ \\
2 & $86.95$ & $90.6$ & $91.64$ & $86.39$ & $94.52$ \\
3 & $59.41$ & $69.95$ & $64.76$ & $57.1$ & $72.97$ \\
4 & $82.61$ & $65.22$ & $56.52$ & $84.62$ & $91.3$ \\
5 & $50$ & $60$ & $50$ & $50$ & $50$ \\
6 & $98.94$ & $98.68$ & $98.94$ & $94.1$ & $99.21$ \\
7 & $66.86$ & $77.52$ & $70.18$ & $67.54$ & $78.9$ \\
8 & $55.46$ & $55.54$ & $48.75$ & $38.15$ & $64.5$ \\
9 & $81.02$ & $85.4$ & $88.32$ & $75.58$ & $86.13$ \\
10 & $57.95$ & $61.92$ & $57.67$ & $47.04$ & $76.85$ \\
11 & $84.18$ & $84.18$ & $89.25$ & $49.04$ & $78.9$ \\
12 & $78.57$ & $78.57$ & $78.57$ & $87.5$ & $92.86$ \\
13 & $85.11$ & $87.23$ & $80.85$ & $93.15$ & $97.87$ \\
14 & $76.22$ & $74.13$ & $73.43$ & $54.37$ & $75.17$ \\
15 & $89.7$ & $89.79$ & $87.47$ & $86.02$ & $85.24$ \\
16 & $100$ & $100$ & $99.05$ & $96.76$ & $98.1$ \\
\hline
oa & $71.38$ & $74.4$ & $70.85$ & $62.61$ & $77.89$ \\ \hline
aa & $78.17$ & $79.72$ & $77.13$ & $72.33$ & $83.6$ \\ \hline
kappa & $67.49$ & $70.95$ & $67.05$ & $58.15$ & $74.8$ \\ \hline
\end{tabular}}
\end{table}

\begin{figure*}[ht!]
\begin{minipage}{.16\linewidth}
\centering
\subfloat[Ground truth]{\includegraphics[scale=0.27]{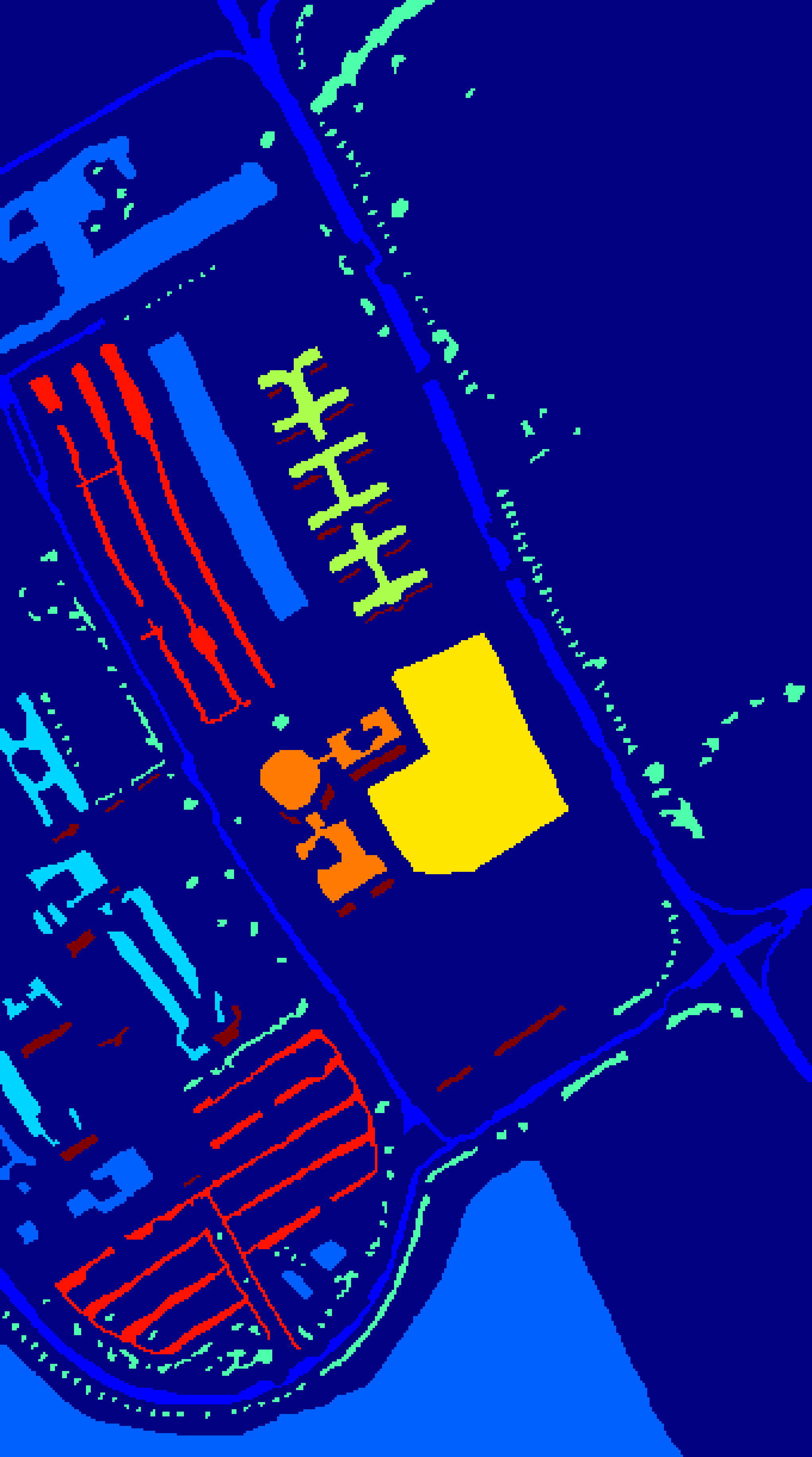}}
\end{minipage}
\begin{minipage}{.16\linewidth}
\centering
\subfloat[RVFL]{\includegraphics[scale=0.27]{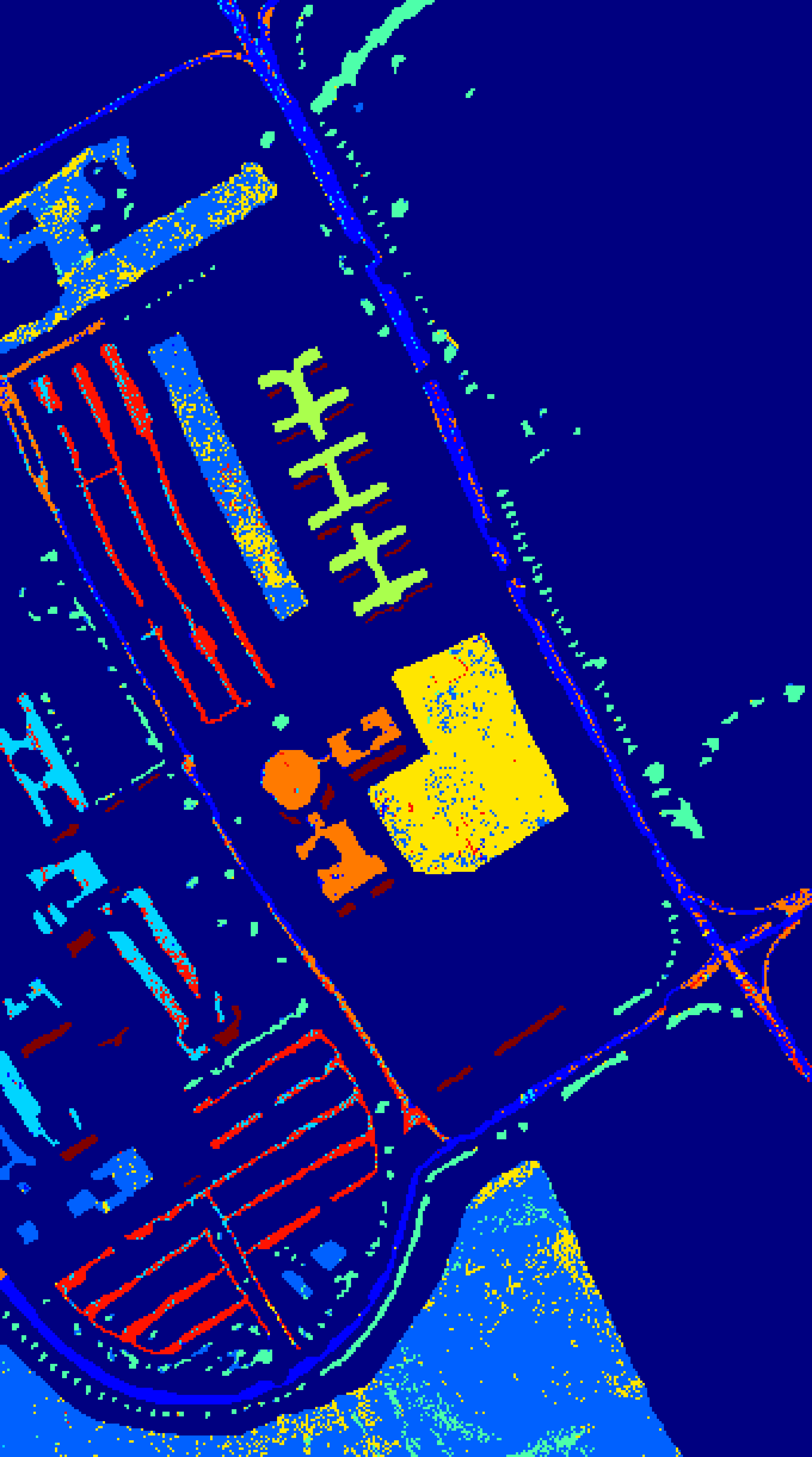}}
\end{minipage}
\begin{minipage}{.16\linewidth}
\centering
\subfloat[RVFLwoDL]{\includegraphics[scale=0.27]{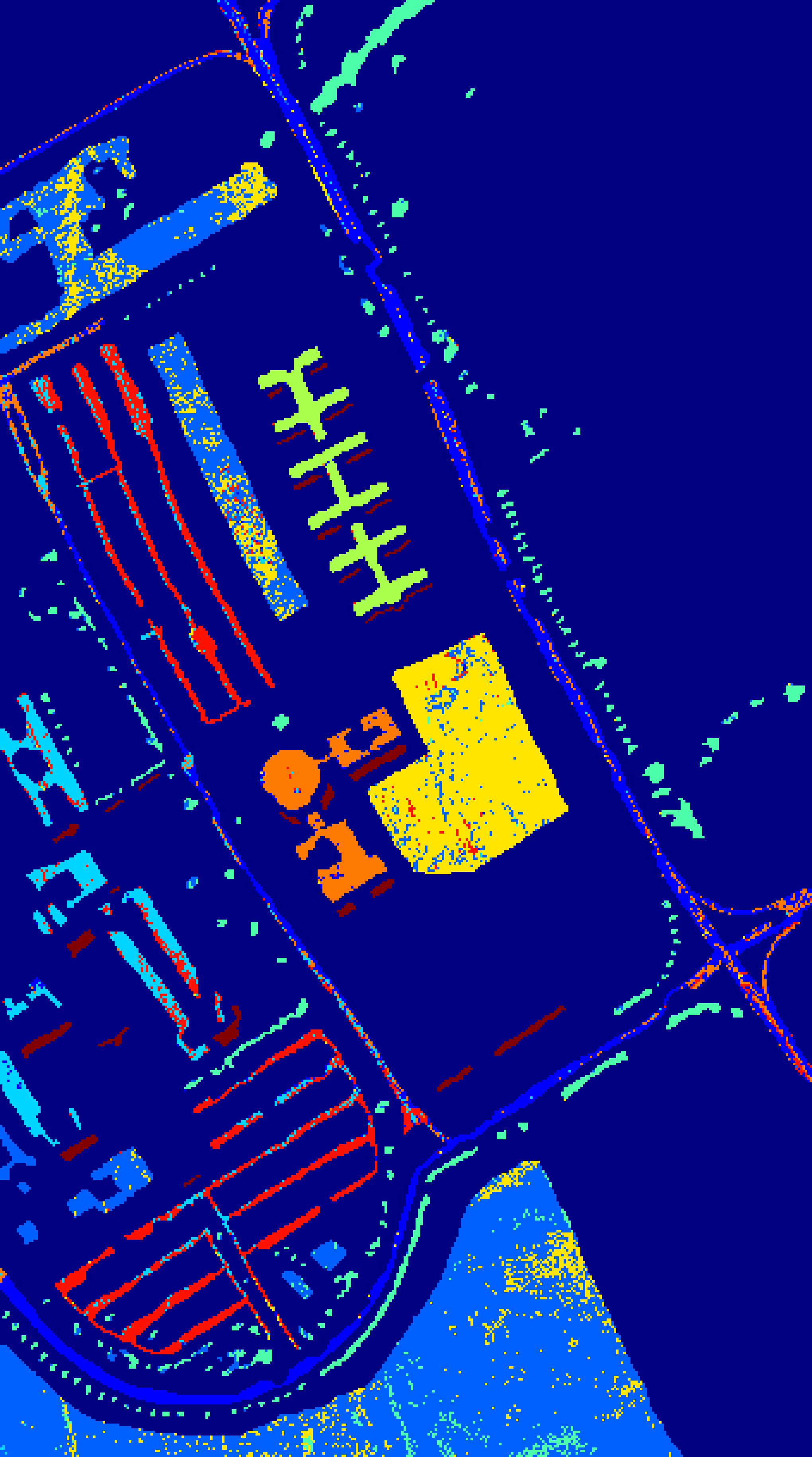}}
\end{minipage}
\begin{minipage}{.16\linewidth}
\centering
\subfloat[NF-RVFL]{\includegraphics[scale=0.27]{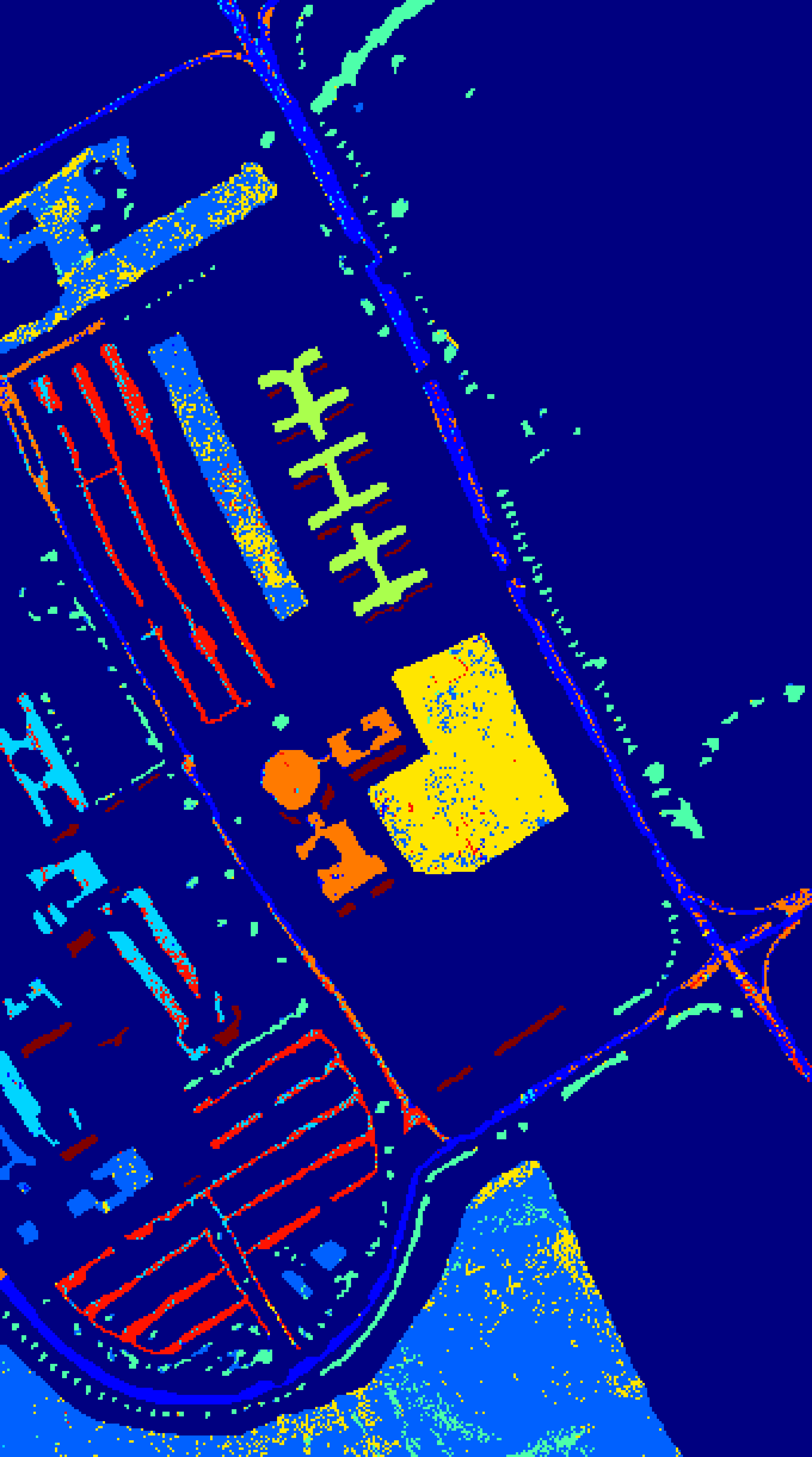}}
\end{minipage}
\begin{minipage}{.16\linewidth}
\centering
\subfloat[RKM]{\includegraphics[scale=0.27]{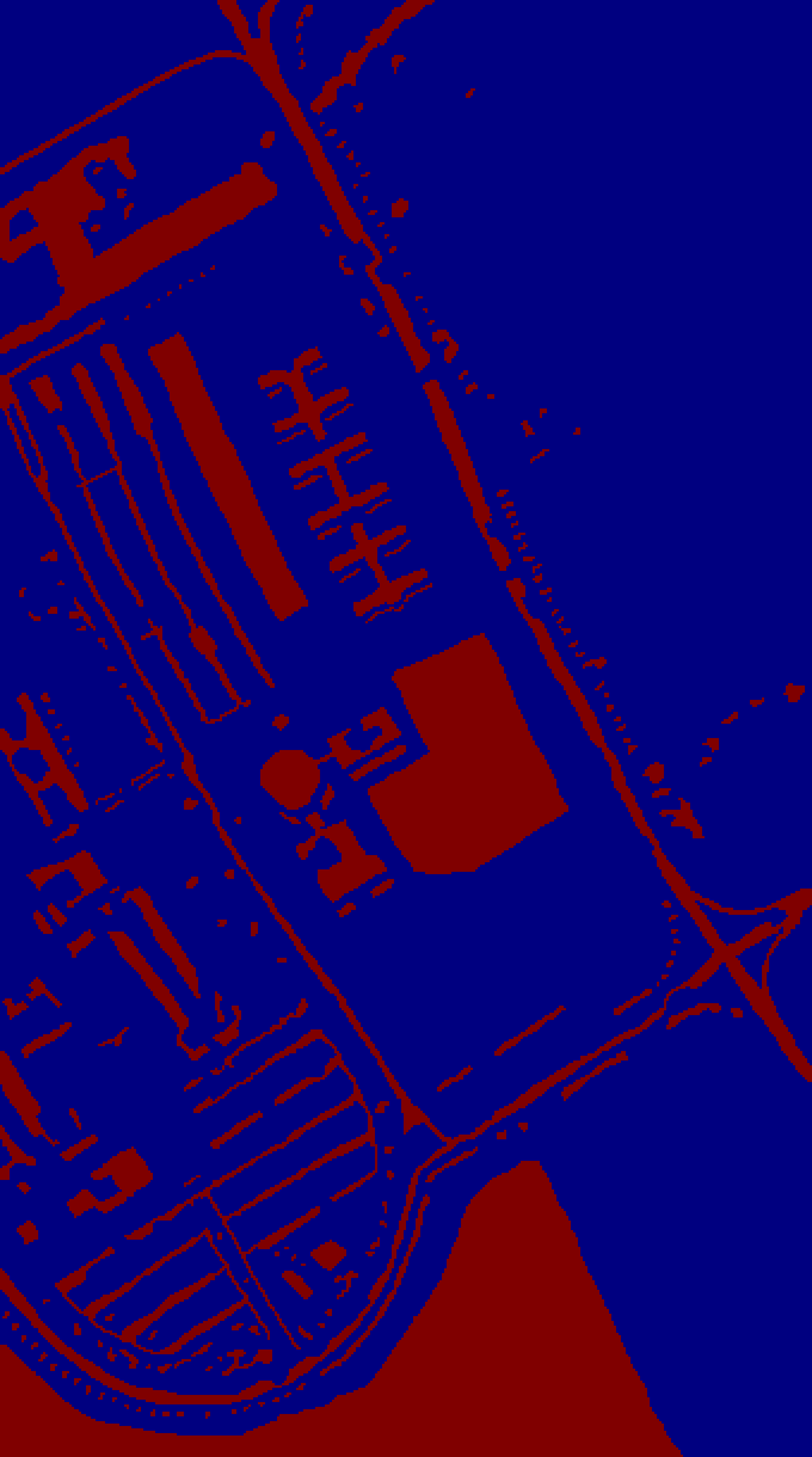}}
\end{minipage}
\begin{minipage}{.16\linewidth}
\centering
\subfloat[$R^2KM$]{\includegraphics[scale=0.27]{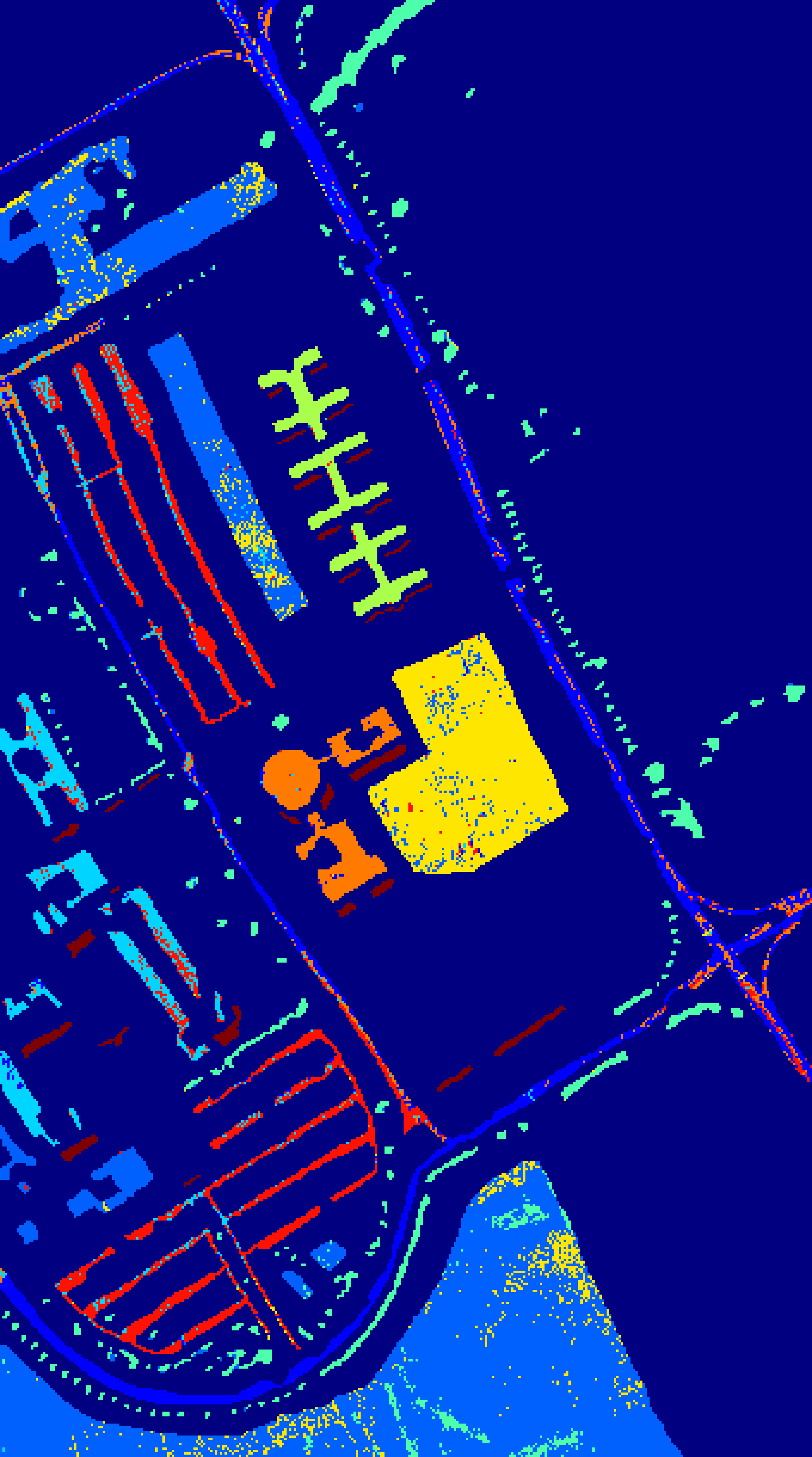}}
\end{minipage}
\caption{Classification outcomes of the proposed $R^2KM$ model and the existing models on the Pavia University dataset.}
\label{Classification results on the Pavia University dataset}
\end{figure*}

To assess the effectiveness of the proposed $R^2KM$ model, we perform experiments using the Indian Pines, KSC, University of Pavia, and Salinas datasets. We compared the performance of our proposed $R^2KM$ model against the baselines RVFLwoDL, RVFL, NF-RVFL, and RKM. Table \ref{Performance comparison dataset Indian Pines dataset} indicates that the proposed model achieves $99.21\%$ accuracy in the `Hay-windrowed' category, $97.87\%$ accuracy in the `Stone-Steel-Towers' category, and $95.08\%$ accuracy in the `Grass-trees' category. The proposed $R^2KM$ model outperforms other models on the three indicators: overall accuracy (OA), average accuracy (AA), and Kappa. Our proposed model outperformed the second-best baseline model, showing an increase in OA, AA, and Kappa values by $3.49$, $3.88$, and $3.85$, respectively. Fig. \ref{Classification results on the Indian pines dataset} illustrates the classification results of the proposed $R^2KM$ model and the baseline models on the Indian Pines dataset, where the $R^2KM$ model shows the fewest misclassified labels, aligning most closely with the ground truth. This indicates that the $R^2KM$ model effectively prioritizes essential spectral information and improves the extraction capabilities of spectral features.

\begin{table}[ht!]
\centering
    \caption{Accuracies of the proposed $R^2KM$ against the baseline models over the University of Pavia dataset}
    \label{Performance comparison dataset Pavia dataset}
    \resizebox{0.85\linewidth}{!}{
\begin{tabular}{cccccc}
\hline
 & RVFLwoDL \cite{huang2006extreme} & RVFL \cite{pao1994learning} & NF-RVFL \cite{sajid2024neuro} & RKM \cite{suykens2017deep} & $R^2KM$ \\ \hline
1 & $76.94$ & $73.63$ & $72.07$ & $68.52$ & $76.67$ \\
2 & $83.84$ & $84.45$ & $87.62$ & $62.99$ & $88.97$ \\
3 & $82.49$ & $83.79$ & $81.59$ & $71.43$ & $85.09$ \\
4 & $91.77$ & $92.78$ & $95.11$ & $94.81$ & $95.14$ \\
5 & $98.96$ & $99.52$ & $98.88$ & $99.09$ & $98.55$ \\
6 & $89.47$ & $89.55$ & $68.63$ & $74.91$ & $92.09$ \\
7 & $93.5$ & $94.47$ & $92.52$ & $89.08$ & $95.45$ \\
8 & $81.99$ & $81.49$ & $73.12$ & $79.82$ & $83.28$ \\
9 & $100$ & $99.88$ & $98.82$ & $99.78$ & $99.88$ \\ \hline
OA & $84.82$ & $84.72$ & $82.54$ & $72.11$ & $87.88$ \\ \hline
AA & $88.77$ & $88.84$ & $85.37$ & $82.27$ & $90.57$ \\ \hline
Kappa & $80.32$ & $80.19$ & $88.33$ & $65.3$ & $84.16$ \\ \hline
\end{tabular}}
\end{table}

\begin{figure*}[htp]
\begin{minipage}{.16\linewidth}
\centering
\subfloat[Ground truth]{\includegraphics[scale=0.41]{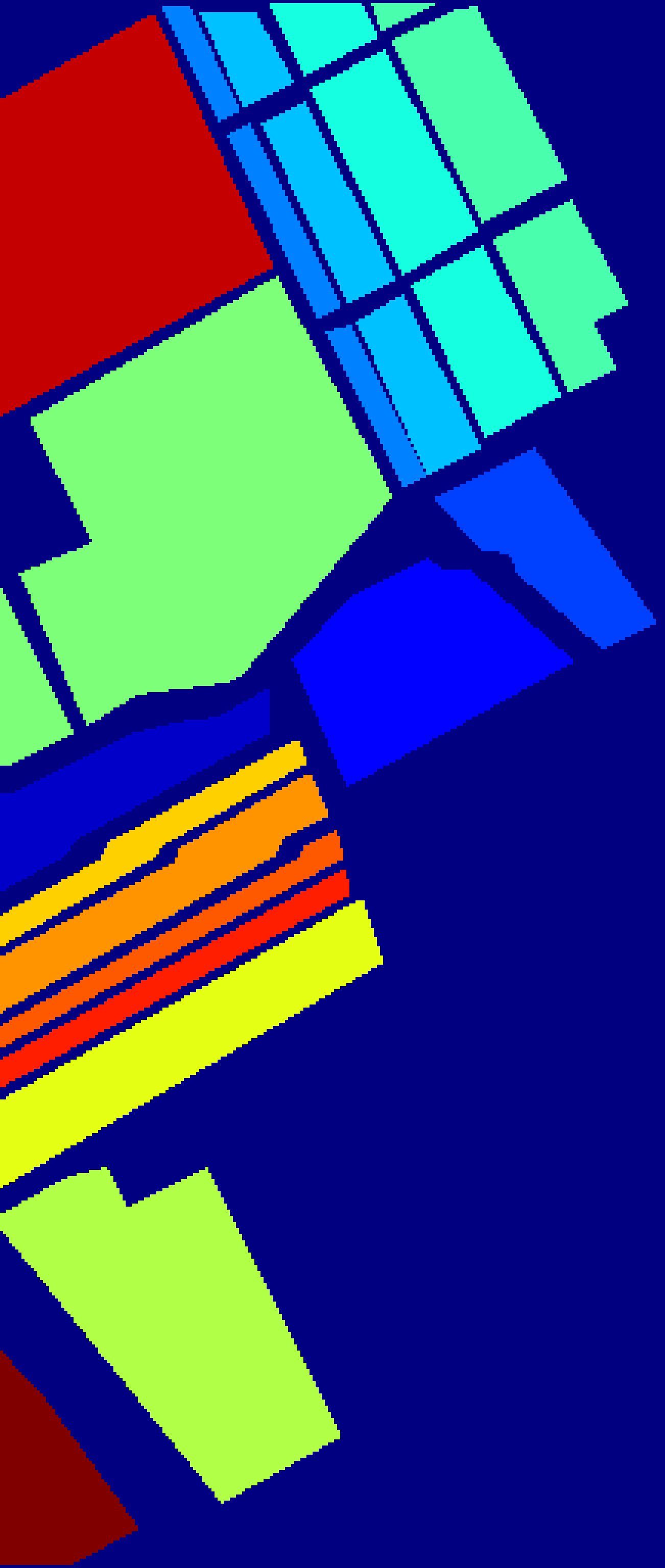}}
\end{minipage}
\begin{minipage}{.16\linewidth}
\centering
\subfloat[RVFL]{\includegraphics[scale=0.41]{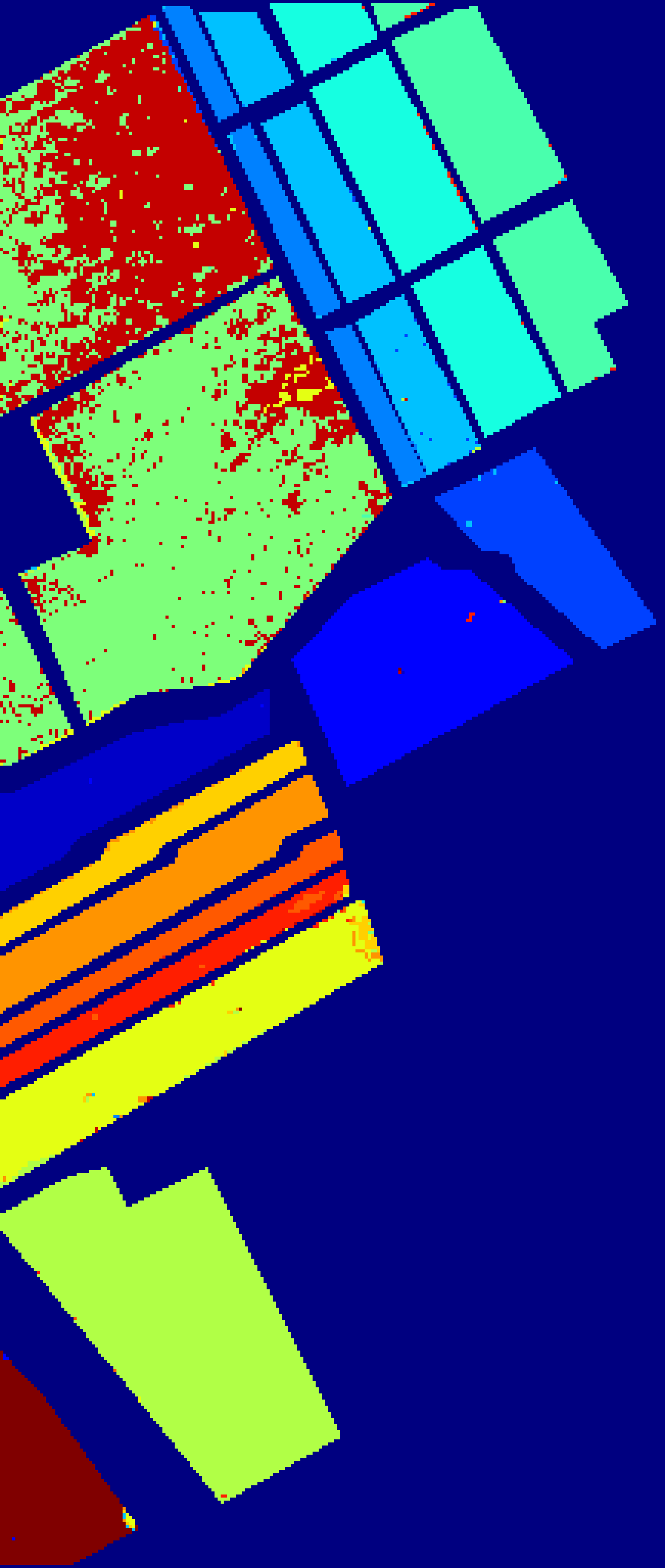}}
\end{minipage}
\begin{minipage}{.16\linewidth}
\centering
\subfloat[RVFLwoDL]{\includegraphics[scale=0.41]{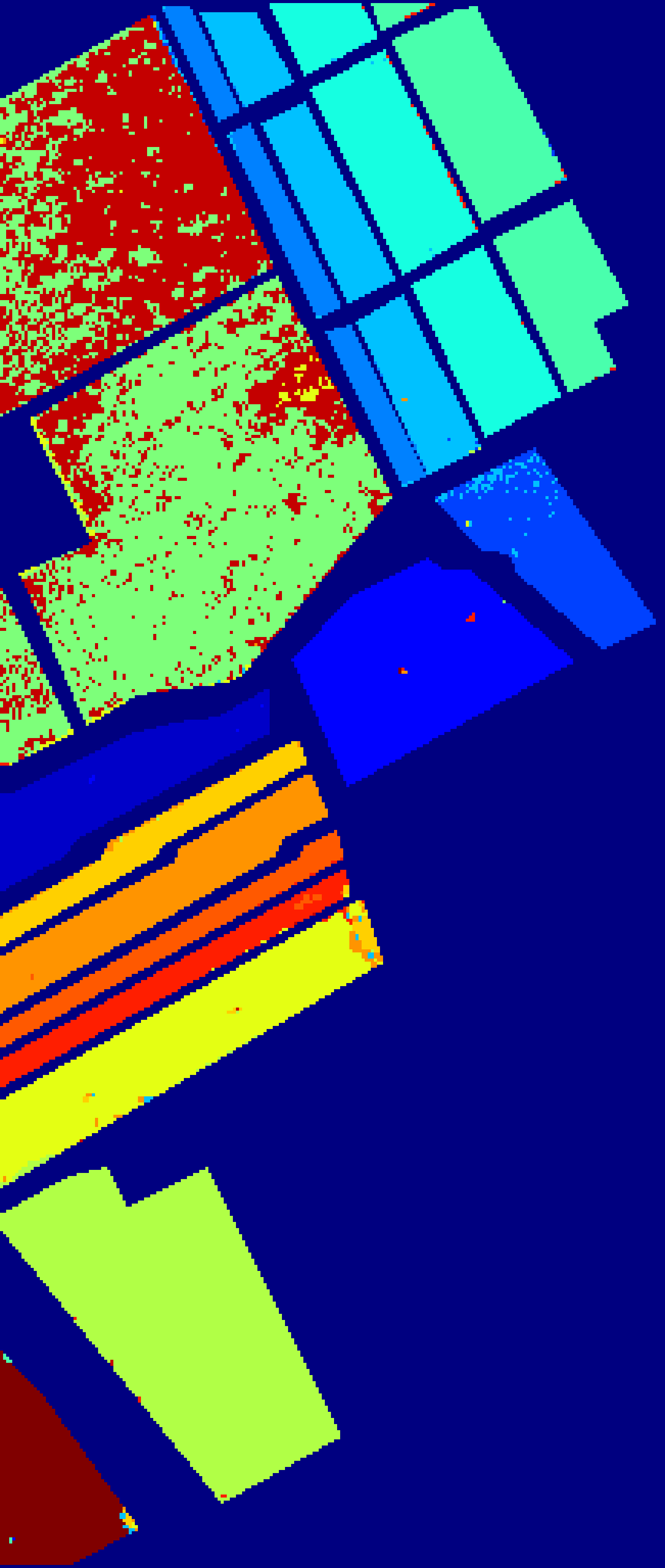}}
\end{minipage}
\begin{minipage}{.16\linewidth}
\centering
\subfloat[NF-RVFL]{\includegraphics[scale=0.41]{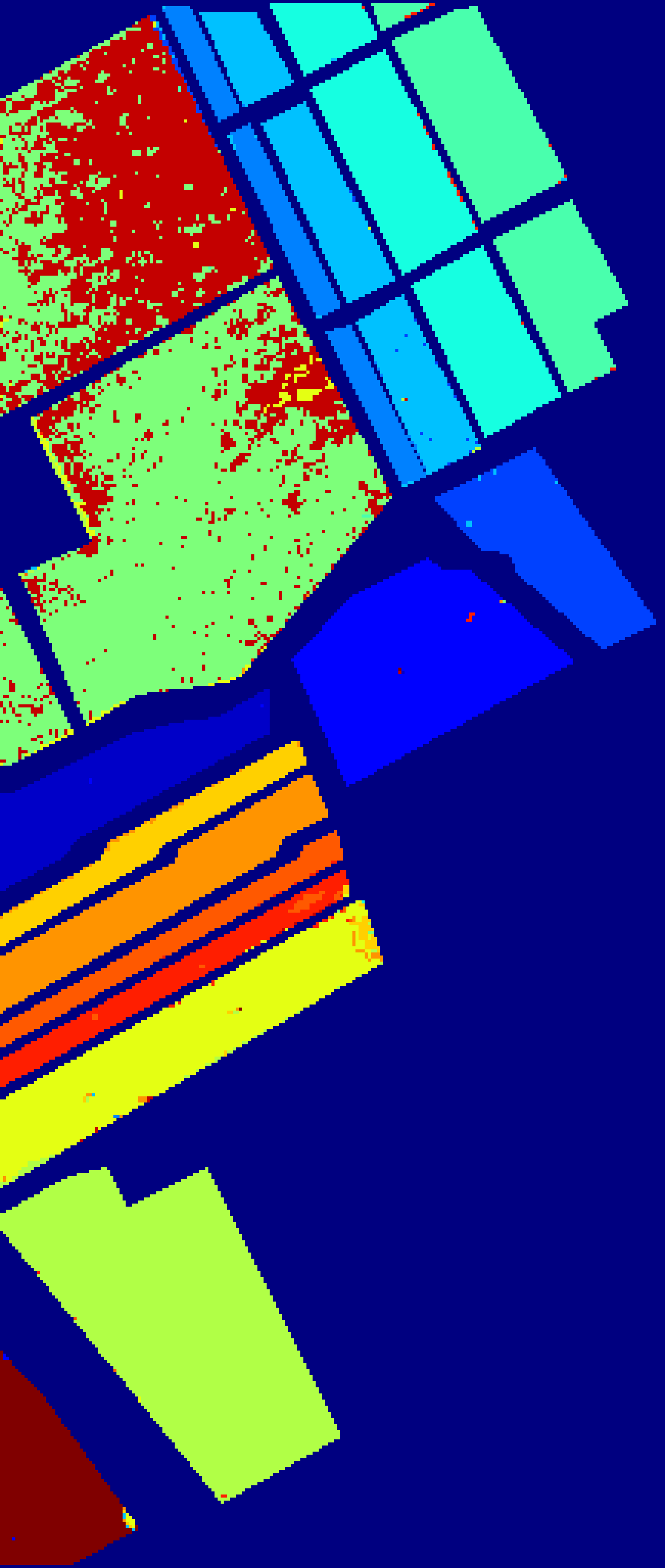}}
\end{minipage}
\begin{minipage}{.16\linewidth}
\centering
\subfloat[RKM]{\includegraphics[scale=0.41]{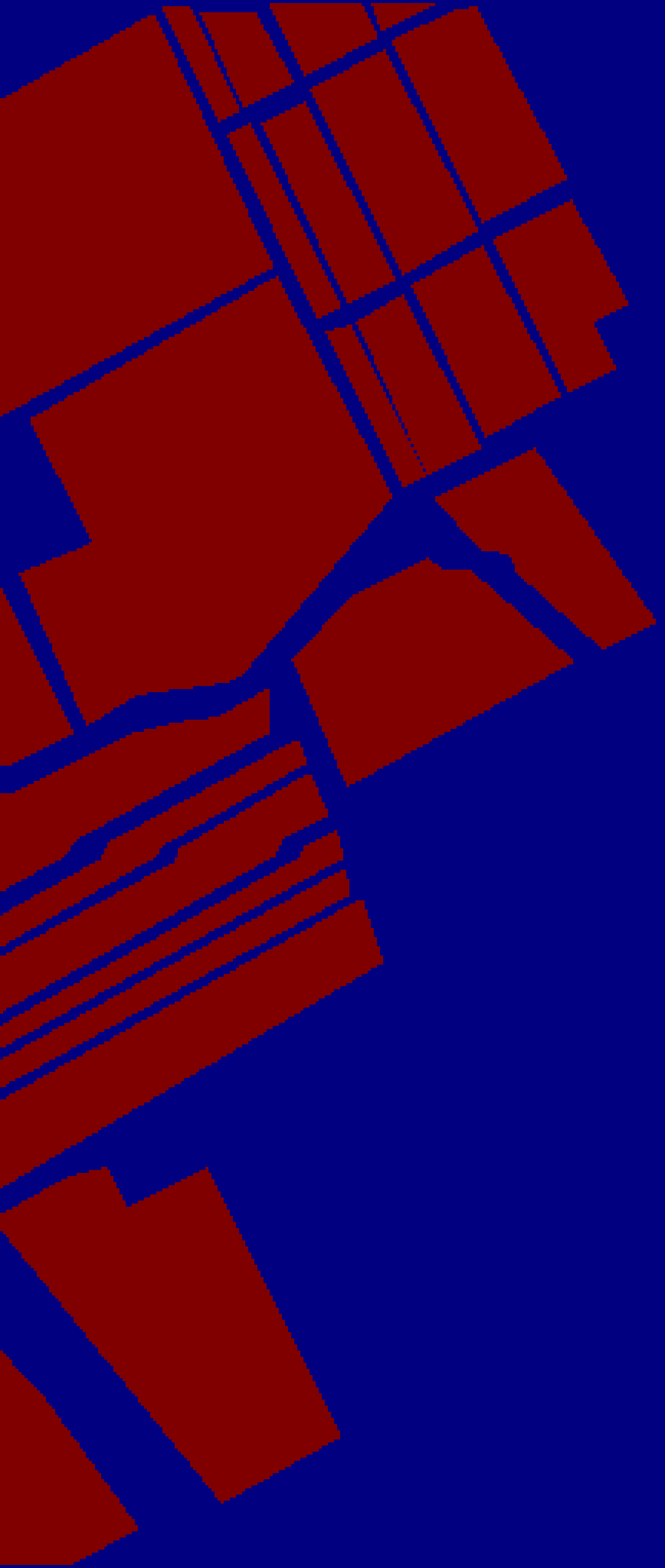}}
\end{minipage}
\begin{minipage}{.16\linewidth}
\centering
\subfloat[$R^2KM$]{\includegraphics[scale=0.41]{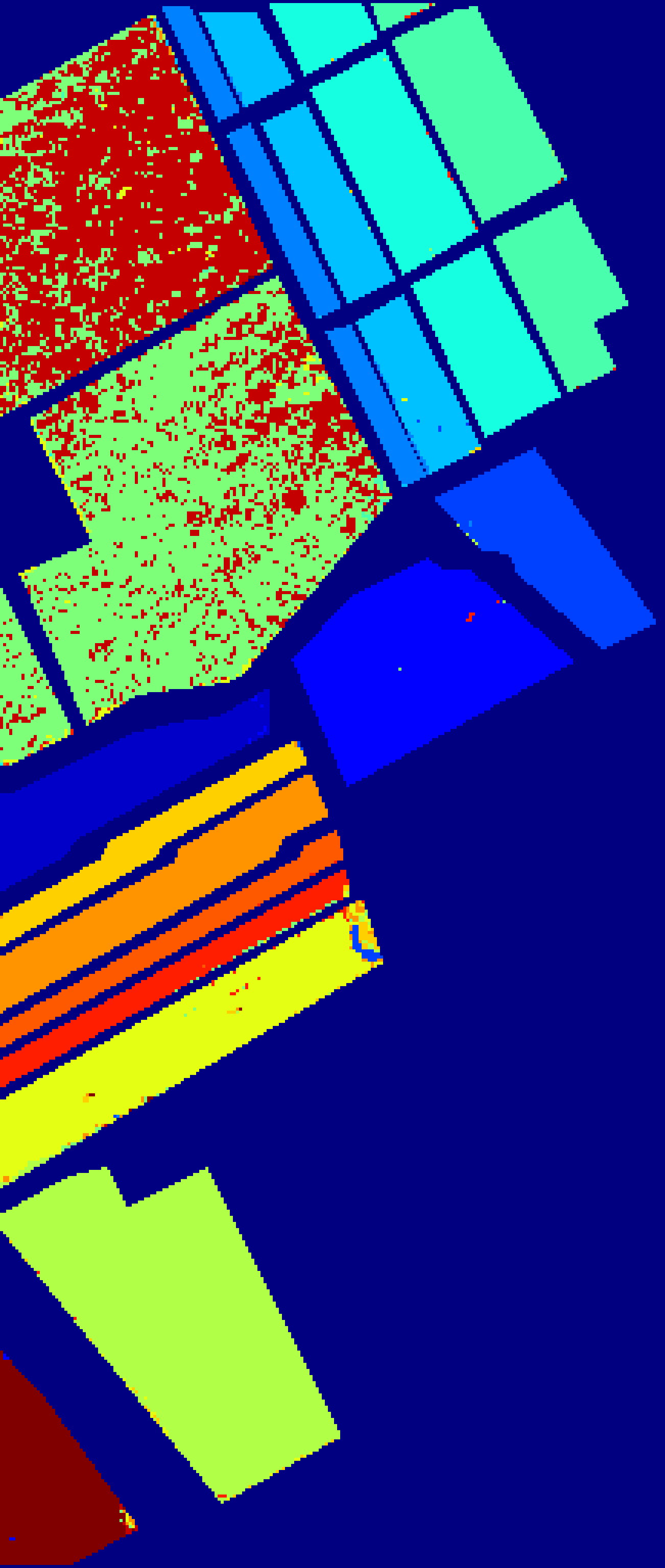}}
\end{minipage}
\caption{Classification outcomes of the proposed $R^2KM$ model and the existing models on the Salinas dataset.}
\label{Classification results on the Salinas dataset}
\end{figure*}

Table \ref{Performance comparison dataset Pavia dataset} shows the classification results of the proposed $R^2KM$ model against the baseline models on the University of Pavia dataset. The proposed $R^2KM$ model attains the highest classification performance, achieving an OA of $87.88\%$, an AA of $90.57\%$, and a Kappa coefficient of $84.16\%$. These results indicate that the $R^2KM$ model achieves the highest accuracy in most categories and reaches a perfect classification accuracy of $99.88\%$. In comparison, the baseline models have difficulty accurately classifying the Bricks and Gravel classes, whereas the proposed model shows a substantial improvement, outperforming the second-best model by $1.29\%$ and $1.30\%$, respectively. Additionally, Fig. \ref{Classification results on the Pavia University dataset} illustrates comparison images that highlight the proposed model's strong classification performance, particularly in the Meadows, Gravel, and Bricks classes, where it demonstrates fewer misclassifications. These results validate the effectiveness of the proposed $R^2KM$ model in extracting spectral features and achieving improved accuracy in feature classification.

\begin{table}[ht!]
\centering
    \caption{Accuracies of the proposed $R^2KM$ against the baseline models over the Salinas dataset}
    \label{Performance comparison dataset Salinas dataset}
    \resizebox{0.85\linewidth}{!}{
\begin{tabular}{cccccc}
\hline
 & RVFLwoDL \cite{huang2006extreme} & RVFL \cite{pao1994learning} & NF-RVFL \cite{sajid2024neuro} & RKM \cite{suykens2017deep} & $R^2KM$ \\ \hline
1 & $99.67$ & $99.72$ & $99.83$ & $99.83$ & $99.72$ \\
2 & $99$ & $98.92$ & $99.23$ & $99.61$ & $99.69$ \\
3 & $99.69$ & $99.84$ & $99.63$ & $97.96$ & $99.48$ \\
4 & $94.51$ & $99.52$ & $93.66$ & $98.61$ & $99.68$ \\
5 & $99.22$ & $99.48$ & $99.09$ & $99.61$ & $99.69$ \\
6 & $99.19$ & $98.72$ & $98.95$ & $97.98$ & $98.22$ \\
7 & $77.87$ & $82.21$ & $77.36$ & $78.82$ & $76.47$ \\
8 & $99.4$ & $99.54$ & $99.4$ & $99.28$ & $99.43$ \\
9 & $99.89$ & $99.89$ & $99.44$ & $98.92$ & $99.33$ \\
10 & $93.39$ & $94.49$ & $92.86$ & $92.29$ & $92.64$ \\
11 & $95.25$ & $96.07$ & $95.35$ & $97.11$ & $99.38$ \\
12 & $99.89$ & $100$ & $99.84$ & $99.89$ & $100$ \\
13 & $99.14$ & $98.9$ & $98.41$ & $99.39$ & $99.51$ \\
14 & $96.49$ & $93.3$ & $94.33$ & $97.63$ & $95.88$ \\
15 & $72.66$ & $70.33$ & $67.35$ & $70.4$ & $77.13$ \\
16 & $98.07$ & $98.89$ & $98.65$ & $98.95$ & $98.3$ \\
\hline
OA & $90.53$ & $91.37$ & $89.54$ & $90.4$ & $91$ \\ \hline
AA & $95.21$ & $95.61$ & $94.59$ & $95.39$ & $95.91$ \\ \hline
Kappa & $89.44$ & $90.37$ & $88.33$ & $89.3$ & $89.97$ \\ \hline
\end{tabular}}
\end{table}

\begin{figure*}[ht!]
\begin{minipage}{.16\linewidth}
\centering
\subfloat[Ground truth]{\includegraphics[scale=0.15]{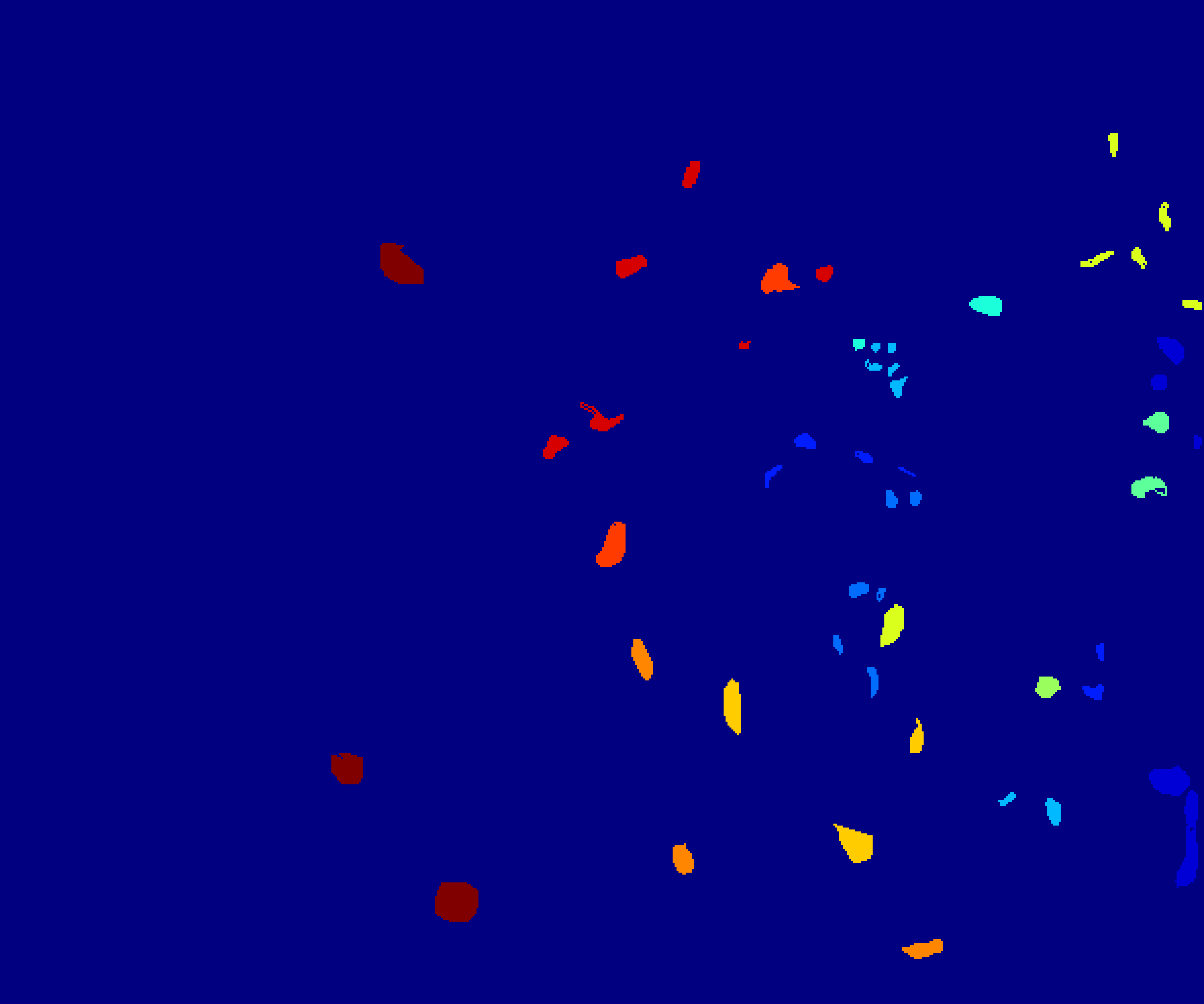}}
\end{minipage}
\begin{minipage}{.16\linewidth}
\centering
\subfloat[RVFL]{\includegraphics[scale=0.15]{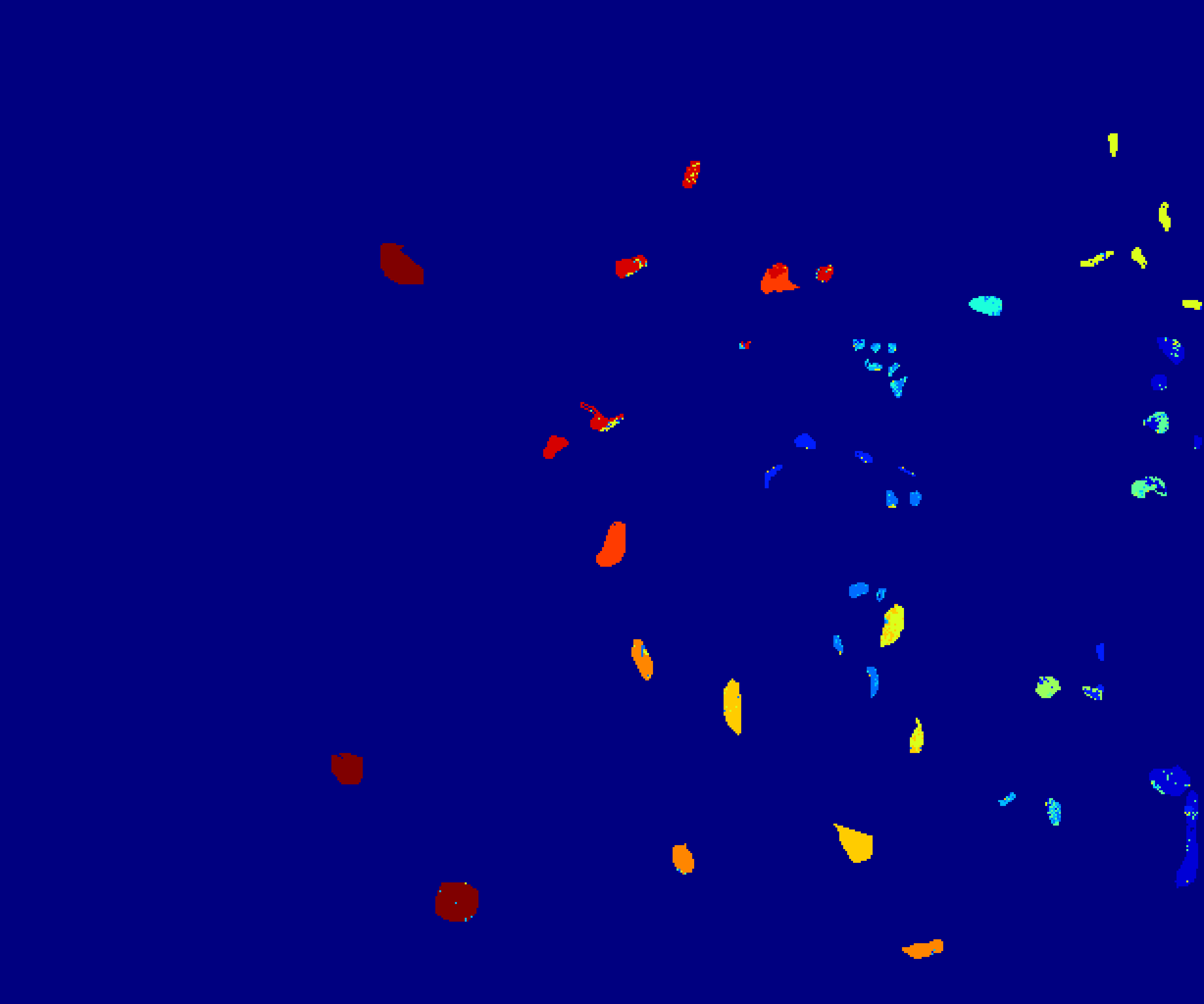}}
\end{minipage}
\begin{minipage}{.16\linewidth}
\centering
\subfloat[RVFLwoDL]{\includegraphics[scale=0.15]{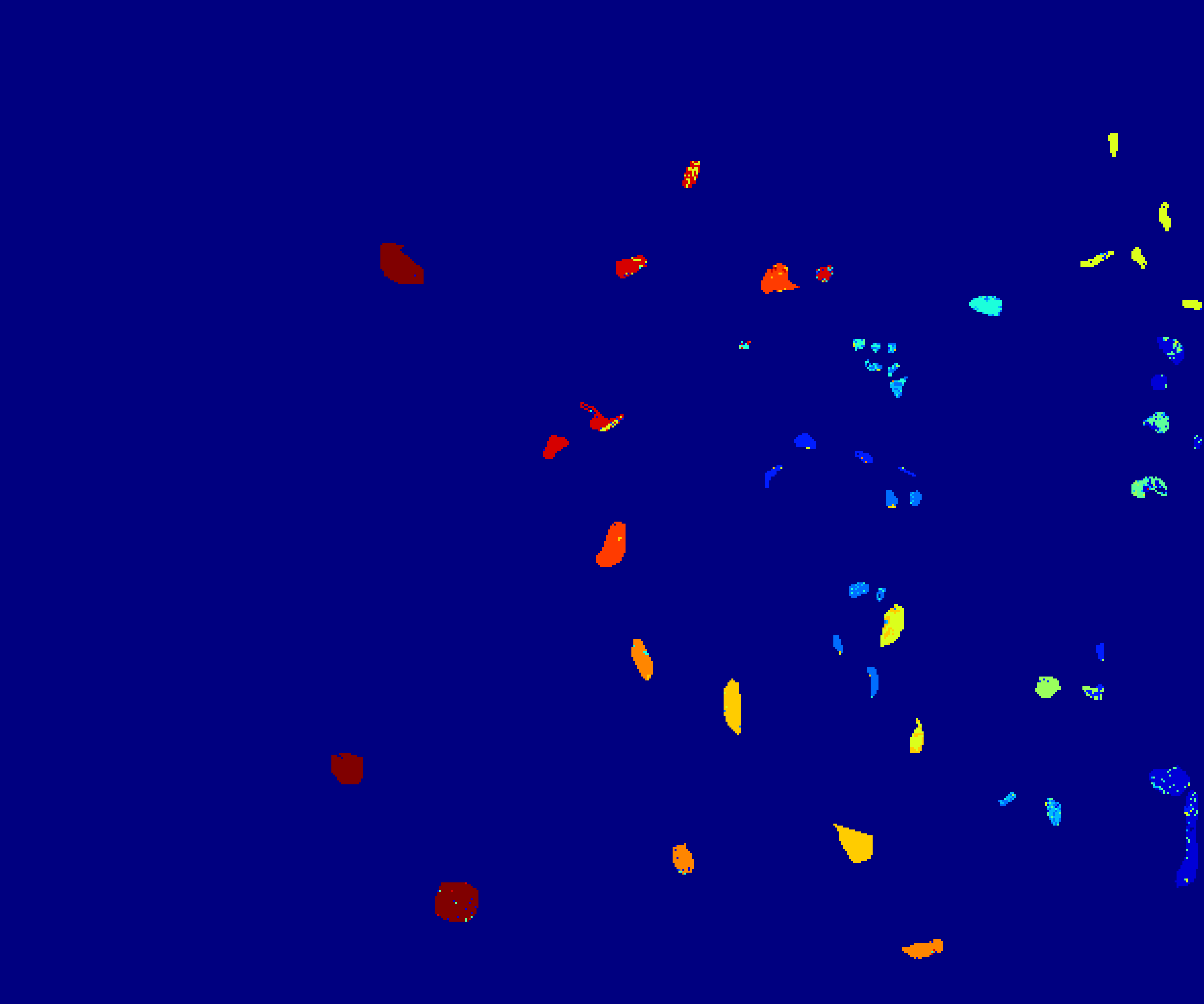}}
\end{minipage}
\begin{minipage}{.16\linewidth}
\centering
\subfloat[NF-RVFL]{\includegraphics[scale=0.15]{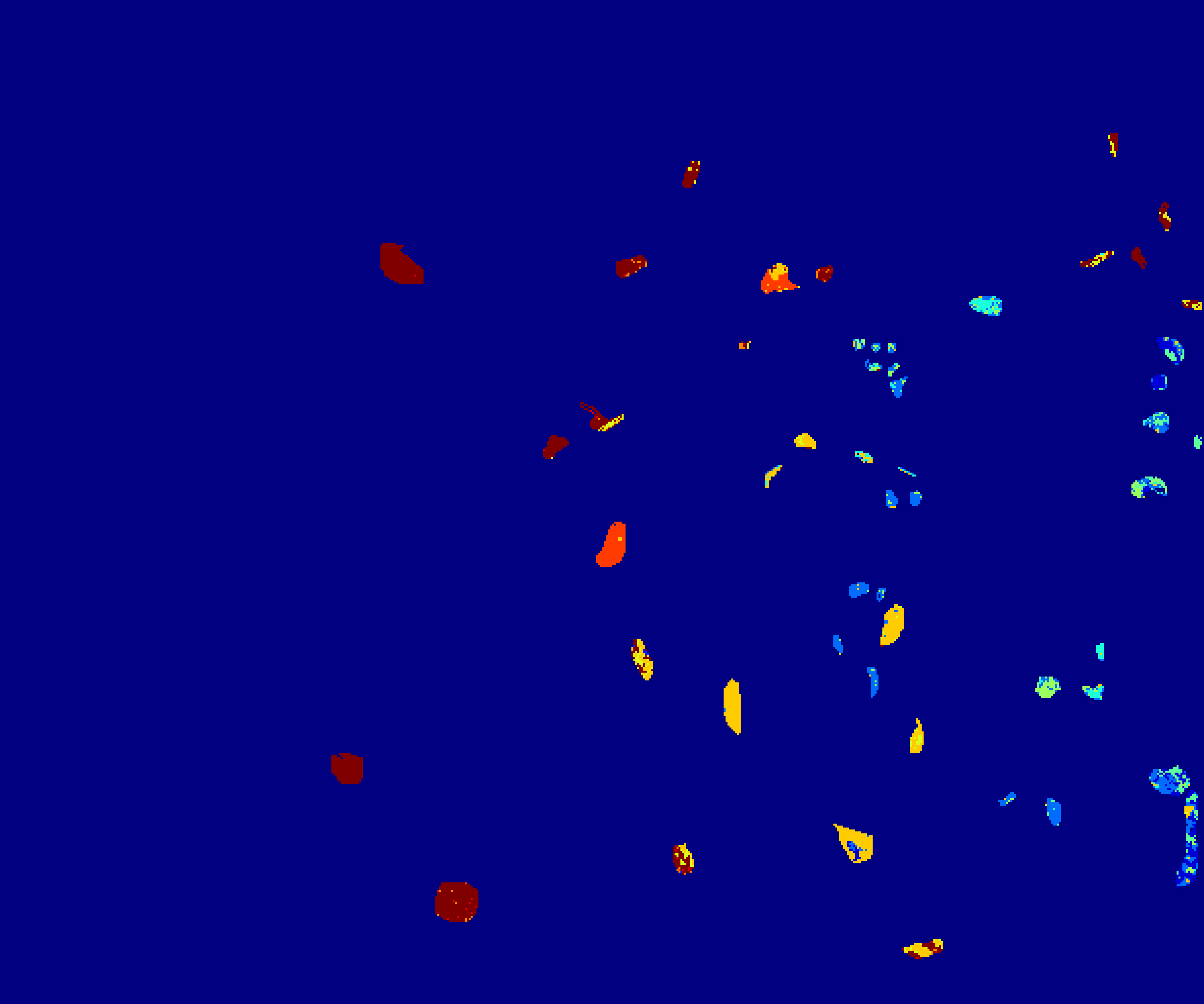}}
\end{minipage}
\begin{minipage}{.16\linewidth}
\centering
\subfloat[RKM]{\includegraphics[scale=0.15]{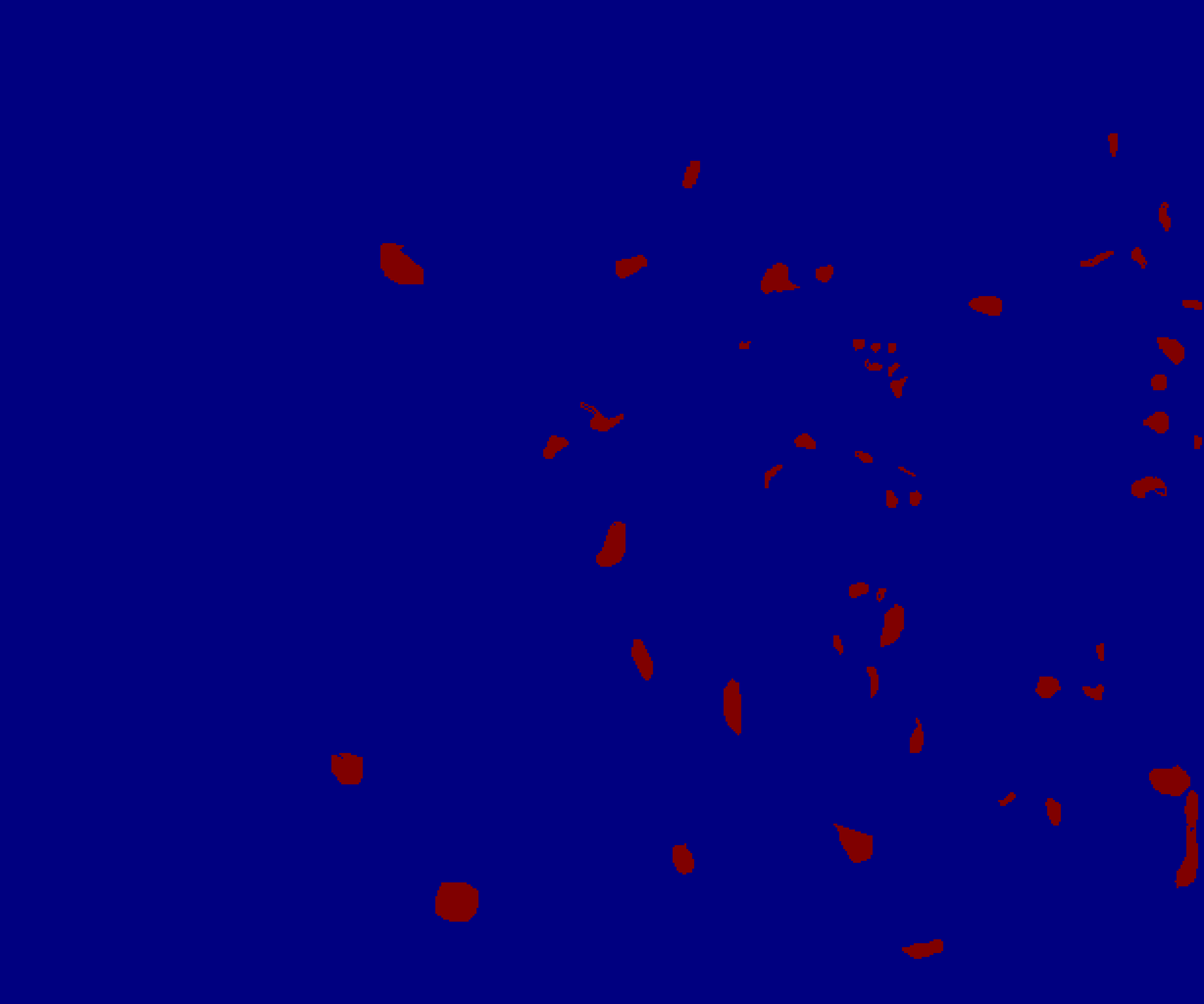}}
\end{minipage}
\begin{minipage}{.16\linewidth}
\centering
\subfloat[$R^2KM$]{\includegraphics[scale=0.15]{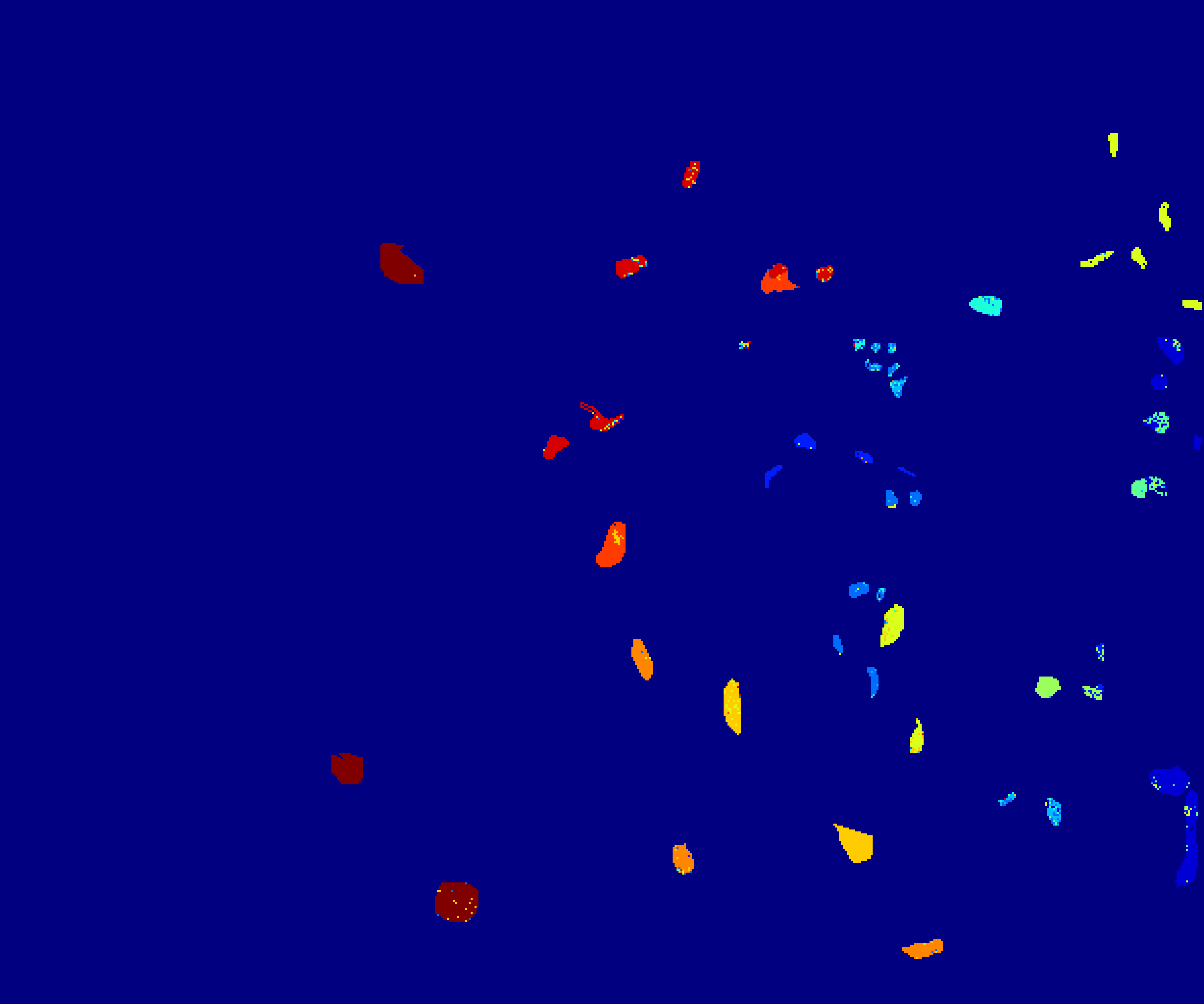}}
\end{minipage}
\caption{Classification outcomes of the proposed $R^2KM$ model and the existing models on the KSC dataset.}
\label{Classification results on the KSC dataset}
\end{figure*}

Table \ref{Performance comparison dataset Salinas dataset} provides a detailed comparison of the experimental accuracy of the $R^2KM$ model with the baseline models on the Salinas dataset. The proposed $R^2KM$ model demonstrates the highest classification performance, with an OA of $91\%$, an AA of $95.91\%$, and a Kappa coefficient of $89.97\%$. Fig. \ref{Classification results on the Salinas dataset} illustrate that, among the compared models, our proposed $R^2KM$ model demonstrates significantly fewer classification errors. Particularly in the Broccoli\_green\_weeds\_2, Broccoli\_green\_weeds\_1, Fallow, Fallow\_rough\_plough and Celery categories, the classification image boundaries are notably clearer, with a significantly reduced number of misclassified points, closely matching the ground truth maps. This demonstrates that the proposed $R^2KM$ model effectively captures the relationships between features in the input images, enhances classifier performance, and aids in the extraction of spatial global features from the imagery.

\begin{table}[ht!]
\centering
    \caption{Accuracies of the proposed $R^2KM$ against the baseline models over the KSC dataset}
    \label{Performance comparison dataset KSC dataset}
    \resizebox{0.85\linewidth}{!}{
\begin{tabular}{cccccc}
\hline
 & RVFLwoDL \cite{huang2006extreme} & RVFL \cite{pao1994learning} & NF-RVFL \cite{sajid2024neuro} & RKM \cite{suykens2017deep} & $R^2KM$ \\ \hline
1 & $97.91$ & $99.34$ & $98.13$ & $97.91$ & $98.13$ \\ 
2 & $78.67$ & $81.37$ & $74.74$ & $72.48$ & $83.44$ \\
3 & $94.49$ & $88.47$ & $95.74$ & $84.21$ & $81.2$ \\
4 & $93.23$ & $93.49$ & $93.49$ & $93.08$ & $93.23$ \\
5 & $88.6$ & $86.4$ & $59.4$ & $88.4$ & $79.8$ \\
6 & $88.32$ & $84.91$ & $89.54$ & $82.41$ & $94.4$ \\
7 & $96.47$ & $84.71$ & $95.29$ & $72.94$ & $98.82$ \\
8 & $58.37$ & $55.98$ & $48.97$ & $51.53$ & $58.37$ \\
9 & $74.47$ & $70.92$ & $73.9$ & $71.23$ & $74.47$ \\
10 & $34.48$ & $34.48$ & $25.86$ & $32.84$ & $36.64$ \\
11 & $87.29$ & $83.05$ & $93.64$ & $90.68$ & $88.56$ \\
12 & $82.96$ & $86.1$ & $83.41$ & $75.45$ & $77.13$ \\
13 & $86.1$ & $90.01$ & $95.41$ & $43.99$ & $93.52$ \\
\hline
OA & $85.38$ & $85.05$ & $81.52$ & $82.55$ & $85.48$ \\ \hline
AA & $81.64$ & $79.94$ & $79.04$ & $73.63$ & $81.36$ \\ \hline
Kappa & $83.7$ & $83.32$ & $79.31$ & $43.17$ & $83.79$ \\ \hline
\end{tabular}}
\end{table}

\begin{table*}[htp]
\centering
    \caption{Accuracies of the proposed $R^2KM$ against the baseline models over UCI and KEEL datasets.}
    \label{Average ACC and average rank for UCI and KEEL datasets}
    \resizebox{0.80\linewidth}{!}{
\begin{tabular}{lccccc}
\hline
Dataset & RVFL \cite{pao1994learning} & RVFLwoDL \cite{huang2006extreme} & NF-RVFL \cite{sajid2024neuro} & RKM \cite{suykens2017deep} & $R^2KM$ \\ 
 $(\#Samples \times \#Feature)$ &    ACC (\%)  &  ACC (\%)  &  ACC (\%) &  ACC (\%)   &   ACC (\%)    \\  
  &   $(\mathcal{C}, N, Act fun)$  &   $(\mathcal{C}, N, Act fun)$  &   $(\mathcal{C}, N, L, Act fun)$  &  $(\gamma, \eta, \sigma)$  &   $(\eta, \lambda, \sigma)$   \\ \hline
bank & 89.17 & 89.31 & 88.95 & 89.46 & 89.54 \\
$(4521 \times 16)$ & $(1, 163, 1)$  & $(10^{3}, 143, 1)$ & $(10^{4}, 43, 35, 8)$  & $(1, 1, 2^{4})$  &  $(10^{-5}, 10^{5}, 2^{2})$ \\
breast\_cancer\_wisc & 97.62 & 98.1 & 97.62 & 97.62 & 98.57 \\
$(286 \times 10)$ & $(10^{-5}, 63, 2)$  & $(10^{-1}, 43, 8)$  & $(10^{2}, 163, 55, 2)$ & $(10^{4}, 10^{3}, 2^{5})$ & $(10^{-5}, 10^{5}, 2^{-5})$  \\
brwisconsin & 97.56 & 97.56 & 98.05 & 97.07 & 97.56 \\
$(683 \times 9)$ & $(10^{-3}, 63, 8)$  & $(10^{4}, 23, 3)$  & $(10^{-1}, 143, 40, 8)$ &  $(10^{1}, 10^{-5}, 2^{4})$ & $(10^{-3}, 10^{5}, 2^{-5})$ \\
checkerboard\_Data & 85.94 & 82.98 & 85.58 & 86.94 & 87.02 \\
$(690 \times 15)$ & $(10^{-3}, 23, 1)$  & $(10^{-3}, 23, 1)$  & $(10^{4}, 203, 15, 5)$  &  $(10^{1}, 10^{3}, 2^{4})$ & $(10^{-5}, 10^{5}, 2^{5})$ \\
chess\_krvkp & 90.41 & 90.2 & 89.65 & 98.27 & 99.48 \\
$(3196 \times 37)$ & $(10^{5}, 83, 2)$  &  $(10^{-3}, 183, 1)$ & $(10^{4}, 83, 15, 6)$ &  $(10^{-1}, 10^{-5}, 2^{5})$ &  $(10^{-5}, 10^{2}, 2^{5})$ \\
conn\_bench\_sonar\_mines\_rocks & 74.6 & 71.43 & 74.6 & 73.65 & 79.37 \\
$(208 \times 60)$ & $(10^{3}, 203, 1)$  & $(10^{3}, 203, 1)$ &  $(10^{-1}, 143, 35, 4)$ &  $(1, 10^{2}, 2^{4})$ & $(10^{-5}, 10^{-5}, 2^{5})$ \\
crossplane150 & 81.11 & 81.11 & 88.89 & 95.56 & 97.78 \\
$(150 \times 2)$ & $(10^{4}, 83, 9)$  & $(10^{4}, 83, 9)$ &  $(1, 3, 50, 6)$ &  $(10^{1}, 10^{2}, 2^{-5})$ & $(10^{-5}, 10^{-4}, 1)$ \\
cylinder\_bands & 72.73 & 72.73 & 73.38 & 75.82 & 79.22 \\
$(512 \times 35)$ & $(10^{-2}, 123, 2)$  &  $(10^{-2}, 123, 2)$ &  $(10^{-1}, 23, 40, 1)$ & $(10^{-5}, 10^{-5}, 2^{3})$ &  $(10^{-5}, 10^{4}, 2^{5})$ \\
ecoli-0-1\_vs\_5 & 98.61 & 95.83 & 97.22 & 97.22 & 97.22 \\
$(240 \times 7)$ &  $(10^{-2}, 23, 4)$  & $(10^{-1}, 143, 4)$  & $(1, 43, 45, 7)$  &  $(10^{-5}, 10^{-5}, 2^{3})$ & $(10^{-2}, 10^{5}, 2^{5})$ \\
ecoli-0-1\_vs\_2-3-5 & 80.59 & 80.59 & 94.59 & 94.59 & 94.59 \\
$(244 \times 8)$ & $(10^{-2}, 143, 1)$  & $(10^{-2}, 143, 1)$  & $(1, 3, 10, 5)$  & $(10^{2}, 10^{2}, 2^{4})$ & $(10^{-5}, 10^{-5}, 1)$ \\
ecoli0137vs26 & 85.74 & 86.81 & 94.68 & 92.81 & 94.68 \\
$(311 \times 7)$ & $(10^{-4}, 83, 9)$  & $(10^{-2}, 63, 9)$ & $(1, 203, 35, 8)$ & $(10^{2}, 10^{-5}, 2^{1})$ & $(10^{-5}, 10^{2}, 2^{5})$ \\
ecoli-0-4-6\_vs\_5 & 96.72 & 96.72 & 100 & 98.36 & 98.36 \\
$(203 \times 6)$ & $(10^{5}, 163, 1)$  & $(10^{5}, 163, 1)$ & $(10^{2}, 63, 5, 8)$ &  $(10^{-5}, 10^{3}, 1)$ & $(10^{-5}, 10^{1}, 2^{1})$ \\
ecoli-0-6-7\_vs\_5 & 91.94 & 91.97 & 95.08 & 93.94 & 95.45 \\
$(222 \times 7)$ & $(10^{2}, 123, 2)$  & $(10^{-1}, 123, 2)$ & $(10^{1}, 143, 25, 7)$ & $(10^{-5}, 10^{-5}, 1)$ & $(10^{-5}, 10^{-5}, 2^{5})$  \\
ecoli-0-1-4-6\_vs\_5 & 98.81 & 98.81 & 96.43 & 98.81 & 98.81 \\
$(280 \times 7)$ &  $(10^{-2}, 103, 4)$   &  $(10^{-3}, 103, 4)$  & $(10^{1}, 163, 30, 4)$  & $(10^{-5}, 10^{1}, 2^{-1})$  &  $(10^{-5}, 10^{4}, 1)$ \\
ecoli-0-3-4-6\_vs\_5 & 98.39 & 96.77 & 100 & 93.55 & 98.39 \\
$(205 \times 7)$ & $(10^{-1}, 163, 1)$  & $(10^{2}, 23, 3)$ & $(10^{1}, 63, 35, 2)$ & $(10^{-4}, 10^{2}, 2^{-1})$ & $(10^{-5}, 10^{-5}, 1)$ \\
ecoli2 & 82.08 & 81.09 & 88.12 & 90.1 & 90.1 \\
$(336 \times 7)$ & $(10^{-2}, 43, 4)$  & $(10^{-2}, 43, 4)$ & $(10^{-2}, 183, 15, 3)$ & $(10^{1}, 10^{-5}, 1)$ & $(10^{-5}, 10^{4}, 2^{1})$ \\
fertility & 90 & 80 & 73.33 & 83.33 & 90 \\
$(100 \times 9)$ & $(10^{-2}, 63, 1)$  &  $(10^{3}, 203, 7)$ & $(10^{5}, 123, 45, 6)$ & $(10^{-4}, 10^{5}, 2^{4})$ & $(10^{-5}, 10^{4}, 2^{5})$ \\
haberman & 76.09 & 76.09 & 77.17 & 76.09 & 76.09 \\
$(306 \times 4)$ & $(10^{-4}, 143, 9)$  & $(10^{1}, 3, 3)$  & $(10^{1}, 123, 50, 5)$  &  $(10^{-1}, 10^{1}, 1)$ & $(10^{-3}, 10^{5}, 1)$ \\
heart\_hungarian & 72.78 & 72.65 & 79.78 & 82.02 & 82.02 \\
$(294 \times 12)$ & $(10^{-5}, 123, 4)$  & $(10^{-2}, 143, 2)$  & $(10^{-3}, 183, 5, 4)$  & $(10^{1}, 10^{-5}, 2^{4})$ & $(10^{-4}, 10^{5}, 2^{1})$ \\
heart-stat & 81.89 & 81.89 & 87.65 & 87.65 & 88.89 \\
$(270 \times 14)$ & $(10^{-1}, 23, 1)$   & $(10^{-2}, 163, 5)$  & $(10^{4}, 3, 20, 8)$  & $(10^{1}, 10^{3}, 2^{4})$  & $(10^{-5}, 10^{-5}, 2^{-1})$ \\
hill\_valley & 68.41 & 67.31 & 66.7 & 70.33 & 76.1 \\
$(1212 \times 100)$ &  $(10^{1}, 183, 9)$ & $(10^{1}, 183, 9)$ & $(10^{-5}, 3, 5, 2)$ & $(10^{-2}, 10^{-5}, 2^{4})$ & $(10^{-5}, 10^{1}, 2^{-5})$ \\
led7digit-0-2-4-5-6-7-8-9\_vs\_1 & 94.74 & 94.74 & 92.48 & 94.74 & 96.24 \\
$(443 \times 8)$ &  $(1, 23, 5)$ & $(10^{-2}, 83, 1)$  & $(10^{-1}, 123, 40, 3)$  & $(10^{1}, 10^{-5}, 2^{2})$  &  $(10^{-5}, 10^{4}, 2^{4})$ \\
monks\_3 & 90.41 & 90.41 & 90.92 & 96.41 & 98.2 \\
$(554 \times 7)$ &  $(10^{-5}, 43, 1)$  & $(10^{-5}, 103, 1)$  & $(10^{2}, 3, 40, 2)$  &  $(10^{5}, 10^{2}, 2^{-5})$  & $(10^{-3}, 10^{5}, 2^{4})$ \\
monks\_2 & 80.66 & 75.14 & 76.09 & 81.77 & 81.77 \\
$(601 \times 7)$ & $(10^{5}, 143, 4)$  & $(10^{5}, 183, 3)$ & $(10^{1}, 203, 5, 9)$ & $(10^{-5}, 10^{3}, 1)$ & $(10^{-5}, 10^{-5}, 1)$ \\
monks\_1 & 81.44 & 82.04 & 94.31 & 94.01 & 95.81 \\
$(556 \times 6)$ & $(10^{5}, 3, 8)$  & $(10^{3}, 3, 6)$ & $(10^{2}, 3, 10, 7)$ & $(10^{5}, 10^{2}, 2^{-5})$ & $(10^{-3}, 10^{5}, 2^{5})$ \\
new-thyroid1 & 86 & 86 & 100 & 98.46 & 98.46 \\
$(215 \times 5)$ & $(10^{1}, 163, 1)$  & $(10^{1}, 163, 1)$ & $(10^{-1}, 123, 15, 2)$ & $(1, 10^{3}, 2^{1})$ & $(10^{-5}, 10^{-5}, 2^{-3})$ \\
oocytes\_trisopterus\_nucleus\_2f & 80.12 & 80.94 & 81.02 & 87.96 & 85.77 \\
$(912 \times 25)$ & $(10^{2}, 63, 1)$  & $(10^{-1}, 203, 9)$ & $(10^{1}, 123, 15, 3)$  & $(10^{-1}, 1, 2^{5})$ & $(10^{-5}, 10^{4}, 2^{5})$ \\
oocytes\_merluccius\_nucleus\_4d & 80.71 & 82.74 & 75.18 & 75.06 & 83.71 \\
$(1022 \times 42)$ & $(10^{1}, 43, 9)$  & $(10^{-1}, 183, 2)$  & $(10^{4}, 23, 5, 7)$   &  $(10^{-1}, 10^{-1}, 2^{5})$ &  $(10^{-5}, 10^{3}, 2^{5})$  \\
parkinsons & 74.75 & 83.22 & 89.83 & 88.14 & 96.61 \\
$(195 \times 22)$ &  $(10^{5}, 203, 7)$ & $(1, 163, 3)$ & $(10^{2}, 203, 5, 3)$ & $(10^{-2}, 10^{-5}, 2^{1})$  & $(10^{-5}, 10^{3}, 2^{2})$ \\
ripley & 86.33 & 86.33 & 90.4 & 89.07 & 89.33 \\
$(1250 \times 2)$ & $(10^{2}, 203, 9)$  & $(10^{2}, 203, 9)$ & $(10^{4}, 103, 30, 6)$ & $(10^{-1}, 10^{-5}, 2^{-3})$ &  $(10^{-5}, 10^{5}, 2^{-5})$ \\
tic\_tac\_toe & 96.65 & 96.65 & 99.65 & 100 & 100 \\
$(958 \times 10)$ & $(10^{2}, 183, 2)$  & $(10^{3}, 183, 9)$  &  $(10^{-1}, 103, 15, 4)$ & $(10^{-5}, 10^{3}, 2^{-2})$  & $(10^{-5}, 10^{-5}, 2^{5})$  \\
vertebral\_column\_2clases & 81.4 & 82.25 & 84.95 & 89.25 & 91.4 \\
$(310 \times 7)$ &  $(10^{2}, 3, 4)$  & $(10^{1}, 23, 9)$  &  $(1, 43, 45, 9)$ & $(1, 10^{-5}, 2^{5})$  &  $(10^{-5}, 10^{4}, 2^{3})$ \\
votes & 90.47 & 90.47 & 96.95 & 97.71 & 97.71 \\
$(435 \times 16)$ & $(10^{-2}, 63, 2)$  & $(10^{-2}, 63, 9)$ & $(1, 83, 30, 2)$  & $(10^{-1}, 10^{-5}, 2^{3})$ & $(10^{-5}, 10^{4}, 2^{5})$ \\
vowel & 99.33 & 98.99 & 95.62 & 100 & 100 \\
$(988 \times 10)$ & $(10^{-1}, 203, 4)$  & $(10^{-1}, 203, 4)$ & $(10^{3}, 83, 30, 5)$ & $(10^{-5}, 10^{3}, 1)$ &  $(10^{-5}, 10^{-5}, 1$ \\
wpbc & 69.49 & 70.97 & 79.66 & 77.97 & 83.05 \\
$(194 \times 34)$ &  $(10^{-1}, 103, 5)$  & $(10^{-1}, 143, 3)$  &  $(10^{1}, 203, 25, 3)$  &  $(10^{1}, 10^{-1}, 2^{4})$  & $(10^{-5}, 10^{4}, 2^{5})$   \\
yeast-2\_vs\_4 & 96.77 & 95.48 & 88.39 & 96.77 & 97.42 \\
$(514 \times 9)$ &  $(10^{2}, 63, 2)$  &  $(10^{2}, 43, 2)$  & $(10^{5}, 163, 15, 5)$    & $(10^{1}, 10^{-1}, 2^{3})$  & $(10^{-5}, 10^{5}, 1)$ \\
yeast-0-2-5-6\_vs\_3-7-8-9 & 89.71 & 89.71 & 94.37 & 94.04 & 94.04 \\
$(1004 \times 9)$ & $(10^{3}, 143, 9)$  &  $(10^{5}, 63, 9)$  &  $(10^{2}, 83, 35, 9)$  & $(10^{2}, 10^{3}, 2^{4})$  & $(10^{-4}, 10^{5}, 2^{2})$  \\
yeast3 & 86.72 & 85.07 & 92.83 & 93.95 & 93.95 \\ 
$(1484 \times 9)$ & $(10^{3}, 123, 7)$  & $(10^{-3}, 83, 1)$  & $(10^{1}, 163, 10, 9)$   &  $(10^{1}, 10^{-1}, 2^{2})$ & $(10^{-5}, 10^{5}, 2^{2})$  \\   \hline
Average ACC & 86.23 & 85.82 & 88.69 & 90.22 & 91.91 \\  \hline
Average rank & 3.8 & 4.09 & 2.96 & 2.58 & 1.57 \\ \hline
\end{tabular}}
\end{table*}

\begin{table*}[ht!]
\centering
    \caption{RMSE, MAE, Poss Error, and Neg Error values on the benchmark UCI datasets for the proposed $R^2KM$ model and the baseline models.}
    \label{Average ACC and average rank for regression datasets}
    \resizebox{0.9\linewidth}{!}{
\begin{tabular}{llccccc} 
\hline
Dataset &  & RVFL \cite{pao1994learning} & RVFLwoDL \cite{huang2006extreme} & NF-RVFL \cite{sajid2024neuro} & RKM \cite{suykens2017deep} & $R^2KM$ \\  \hline
Abalone & RMSE & $0.00009582$ & $0.00006751$ & $0.00037614$ & $0.00044776$ & $0.00005741$ \\
$(4117 \times 7)$ & MAE & $0.00000715$ & $0.00000648$ & $0.00000192$ & $0.00000301$ & $0.00000128$ \\
 & Pos Error & $0.00000893$ & $0.00000627$ & $0.00000221$ & $0.00000309$ & $0.00000136$ \\
 & Neg Error & $0.00000893$ & $0.00000315$ & $0.00000171$ & $0.00000293$ & $0.00000118$ \\ \hline
Airfoil\_Self\_Noise & RMSE & $0.00009255$ & $0.00044539$ & $0.00023369$ & $0.00033556$ & $0.00018123$ \\ 
$(1503 \times 5)$ & MAE & $0.00000057$ & $0.00000363$ & $0.00000682$ & $0.00000518$ & $0.00000014$ \\
 & Pos Error & $0.00000043$ & $0.00000301$ & $0.00000835$ & $0.00000535$ & $0.00000016$ \\
 & Neg Error & $0.00000043$ & $0.00000552$ & $0.00000792$ & $0.00000503$ & $0.00000012$ \\ \hline
California\_Housing & RMSE & $0.00000572$ & $0.00026824$ & $0.00008663$ & $0.00015053$ & $0.00008073$ \\
$(20640 \times 8)$ & MAE & $0.00009267$ & $0.00005778$ & $0.00001526$ & $0.00003775$ & $0.00000985$ \\
 & Pos Error & $0.00000082$ & $0.00000812$ & $0.00000152$ & $0.00000211$ & $0.00004119$ \\
 & Neg Error & $0.00000082$ & $0.00589867$ & $0.00000153$ & $0.00000683$ & $0.00000985$ \\ \hline
Delta\_Ailerons & RMSE & $0.00030406$ & $0.00041134$ & $0.00847544$ & $0.00194797$ & $0.00015698$ \\
$(7129 \times 5)$ & MAE & $0.00000022$ & $0.00001325$ & $0.00000022$ & $0.00001439$ & $0.00000033$ \\
 & Pos Error & $0.00000023$ & $0.00001325$ & $0.00000023$ & $0.00001488$ & $0.00000024$ \\
 & Neg Error & $0.00000023$ & $0.00000953$ & $0.00000021$ & $0.00001393$ & $0.00000037$ \\ \hline
Kinematics\_Robot\_Arm & RMSE & $0.00002883$ & $0.02627194$ & $0.00005232$ & $0.00555962$ & $0.00002045$ \\
$(8192 \times 8)$ & MAE & $0.00000002$ & $0.02170296$ & $0.00000002$ & $0.00005858$ & $0.00000095$ \\
 & Pos Error & $0.00000002$ & $0.02154508$ & $0.00000002$ & $0.00005843$ & $0.00000128$ \\
 & Neg Error & $0.00000002$ & $0.02187257$ & $0.00000002$ & $0.00005895$ & $0.00000073$ \\ \hline
Parkinsons\_Telemonitoring & RMSE & $0.00009335$ & $0.00126815$ & $0.00000028$ & $0.00011901$ & $0.00000027$ \\
$(5875 \times 21)$ & MAE & $0.00004332$ & $0.00216741$ & $0.00002786$ & $0.00018996$ & $0.00000397$ \\
 & Pos Error & $0.00006573$ & $0.00337333$ & $0.00003306$ & $0.00011981$ & $0.00000281$ \\
 & Neg Error & $0.00006573$ & $0.00101106$ & $0.00001798$ & $0.00024732$ & $0.00000423$ \\ \hline
Pole\_Telecomm & RMSE & $0.00000048$ & $0.00400381$ & $0.00014462$ & $0.00013515$ & $0.00000046$ \\
$(15000 \times 48)$ & MAE & $0.00004867$ & $0.00103312$ & $0.00003865$ & $0.00027685$ & $0.00000382$ \\
 & Pos Error & $0.00000048$ & $0.00102632$ & $0.00000044$ & $0.00028667$ & $0.00000418$ \\
 & Neg Error & $0.00000048$ & $0.00104096$ & $0.00000025$ & $0.00026795$ & $0.00000379$ \\ \hline
Triazines & RMSE & $0.00479105$ & $0.83598033$ & $0.00000253$ & $0.00000449$ & $0.00000265$ \\
$(186 \times 60)$ & MAE & $0.00072627$ & $0.58658267$ & $0.00037521$ & $0.34107194$ & $0.00957156$ \\
 & Pos Error & $0.00059803$ & $0.50795267$ & $0.00052313$ & $0.2528012$ & $0.00757198$ \\
 & Neg Error & $0.00006934$ & $0.64555504$ & $0.00020453$ & $0.41225801$ & $0.01157115$ \\ \hline
Yacht\_Hydrodynamics & RMSE & $0.00001585$ & $0.00040787$ & $0.00008135$ & $0.00037425$ & $0.00001718$ \\
$(308 \times 6)$ & MAE & $0.00000128$ & $0.00032524$ & $0.00000577$ & $0.00029559$ & $0.00001697$ \\
 & Pos Error & $0.00008546$ & $0.00044093$ & $0.00000656$ & $0.00029196$ & $0.00007589$ \\
 & Neg Error & $0.00000876$ & $0.00020703$ & $0.00000462$ & $0.00029899$ & $0.00001697$ \\ \hline
Average RMSE &  & $0.00060308$ & $0.0965694$ & $0.00105033$ & $0.00100826$ & $0.00005748$ \\ \hline
Average Rank &  & $2.11$ & $4.44$ & $3.11$ & $3.89$ & $1.44$ \\  \hline
\end{tabular}}
\end{table*}

Table \ref{Performance comparison dataset KSC dataset} illustrates that the performance of our proposed $R^2KM$ model substantially exceeds that of the baseline models. The $R^2KM$ model demonstrates superior classification accuracy, achieving an OA of $85.48\%$, an AA of $81.36\%$, and Kappa coefficient of $83.79\%$, which is notably higher than the best-performing baseline models. This enhanced performance is reflected in its robust ability to correctly classify various ground object categories, including the challenging ones. Our model's capability to effectively utilize spectral features and capture complex patterns in the hyperspectral data is evident from its improved accuracy metrics compared to the baseline models. This confirms the $R^2KM$ model's effectiveness in handling the diverse and intricate nature of hyperspectral data in the KSC dataset. Fig. \ref{Classification results on the KSC dataset} highlights the reduced number of misclassifications in the KSC dataset achieved by our proposed $R^2KM$ model. The $R^2KM$ model exhibits higher classification accuracy than the baseline models, showing significantly fewer errors and a closer alignment with the ground truth. This clearly indicates the effectiveness of our model in minimizing misclassifications and enhancing overall performance.

The experimental results from various datasets, including Indian Pines, University of Pavia, Salinas, and KSC, underscore the enhanced performance of the proposed $R^2KM$ model when compared to the baseline models. Our model consistently outperforms existing models in terms of classification accuracy, as evidenced by higher overall accuracies and fewer misclassifications. Specifically, on the Indian Pines and University of Pavia datasets, our model achieved remarkable improvements in accuracy and reduced misclassifications, showcasing its robustness in various scenarios. Likewise, the proposed model demonstrated improved classification accuracy and a significant reduction in errors for both the Salinas and KSC datasets, outperforming existing models. These results collectively affirm the effectiveness and superiority of the $R^2KM$ model across various hyperspectral image datasets.

\subsection{Results and Discussions on UCI and KEEL Datasets for Classification}

In this section, we conduct an in-depth comparison between the proposed $R^2KM$ model and the baseline models. The comparison is carried out across $38$ benchmark datasets from UCI \cite{dua2017uci} and KEEL \cite{derrac2015keel} repositories. Table \ref{Average ACC and average rank for UCI and KEEL datasets} presents the average accuracy (ACC) of the proposed models compared to the existing models. Table \ref{Average ACC and average rank for UCI and KEEL datasets} displays the optimal hyperparameters for the proposed $R^2KM$ model and the existing models. The average ACC of $R^2KM$ model, and the existing models RVFL, RVFLwoDL, NF-RVFL, and RKM, are $91.91\%$, $86.23\%$, $85.82\%$, $88.69\%$, and $90.22\%$, respectively. The proposed $R^2KM$ model secured the highest average ACC. This indicates that the proposed $R^2KM$ model demonstrates a strong level of confidence in its predictive performance. To overcome the challenges associated with relying solely on average ACC, we implemented a series of statistical tests following the recommendations of \citet{demvsar2006statistical}. These tests are specifically designed for assessing classifier performance across various datasets, especially in situations where the assumptions required for parametric tests are not fulfilled. Through the incorporation of statistical tests, we aim to conduct a thorough assessment of the models' performance, allowing us to formulate broad and impartial conclusions about their effectiveness. In the ranking system, every model is assigned a rank based on its performance across different datasets, which enables a detailed assessment of its overall effectiveness. Models that perform poorly are assigned higher ranks, while those with better performance receive lower ranks. This approach takes into account the compensatory effect, whereby strong performance on certain datasets can offset weaker performance on others. To assess \(k\) models across \(N\) datasets, the rank of the \(q^{th}\) model on the \(p^{th}\) dataset is denoted as \(\mathscr{R}_q^p\). The average rank for the \(q^{th}\) model is calculated as follows: $\mathscr{R}_q = \frac{1}{N}\sum_{p=1}^N\mathscr{R}_q^p$. The average rank of the proposed $R^2KM$ model along with the existing RVFL, RVFLwoDL, NF-RVFL, and RKM models are $1.57$, $3.80$, $4.09$, $2.96$, and $2.58$, respectively. The proposed $R^2KM$ model attained the lowest average rank among the models evaluated. Since a lower rank indicates superior performance, the proposed $R^2KM$ model is identified as the top-performing model. By analyzing the average rankings of the models, the Friedman test \cite{friedman1937use} identifies any significant differences among them. The Friedman test is used to evaluate and compare the performance of the models across various datasets. According to the null hypothesis, all models are assumed to have the same average rank, indicating that their performance levels are comparable. The Friedman test utilizes the chi-squared distribution, denoted as \(\chi_F^2\), which has \((k-1)\) degrees of freedom. The test involves calculating: $\chi_F^2 = \frac{12N}{k(k+1)}\left[ \sum_q \mathscr{R}_q^2 - \frac{k(k+1)^2}{4}  \right]$. The $F_F$ statistic is determined using the formula: $F_F = \frac{(N-1)\chi^2_F}{N(k-1) - \chi^2_F}$, where the $F$-distribution has degrees of freedom $(k - 1)$ and \( (N - 1) \times (k - 1) \). For $k=5$ and $N=38$, we get $\chi_F^2 = 61.5752$ and $F_F = 25.1953$. From the $F$-distribution table at a $5\%$ significance level, the critical value is $F(4, 148) = 2.4328$. Since \( F_F > 2.4328 \), we conclude that the null hypothesis can be rejected, indicating the presence of significant differences between the models. Consequently, we utilize the Nemenyi post hoc test \cite{demvsar2006statistical} to further investigate the pairwise differences among the various models. The critical difference (C.D.) is determined by \( \text{C.D.} = q_\alpha \times \sqrt{{k(k+1)}/{6N}} \), where \( q_\alpha \) denotes the critical value derived from the distribution table specific to the two-tailed Nemenyi test. From the \(F\)-distribution table, the value of C.D. is calculated to be \(0.9895\), where \(q_\alpha = 2.728\) at a significance level of \(5\%\). The variations in average ranks between the proposed $R^2KM$ model and the baseline models, including RVFL, RVFLwoDL, NF-RVFL, and RKM, are $2.23$, $2.52$, $1.39$, and $1.01$, respectively. The results of the Nemenyi post hoc test indicate that the proposed $R^2KM$ model demonstrates statistically significant superiority over the baseline models. We determine that the proposed $R^2KM$ model outperforms the current models in terms of both ranking and overall effectiveness. 

\subsection{Results and Discussions on UCI Datasets for Regression}
Here, we discuss the performance of the proposed $R^2KM$ model by using $9$ benchmark regression datasets from the UCI repository \cite{dua2017uci}. The results are compared against the baseline models: RVFL, RVFLwoDL, NF-RVFL, and RKM. Table \ref{Average ACC and average rank for regression datasets} shows the experimental results of the proposed $R^2KM$ model and the existing models on UCI datasets. The evaluation is based on metrics such as RMSE, MAE, Pos Error, and Neg Error. The evaluation primarily emphasizes RMSE, a crucial metric where lower values indicate superior model performance. The proposed $R^2KM$ model performs well by achieving the lowest RMSE values in $5$ of the $9$ datasets. For the other $3$ datasets, the $R^2KM$ model secures the second-lowest RMSE values. The $R^2KM$ model's consistent performance across diverse datasets underscores its effectiveness. The average RMSE values further validate the effectiveness of the $R^2KM$ model. The existing models have the following average RMSE values: RVFL with $0.00060308$, RVFLwoDL with $0.0965694$, NF-RVFL with $0.00105033$, and RKM with $0.00100826$. In contrast, the proposed $R^2KM$ model achieves a significantly better average RMSE of $0.00005748$, surpassing the performance of the baseline models. To provide a more accurate assessment of model performance, each model should be ranked separately for each dataset rather than relying solely on average RMSE values. Table \ref{Average ACC and average rank for regression datasets} illustrates the average rankings of the proposed $R^2KM$ model and the baseline models. Models are ranked according to RMSE, where the model with the lowest RMSE is assigned the highest rank. The average ranks for the proposed $R^2KM$ model and the existing RVFL, RVFLwoDL, NF-RVFL, and RKM models are $1.44$, $2.11$, $4.11$, $3.11$, and $3.89$, respectively. The rankings demonstrate that the $R^2KM$ model outperforms the baseline models, underscoring its superior performance. To further assess the effectiveness of the proposed $R^2KM$ model, we conducted the Friedman test and subsequently performed the Nemenyi post hoc test. The Friedman test is used to statistically determine the significance of performance differences among the models. For \( p = 5 \) and \( N = 9 \), the obtained values were \( \chi^2_F = 21.7566 \) and \( F_F = 12.2199 \). The \( F_F \) statistic follows an $F$-distribution with degrees of freedom $(4, 32)$. The critical value for \( F_F \) with these degrees of freedom at a $5\%$ significance level is $2.6684$, from $F$-distribution table. Since the calculated \( F_F \) value surpasses $2.6684$, we reject the null hypothesis, indicating that significant differences exist among the models. Subsequently, the Nemenyi post-hoc test is performed to identify significant differences in the pairwise comparisons between the models. The value of C.D. is calculated as $2.0333$. This means that for the average rankings shown in Table \ref{Average ACC and average rank for regression datasets} to be considered statistically significant, there must be at least a $2.0333$ difference between them. The differences in average ranks between the proposed $R^2KM$ model and the existing models, including RVFL, RVFLwoDL, NF-RVFL, and RKM, are as follows: $0.67$, $3$, $1.67$, and $2.45$, respectively. The Nemenyi post-hoc test demonstrates that the proposed $R^2KM$ model is statistically superior to the baseline RVFLwoDL and RKM models. The $R^2KM$ model's lower ranking indicates enhanced generalization capabilities compared to the existing RVFL and NF-RVFL models. The high average RMSE and consistent performance across multiple statistical tests offer strong evidence that the proposed $R^2KM$ model surpasses the existing baseline models in terms of generalization.

\section{Conclusion}
\label{Conclusion}
In this paper, we propose a novel randomized based restricted kernel machine ($R^2KM$) to address the limitations of the RVFL network for hyperspectral image (HSI) classification. While RVFL improves stability by reducing the impact of random initialization in input-to-hidden layer weights, and enhances the model's capacity to capture complex non-linear relationships in data, $R^2KM$ further advances this by resolving the common issue of determining the optimal number of hidden nodes in RVFL networks. $R^2KM$ combines the computational efficiency of RVFL with the robust feature mapping capabilities of restricted kernel machines (RKM), offering a new approach for modeling intricate data interactions and non-linear patterns. Furthermore, the model utilizes a conjugate feature duality derived from the Fenchel-Young inequality to set an upper limit on the objective function, thereby improving its adaptability and scalability. This innovative duality, combined with $R^2KM$'s ability to represent kernel methods using both visible and hidden variables, significantly enhances the interpretability and robustness of kernel-based models. We assessed the performance of the proposed $R^2KM$ model using benchmark datasets from the UCI and KEEL repositories, comparing it against four state-of-the-art models for both classification and regression tasks. The results highlight the exceptional performance of the proposed $R^2KM$ model, which outperformed all baseline models, achieving an average accuracy improvement of up to $1.69\%$ over the second-best model. Furthermore, $R^2KM$ demonstrated outstanding effectiveness in regression tasks. Statistical analyses, including ranking, the Nemenyi post hoc test, the Friedman test, and the win-tie-loss sign test, confirm that our $R^2KM$ model significantly surpasses the baseline models in robustness and overall performance. Also, we carried out a series of experiments using four distinct hyperspectral image datasets to assess the effectiveness of the proposed model. The experimental results consistently showed that our model surpasses the performance of baseline approaches, achieving higher accuracy across all datasets. This improvement is especially evident in its ability to process the intricate spectral and spatial information inherent to hyperspectral images. By capturing both fine spectral variations and complex spatial structures, the model proves to be highly capable of addressing the unique challenges posed by hyperspectral image classification. In particular, hyperspectral images contain hundreds of narrow spectral bands, each providing detailed information about the materials in the scene. This makes classification difficult due to the high dimensionality and inter-band correlations. Traditional models often struggle with overfitting or missing important spectral and spatial features. Our proposed $R^2KM$ model, however, excels at extracting these features, balancing the trade-off between spectral resolution and spatial consistency. This approach enhances classification accuracy, particularly in scenarios where minor spectral variations are crucial for differentiating between classes. Future research could explore the development of adaptive methods that dynamically and efficiently adjust the parameters $\eta$ and $\lambda$ during the training process. This advancement would eliminate the need for manual tuning, enhancing the model's flexibility and ease of use. Additionally, applying the proposed model to other complex domains, such as time series forecasting or high-dimensional data, could open up new opportunities and showcase their versatility.

\section*{Acknowledgement}
This study receives support from the Science and Engineering Research Board (SERB) through the Mathematical Research Impact-Centric Support (MATRICS) scheme Grant No. MTR/2021/000787. The authors gratefully acknowledge the invaluable support provided by the Indian Institute of Technology Indore.
\bibliography{refs.bib}

\begin{thebibliography}{53}
\providecommand{\natexlab}[1]{#1}
\providecommand{\url}[1]{\texttt{#1}}
\expandafter\ifx\csname urlstyle\endcsname\relax
  \providecommand{\doi}[1]{doi: #1}\else
  \providecommand{\doi}{doi: \begingroup \urlstyle{rm}\Url}\fi

\bibitem[Cao et~al.(2024)Cao, Cao, Deng, Wu, Hou, and Vivone]{cao2024diffusion}
Zihan Cao, Shiqi Cao, Liang-Jian Deng, Xiao Wu, Junming Hou, and Gemine Vivone.
\newblock Diffusion model with disentangled modulations for sharpening multispectral and hyperspectral images.
\newblock \emph{Information Fusion}, 104:\penalty0 102158, 2024.

\bibitem[He et~al.(2017)He, Li, Liu, and Li]{he2017recent}
Lin He, Jun Li, Chenying Liu, and Shutao Li.
\newblock Recent advances on spectral--spatial hyperspectral image classification: An overview and new guidelines.
\newblock \emph{IEEE Transactions on Geoscience and Remote Sensing}, 56\penalty0 (3):\penalty0 1579--1597, 2017.

\bibitem[He et~al.(2021)He, Sun, Huang, Zhang, Zheng, and Jeon]{he2021tslrln}
Chengxun He, Le~Sun, Wei Huang, Jianwei Zhang, Yuhui Zheng, and Byeungwoo Jeon.
\newblock {TSLRLN}: Tensor subspace low-rank learning with non-local prior for hyperspectral image mixed denoising.
\newblock \emph{Signal Processing}, 184:\penalty0 108060, 2021.

\bibitem[Luo et~al.(2021)Luo, Zou, Liu, and Lin]{luo2021dimensionality}
Fulin Luo, Zehua Zou, Jiamin Liu, and Zhiping Lin.
\newblock Dimensionality reduction and classification of hyperspectral image via multistructure unified discriminative embedding.
\newblock \emph{IEEE Transactions on Geoscience and Remote Sensing}, 60:\penalty0 1--16, 2021.

\bibitem[Lv et~al.(2021)Lv, Li, Chen, Zhou, and Tao]{lv2021discriminant}
Meng Lv, Wei Li, Tianhong Chen, Jun Zhou, and Ran Tao.
\newblock Discriminant tensor-based manifold embedding for medical hyperspectral imagery.
\newblock \emph{IEEE Journal of Biomedical and Health Informatics}, 25\penalty0 (9):\penalty0 3517--3528, 2021.

\bibitem[Wang et~al.(2012)Wang, Zhang, Tong, and Sun]{wang2012spectral}
Jinnian Wang, Lifu Zhang, Qingxi Tong, and Xuejian Sun.
\newblock The spectral crust project—research on new mineral exploration technology.
\newblock In \emph{2012 4th Workshop on Hyperspectral Image and Signal Processing: Evolution in Remote Sensing (WHISPERS)}, pages 1--4. IEEE, 2012.

\bibitem[Su et~al.(2020)Su, Yu, Wu, and Du]{su2020random}
Hongjun Su, Yao Yu, Zhaoyue Wu, and Qian Du.
\newblock Random subspace-based k-nearest class collaborative representation for hyperspectral image classification.
\newblock \emph{IEEE Transactions on Geoscience and Remote Sensing}, 59\penalty0 (8):\penalty0 6840--6853, 2020.

\bibitem[Sheykhmousa et~al.(2020)Sheykhmousa, Mahdianpari, Ghanbari, Mohammadimanesh, Ghamisi, and Homayouni]{sheykhmousa2020support}
Mohammadreza Sheykhmousa, Masoud Mahdianpari, Hamid Ghanbari, Fariba Mohammadimanesh, Pedram Ghamisi, and Saeid Homayouni.
\newblock Support vector machine versus random forest for remote sensing image classification: A meta-analysis and systematic review.
\newblock \emph{IEEE Journal of Selected Topics in Applied Earth Observations and Remote Sensing}, 13:\penalty0 6308--6325, 2020.

\bibitem[Cong et~al.(2021)Cong, Zhang, Fang, Li, Zhao, and Kwong]{cong2021rrnet}
Runmin Cong, Yumo Zhang, Leyuan Fang, Jun Li, Yao Zhao, and Sam Kwong.
\newblock {RRN}et: Relational reasoning network with parallel multiscale attention for salient object detection in optical remote sensing images.
\newblock \emph{IEEE Transactions on Geoscience and Remote Sensing}, 60:\penalty0 1--11, 2021.

\bibitem[Ahmad et~al.(2021)Ahmad, Shabbir, Roy, Hong, Wu, Yao, Khan, Mazzara, Distefano, and Chanussot]{ahmad2021hyperspectral}
Muhammad Ahmad, Sidrah Shabbir, Swalpa~Kumar Roy, Danfeng Hong, Xin Wu, Jing Yao, Adil~Mehmood Khan, Manuel Mazzara, Salvatore Distefano, and Jocelyn Chanussot.
\newblock Hyperspectral image classification—traditional to deep models: A survey for future prospects.
\newblock \emph{IEEE Journal of Selected Topics in Applied Earth Observations and Remote Sensing}, 15:\penalty0 968--999, 2021.

\bibitem[Hao et~al.(2020)Hao, Li, Fang, and Kang]{hao2020multiscale}
Qiaobo Hao, Shutao Li, Leyuan Fang, and Xudong Kang.
\newblock Multiscale feature extraction with gaussian curvature filter for hyperspectral image classification.
\newblock In \emph{IGARSS 2020-2020 IEEE International Geoscience and Remote Sensing Symposium}, pages 80--83. IEEE, 2020.

\bibitem[Benediktsson et~al.(2005)Benediktsson, Palmason, and Sveinsson]{benediktsson2005classification}
J{\'o}n~Atli Benediktsson, J{\'o}n~Aevar Palmason, and Johannes~R Sveinsson.
\newblock Classification of hyperspectral data from urban areas based on extended morphological profiles.
\newblock \emph{IEEE Transactions on Geoscience and Remote Sensing}, 43\penalty0 (3):\penalty0 480--491, 2005.

\bibitem[Fauvel et~al.(2008)Fauvel, Benediktsson, Chanussot, and Sveinsson]{fauvel2008spectral}
Mathieu Fauvel, J{\'o}n~Atli Benediktsson, Jocelyn Chanussot, and Johannes~R Sveinsson.
\newblock Spectral and spatial classification of hyperspectral data using {SVM}s and morphological profiles.
\newblock \emph{IEEE Transactions on Geoscience and Remote Sensing}, 46\penalty0 (11):\penalty0 3804--3814, 2008.

\bibitem[Dalla~Mura et~al.(2010)Dalla~Mura, Villa, Benediktsson, Chanussot, and Bruzzone]{dalla2010classification}
Mauro Dalla~Mura, Alberto Villa, Jon~Atli Benediktsson, Jocelyn Chanussot, and Lorenzo Bruzzone.
\newblock Classification of hyperspectral images by using extended morphological attribute profiles and independent component analysis.
\newblock \emph{IEEE Geoscience and Remote Sensing Letters}, 8\penalty0 (3):\penalty0 542--546, 2010.

\bibitem[Sun et~al.(2020)Sun, Wu, Zhan, Liu, Wang, and Jeon]{sun2020weighted}
Le~Sun, Feiyang Wu, Tianming Zhan, Wei Liu, Jin Wang, and Byeungwoo Jeon.
\newblock Weighted nonlocal low-rank tensor decomposition method for sparse unmixing of hyperspectral images.
\newblock \emph{IEEE Journal of Selected Topics in Applied Earth Observations and Remote Sensing}, 13:\penalty0 1174--1188, 2020.

\bibitem[Quadir and Tanveer(2024)]{quadir2024multiview}
A~Quadir and M~Tanveer.
\newblock Multiview learning with twin parametric margin {SVM}.
\newblock \emph{Neural Networks}, 180:\penalty0 106598, 2024.

\bibitem[Quadir et~al.(2024{\natexlab{a}})Quadir, Akhtar, and Tanveer]{quadir2024enhancing}
Abdul Quadir, Mushir Akhtar, and M~Tanveer.
\newblock Enhancing {M}ultiview {S}ynergy: {R}obust {L}earning by {E}xploiting the {W}ave {L}oss {F}unction with {C}onsensus and {C}omplementarity {P}rinciples.
\newblock \emph{arXiv preprint arXiv:2408.06819}, 2024{\natexlab{a}}.

\bibitem[Houthuys et~al.(2018)Houthuys, Langone, and Suykens]{houthuys2018multi}
Lynn Houthuys, Rocco Langone, and Johan A~K Suykens.
\newblock Multi-view kernel spectral clustering.
\newblock \emph{Information Fusion}, 44:\penalty0 46--56, 2018.

\bibitem[Ye et~al.(2019)Ye, Li, Fu, Zhang, Yang, and Yang]{ye2019nonpeaked}
Qiaolin Ye, Zechao Li, Liyong Fu, Zhao Zhang, Wankou Yang, and Guowei Yang.
\newblock Nonpeaked discriminant analysis for data representation.
\newblock \emph{IEEE Transactions on Neural Networks and Learning Systems}, 30\penalty0 (12):\penalty0 3818--3832, 2019.

\bibitem[Patel and Patel(2023)]{patel2023comprehensive}
Usha Patel and Vibha Patel.
\newblock A comprehensive review: active learning for hyperspectral image classifications.
\newblock \emph{Earth Science Informatics}, 16\penalty0 (3):\penalty0 1975--1991, 2023.

\bibitem[Tanveer et~al.(2024, 10.1038/s44220-024-00237-x)Tanveer, Goel, Sharma, Malik, Beheshti, Del~Ser, Suganthan, and Lin]{tanveer2024ensemble}
M~Tanveer, T~Goel, R~Sharma, A~K Malik, I~Beheshti, J~Del~Ser, P~N Suganthan, and C~T Lin.
\newblock Ensemble deep learning for {A}lzheimer’s disease characterization and estimation.
\newblock \emph{Nature Mental Health}, pages 1--13, 2024, 10.1038/s44220-024-00237-x.

\bibitem[Tanveer et~al.(2024, 10.1109/TFUZZ.2024.3409412)Tanveer, Sajid, Akhtar, Quadir, Goel, Aimen, Mitra, Zhang, Lin, and Del~Ser]{tanveer2024fuzzy}
M~Tanveer, M~Sajid, M~Akhtar, A~Quadir, T~Goel, A~Aimen, S~Mitra, Y~D Zhang, C~T Lin, and J~Del~Ser.
\newblock Fuzzy {D}eep {L}earning for the {D}iagnosis of {A}lzheimer's {D}isease: {A}pproaches and {C}hallenges.
\newblock \emph{IEEE Transactions on Fuzzy Systems}, 2024, 10.1109/TFUZZ.2024.3409412.

\bibitem[Goel et~al.(2025)Goel, Verma, Tanveer, and Suganthan]{goel2025alzheimer}
Tripti Goel, Shradha Verma, M~Tanveer, and P~N Suganthan.
\newblock Alzheimer’s disease diagnosis from {MRI} and {SWI} fused image using self adaptive differential evolutionary rvfl classifier.
\newblock \emph{Information Fusion}, 118:\penalty0 102917, 2025.

\bibitem[Lu and Xu(2024)]{lu2024trnn}
Minrong Lu and Xuerong Xu.
\newblock {TRNN}: {A}n efficient time-series recurrent neural network for stock price prediction.
\newblock \emph{Information Sciences}, 657:\penalty0 119951, 2024.

\bibitem[Quadir et~al.(2024{\natexlab{b}})Quadir, Sajid, and Tanveer]{quadir2024multiviewDAN}
A~Quadir, M~Sajid, and M~Tanveer.
\newblock Multiview {R}andom {V}ector {F}unctional {L}ink {N}etwork for {P}redicting {DNA-B}inding {P}roteins.
\newblock \emph{arXiv preprint arXiv:2409.02588}, 2024{\natexlab{b}}.

\bibitem[Berman and Peherstorfer(2024)]{berman2024randomized}
Jules Berman and Benjamin Peherstorfer.
\newblock Randomized sparse neural galerkin schemes for solving evolution equations with deep networks.
\newblock \emph{Advances in Neural Information Processing Systems}, 36, 2024.

\bibitem[Sajid et~al.(2024{\natexlab{a}})Sajid, Malik, and Tanveer]{sajid2024intuitionistic}
M~Sajid, Ashwani~Kumar Malik, and M~Tanveer.
\newblock Intuitionistic fuzzy broad learning system: Enhancing robustness against noise and outliers.
\newblock \emph{IEEE Transactions on Fuzzy Systems}, 2024{\natexlab{a}}.

\bibitem[Suganthan(2018)]{suganthan2018non}
Ponnuthurai~Nagaratnam Suganthan.
\newblock On non-iterative learning algorithms with closed-form solution.
\newblock \emph{Applied Soft Computing}, 70:\penalty0 1078--1082, 2018.

\bibitem[Pao et~al.(1994)Pao, Park, and Sobajic]{pao1994learning}
Yoh~Han Pao, Gwang~Hoon Park, and Dejan~J Sobajic.
\newblock Learning and generalization characteristics of the random vector functional-link net.
\newblock \emph{Neurocomputing}, 6\penalty0 (2):\penalty0 163--180, 1994.

\bibitem[Zhang and Suganthan(2016)]{zhang2016comprehensive}
Le~Zhang and Ponnuthurai~N Suganthan.
\newblock A comprehensive evaluation of random vector functional link networks.
\newblock \emph{Information sciences}, 367:\penalty0 1094--1105, 2016.

\bibitem[Malik et~al.(2023)Malik, Gao, Ganaie, Tanveer, and Suganthan]{malik2023random}
Ashwani~Kumar Malik, Ruobin Gao, M~A Ganaie, M~Tanveer, and Ponnuthurai~Nagaratnam Suganthan.
\newblock Random vector functional link network: recent developments, applications, and future directions.
\newblock \emph{Applied Soft Computing}, 143:\penalty0 110377, 2023.

\bibitem[Zhang et~al.(2019)Zhang, Wu, Cai, Du, and Philip]{zhang2019unsupervised}
Yongshan Zhang, Jia Wu, Zhihua Cai, Bo~Du, and S~Yu Philip.
\newblock An unsupervised parameter learning model for {RVFL} neural network.
\newblock \emph{Neural Networks}, 112:\penalty0 85--97, 2019.

\bibitem[Zhang and Yang(2020)]{zhang2020new}
Peng-Bo Zhang and Zhi-Xin Yang.
\newblock A new learning paradigm for random vector functional-link network: {RVFL}+.
\newblock \emph{Neural Networks}, 122:\penalty0 94--105, 2020.

\bibitem[Li et~al.(2017)Li, Hua, Yang, and Guan]{li2017bayesian}
Junpeng Li, Changchun Hua, Yana Yang, and Xinping Guan.
\newblock Bayesian block structure sparse based {T-S} fuzzy modeling for dynamic prediction of hot metal silicon content in the blast furnace.
\newblock \emph{IEEE Transactions on Industrial Electronics}, 65\penalty0 (6):\penalty0 4933--4942, 2017.

\bibitem[Dai et~al.(2022)Dai, Ao, Zhou, Zhou, and Wang]{dai2022incremental}
Wei Dai, Yanshuang Ao, Linna Zhou, Ping Zhou, and Xuesong Wang.
\newblock Incremental learning paradigm with privileged information for random vector functional-link networks: {IRVFL}+.
\newblock \emph{Neural Computing and Applications}, 34\penalty0 (9):\penalty0 6847--6859, 2022.

\bibitem[Chakravorti and Satyanarayana(2020)]{chakravorti2020non}
Tatiana Chakravorti and Penke Satyanarayana.
\newblock Non linear system identification using kernel based exponentially extended random vector functional link network.
\newblock \emph{Applied Soft Computing}, 89:\penalty0 106117, 2020.

\bibitem[Suykens(2017)]{suykens2017deep}
Johan A~K Suykens.
\newblock Deep restricted kernel machines using conjugate feature duality.
\newblock \emph{Neural Computation}, 29\penalty0 (8):\penalty0 2123--2163, 2017.

\bibitem[Rockafellar(1974)]{rockafellar1974conjugate}
R~Tyrrell Rockafellar.
\newblock \emph{Conjugate Duality and Optimization}.
\newblock SIAM, 1974.

\bibitem[Suykens and Vandewalle(1999)]{suykens1999least}
Johan A~K Suykens and Joos Vandewalle.
\newblock Least squares support vector machine classifiers.
\newblock \emph{Neural processing letters}, 9:\penalty0 293--300, 1999.

\bibitem[Hinton et~al.(2006)Hinton, Osindero, and Teh]{hinton2006fast}
Geoffrey~E Hinton, Simon Osindero, and Yee-Whye Teh.
\newblock A fast learning algorithm for deep belief nets.
\newblock \emph{Neural Computation}, 18\penalty0 (7):\penalty0 1527--1554, 2006.

\bibitem[Pandey et~al.(2022)Pandey, Fanuel, Schreurs, and Suykens]{pandey2022disentangled}
Arun Pandey, Micha{\"e}l Fanuel, Joachim Schreurs, and Johan~AK Suykens.
\newblock Disentangled representation learning and generation with manifold optimization.
\newblock \emph{Neural Computation}, 34\penalty0 (10):\penalty0 2009--2036, 2022.

\bibitem[Houthuys and Suykens(2021)]{houthuys2021tensor}
Lynn Houthuys and Johan~AK Suykens.
\newblock Tensor-based restricted kernel machines for multi-view classification.
\newblock \emph{Information Fusion}, 68:\penalty0 54--66, 2021.

\bibitem[Tao et~al.(2024)Tao, Tonin, Patrinos, and Suykens]{tao2024tensor}
Qinghua Tao, Francesco Tonin, Panagiotis Patrinos, and Johan A~K Suykens.
\newblock Tensor-based multi-view spectral clustering via shared latent space.
\newblock \emph{Information Fusion}, 108:\penalty0 102405, 2024.

\bibitem[Quadir and Tanveer(2025)]{quadir2025trkm}
Abdul Quadir and M~Tanveer.
\newblock {TRKM}: {T}win restricted kernel machines for classification and regression.
\newblock \emph{arXiv preprint arXiv:2502.15759}, 2025.

\bibitem[Quadir et~al.(2025)Quadir, Sajid, and Tanveer]{quadir2025one}
A~Quadir, M~Sajid, and M~Tanveer.
\newblock One class restricted kernel machines.
\newblock \emph{arXiv preprint arXiv:2502.10443}, 2025.

\bibitem[Tonin et~al.(2021)Tonin, Patrinos, and Suykens]{tonin2021unsupervised}
Francesco Tonin, Panagiotis Patrinos, and Johan~AK Suykens.
\newblock Unsupervised learning of disentangled representations in deep restricted kernel machines with orthogonality constraints.
\newblock \emph{Neural Networks}, 142:\penalty0 661--679, 2021.

\bibitem[Dua and Graff(2017)]{dua2017uci}
Dheeru Dua and Casey Graff.
\newblock {UCI} machine learning repository.
\newblock \emph{Available: http://archive.ics.uci.edu/ml}, 2017.

\bibitem[Derrac et~al.(2015)Derrac, Garcia, Sanchez, and Herrera]{derrac2015keel}
J~Derrac, S~Garcia, L~Sanchez, and F~Herrera.
\newblock K{EEL} data-mining software tool: Data set repository, integration of algorithms and experimental analysis framework.
\newblock \emph{J. Mult. Valued Log. Soft Comput}, 17:\penalty0 255--287, 2015.

\bibitem[Bartlett and Mendelson(2002)]{bartlett2002rademacher}
Peter~L Bartlett and Shahar Mendelson.
\newblock Rademacher and gaussian complexities: Risk bounds and structural results.
\newblock \emph{Journal of Machine Learning Research}, 3\penalty0 (Nov):\penalty0 463--482, 2002.

\bibitem[Huang et~al.(2006)Huang, Zhu, and Siew]{huang2006extreme}
Guang-Bin Huang, Qin-Yu Zhu, and Chee-Kheong Siew.
\newblock Extreme learning machine: theory and applications.
\newblock \emph{Neurocomputing}, 70\penalty0 (1-3):\penalty0 489--501, 2006.

\bibitem[Sajid et~al.(2024{\natexlab{b}})Sajid, Malik, Tanveer, and Suganthan]{sajid2024neuro}
M.~Sajid, A.~K. Malik, M.~Tanveer, and Ponnuthurai~N. Suganthan.
\newblock Neuro-fuzzy random vector functional link neural network for classification and regression problems.
\newblock \emph{IEEE Transactions on Fuzzy Systems}, 32\penalty0 (5):\penalty0 2738--2749, 2024{\natexlab{b}}.
\newblock \doi{10.1109/TFUZZ.2024.3359652}.

\bibitem[Dem{\v{s}}ar(2006)]{demvsar2006statistical}
Janez Dem{\v{s}}ar.
\newblock Statistical comparisons of classifiers over multiple data sets.
\newblock \emph{The Journal of Machine Learning Research}, 7:\penalty0 1--30, 2006.

\bibitem[Friedman(1937)]{friedman1937use}
Milton Friedman.
\newblock The use of ranks to avoid the assumption of normality implicit in the analysis of variance.
\newblock \emph{Journal of the American Statistical Association}, 32\penalty0 (200):\penalty0 675--701, 1937.

\end{thebibliography}
\bibliographystyle{unsrtnat}
\end{document}